\newcommand{\ba}{\left[ \begin{array}}
\newcommand{\ea}{\\ \end{array} \right]}
\newcommand{\vg}{{\mathbf{g}}}
\newcommand{\vy}{{\mathbf{y}}}
\newcommand{\vS}{{\mathbf{S}}}
\newcommand{\cA}{{\mathcal{A}}}
\newcommand{\cC}{{\mathcal{C}}}
\newcommand{\cF}{{\mathcal{F}}}
\newcommand{\cN}{{\mathcal{N}}}
\newcommand{\cO}{{\mathcal{O}}}
\newcommand{\cU}{{\mathcal{U}}}
\newcommand{\cV}{{\mathcal{V}}}
\newcommand{\cX}{{\mathcal{X}}}
\newcommand{\EE}{\mathbb{E}}
\newcommand{\NN}{\mathbb{N}}
\newcommand{\PP}{\mathbb{P}}
\newcommand{\RR}{\mathbb{R}}
\newcommand{\ie}{\emph{i.e.}}
\newcommand{\eg}{\emph{e.g.}}
\newcommand{\prog}{\mathrm{prog}}
\newcommand{\vvvert}{{\vert\kern-0.25ex\vert\kern-0.25ex\vert}}
\newtheorem{theorem}{Theorem}
\newtheorem{assumption}{Assumption}
\newtheorem{remark}{Remark}
\newtheorem{lemma}{Lemma}
\newtheorem{definition}{Definition}
\newcommand{\BlackBox}{\rule{1.5ex}{1.5ex}} 
\newenvironment{proof}{\par\noindent{\bf Proof\ }}{\hfill\BlackBox\\[2mm]}
\definecolor{highlight_color}{rgb}{.1, 0,.8}
\definecolor{babyblue}{rgb}{0.54, 0.81, 0.94}
\definecolor{mayablue}{rgb}{0.45, 0.76, 0.98}
\definecolor{springgreen}{rgb}{0.0, 1.0, 0.5}
\definecolor{aquamarine}{rgb}{0.5, 1.0, 0.83}
\definecolor{periwinkle}{rgb}{0.68, 0.7, 1.0}
\definecolor{orchid}{rgb}{0.85, 0.44, 0.84}
\definecolor{limegreen}{rgb}{0.2, 0.8, 0.2}
\begin{document}

\title{Lower Bounds and Accelerated Algorithms in Distributed Stochastic Optimization with Communication Compression}

\author{%
Yutong He$^{1}$\thanks{Equal Contribution. Several preliminary results in this paper have been published in the conference paper \cite{Huang2022LowerBA}.},\hspace{2mm} Xinmeng Huang$^{2*}$, Yiming Chen$^{3}$, Wotao Yin$^4$, Kun Yuan$^{1}$\vspace{1mm}\\ 
    $^1$Peking  University\quad $^2$University of Pennsylvania \\
    $^3$MetaCarbon Inc.  \quad $^4$DAMO Academy, Alibaba Group \vspace{1mm}\\
{\small\texttt{\{yutonghe, kunyuan\}@pku.edu.cn \quad xinmengh@sas.upenn.edu}}\\{\small\texttt{ yiming@metacarbon.vip \quad  wotao.yin@alibaba-inc.com}} 
}

\date{}



\maketitle

\begin{abstract}
Communication compression is an essential strategy for alleviating communication overhead by reducing the volume of information exchanged between computing nodes in large-scale distributed stochastic optimization. Although numerous algorithms with convergence guarantees have been obtained, the optimal performance limit under communication compression remains unclear.

In this paper, we investigate the performance limit of distributed stochastic optimization algorithms employing communication compression. We focus on two main types of compressors, unbiased and contractive, and address  the best-possible convergence rates one can obtain with these compressors. We establish the lower bounds for the convergence rates of distributed stochastic optimization in six different settings, combining strongly-convex, generally-convex, or non-convex functions with unbiased or contractive compressor types. To bridge the gap between lower bounds and existing algorithms' rates, we propose NEOLITHIC, a nearly optimal algorithm with compression that achieves the established lower bounds up to logarithmic factors under mild conditions.  Extensive experimental results support our theoretical findings. 
This work provides insights into the theoretical limitations of existing compressors and motivates further research into fundamentally new compressor properties.

%

\end{abstract}

\section{Introduction}
In modern machine learning, distributed stochastic optimization plays a crucial role, as it involves multiple computing nodes processing a vast amount of data and model parameters. 
However, handling such a vast number of data samples and model parameters leads to significant communication overhead, which limits the scalability of distributed optimization systems. 
Communication compression, a technique employed in distributed stochastic optimization, aims to reduce the volume of information exchanged between nodes~\cite{Alistarh2017QSGDCS, Bernstein2018signSGDCO, Seide20141bitSG,Stich2018SparsifiedSW,Richtrik2021EF21AN,Philippenko2021PreservedCM}.
This is accomplished by transmitting compressed gradients, where gradient arrays are compressed at each node before they are shared, or by exchanging reduced-size model parameters.

Despite the emergence of numerous compression methods, their compression capacities are typically characterized by one of two properties: unbiased compressibility and contractive compressibility. These properties, irrespective of other specific characteristics of compression methods, are utilized in convergence analyses of distributed stochastic optimization algorithms that employ compression methods.
We refer the reader to~\cite{Beznosikov2020OnBC, Safaryan2020UncertaintyPF, Xu2020CompressedCF} for a summary of the two properties. Specifically, an \emph{unbiased compressor} $C$ randomly outputs $C(x)$ such that $\EE[C(x)]=x$ for any input vector $x$. In contrast, a \emph{contractive compressor} may yield a biased vector with a smaller variance. While faster theoretical convergence results~\cite{Beznosikov2020OnBC, Mishchenko2019DistributedLW,Horvath2019StochasticDL,Gorbunov2021MARINAFN} have been obtained for algorithms using unbiased compressors, contractive compressors can offer comparable and even superior empirical performance.

Communication compression results in information distortion in communication, which can cause an algorithm to take more iterations to convergence. Therefore,
when introducing an algorithm that incorporates communication compression, research papers typically present a performance upper bound, which serves as a guarantee that the algorithm will not exceed a specified number of iterations when applied to a particular class of input problems. Notable examples include quantized SGD~\cite{Alistarh2017QSGDCS,lu2020moniqua}, sparsified SGD~\cite{Stich2018SparsifiedSW,Wangni2018GradientSF}, and error compensation~\cite{Richtrik2021EF21AN,Karimireddy2019ErrorFF,Tang2019DoubleSqueezePS,Xie2020CSERCS}. However, the study of performance lower bounds remains largely unexplored. In contrast to upper bounds, lower bounds indicate the best-achievable performance of the technique when solving the worst-case optimization problem instances. 
In the context of distributed algorithms with communication compression, it is desirable for such studies to reveal the fundamental limit associated with using communication compression in a distributed algorithm. As all individual compression methods are classified as either an unbiased compressor or a contractive compressor~\cite{Beznosikov2020OnBC, Safaryan2020UncertaintyPF, Xu2020CompressedCF}, our lower-bound analyses focus on these two types of compressors. We aim to address two fundamental open questions:
\begin{itemize}
\item \textit{For a class of optimization problems (specified below) and a type of communication compressor, what is the convergence performance lower bound of the best-defending algorithm against the worst combination of problem instance and communication compressor?}

\item \textit{Is the lower bound tight? Does an existing algorithm achieve this bound? If not, can we develop new algorithms to attain it (up to logarithm factors)?}
\end{itemize}

\begin{table}[t]
\footnotesize
    \centering
    \caption{\small Lower and upper bounds for distributed algorithms 
    with unbiased compressors. Notations $\epsilon$ is the desired accuracy, $\omega$ is a  parameter associated with an unbiased compressor (see Assumption \ref{ass:unbiased}), and 
    $L,\mu, \Delta_f,\Delta_x,n,\sigma^2$ are defined in Section \ref{sec:prob}. In particular, $G$ is the Lipschitz constant of the objective functions, and notation $\tilde{\cO}$ hides logarithmic factors uncorrelated with the precision $\epsilon$. We use NC, GC, and SC as abbreviations for non-convex, generally-convex, and strongly-convex, respectively.}
    \begin{threeparttable}

    \begin{tabular}{cccc}
    \toprule
    \textbf{Method} & \textbf{NC} & \textbf{GC} & \textbf{SC}\\
    \midrule
\rowcolor{red!20}\textbf{L.B.} & $\Omega\left(\frac{L\Delta_f\sigma^2}{n\epsilon^2}+\frac{(1+\omega)L\Delta_f}{\epsilon}\right)$ & $\Omega\left(\frac{\Delta_x\sigma^2}{n\epsilon^2}+\frac{(1+\omega)\sqrt{L\Delta_x}}{\sqrt{\epsilon}}\right)$ & $\Omega\left(\frac{\sigma^2}{\mu n\epsilon}+(1+\omega)\sqrt{\frac{L}{\mu}}\ln\left(\frac{\mu\Delta_x}{\epsilon}\right)\right)$\\ \midrule
    EC-SGD \cite{Karimireddy2019ErrorFF}$^\Diamond$ & $\mathcal{O}\left(\frac{L\Delta_f\sigma^2}{\epsilon^2}+\frac{(1+\omega)L\Delta_f}{\epsilon}\right)$ & $\mathcal{O}\left(\frac{\Delta_x\sigma^2}{\epsilon^2}+\frac{(1+\omega)L\Delta_x}{\epsilon}\right)$ & $\tilde{\mathcal{O}}\left(\frac{\sigma^2}{\mu\epsilon}+\frac{(1+\omega)L}{\mu}\ln\left(\frac{1}{\epsilon}\right)\right)$\\
    Q-SGD \cite{Alistarh2017QSGDCS} & $\mathcal{O}\left(\frac{L\Delta_f((1+\omega)\sigma^2+\omega b^2)}{n\epsilon^2}\right)$\tnote{$\dagger$} & $\mathcal{O}\left(\frac{\Delta_x(\omega G^2+\sigma^2)}{n\epsilon^2}+\frac{L\Delta_x}{\epsilon}\right)$ & --- \\
    MEM-SGD \cite{Stich2018SparsifiedSW} & $\mathcal{O}\left(\frac{L\Delta_f\sigma^2}{n\epsilon^2}+\frac{(1+\omega)L\Delta_fG}{\epsilon^{3/2}}+\frac{L\Delta_f}{\epsilon}\right)$ & --- & \hspace{-3mm}$\mathcal{O}\left(\frac{G^2}{\mu\epsilon}+\frac{(1+\omega)\sqrt{L}G}{\mu\sqrt{\epsilon}}+\frac{(1+\omega)(\mu\Delta_x)^{1/3}}{\epsilon^{1/3}}\right)$\\
    D.S. \cite{Tang2019DoubleSqueezePS} & $\mathcal{O}\left(\frac{L\Delta_f\sigma^2}{n\epsilon^2}+\frac{(1+\omega)^2L\Delta_fG}{\epsilon^{3/2}}+\frac{L\Delta_f}{\epsilon}\right)$ & --- & ---\\
    CSER \cite{Xie2020CSERCS} & $\mathcal{O}\left(\frac{L\Delta_f\sigma^2}{n\epsilon^2}+\frac{(1+\omega)L\Delta_fG}{\epsilon^{3/2}}+\frac{L\Delta_f}{\epsilon}\right)$ & --- & ---\\
    EF21-SGD \cite{Fatkhullin2021EF21WB} & $\mathcal{O}\left(\frac{(1+\omega)^3L\Delta_f\sigma^2}{\epsilon^2}+\frac{(1+\omega)L\Delta_f}{\epsilon}\right)$ & --- & $\tilde{\mathcal{O}}\left(\left(\frac{(1+\omega)^3L\sigma^2}{\mu^2\epsilon}+\frac{(1+\omega)L}{\mu}\right)\ln\left(\frac{1}{\epsilon}\right)\right)$\\

    \rowcolor{red!20}\textbf{Ours} & $\tilde{\mathcal{O}}\left(\frac{L\Delta_f\sigma^2}{n\epsilon^2}+\frac{(1+\omega)L\Delta_f}{\epsilon}\right)$ & $\tilde{\mathcal{O}}\left(\frac{\Delta_x\sigma^2}{n\epsilon^2}+\frac{(1+\omega)\sqrt{L\Delta_x}}{\sqrt{\epsilon}}\right)$ & $\tilde{\mathcal{O}}\left(\frac{\sigma^2}{\mu n\epsilon}+(1+\omega)\sqrt{\frac{L}{\mu}}\ln\left(\frac{1}{\epsilon}\right)\right)$\\
    \bottomrule    
    \end{tabular}
    \begin{tablenotes}
    \item[$\Diamond$]The convergence analysis is under the single worker setting and cannot be extended to the distributed setting.
    \item[$\dagger$]This convergence rate is only achievable when $\epsilon=\mathcal{O}\left(\frac{(1+\omega)\sigma^2+\omega b^2}{n+\omega}\right)${\color{black}, where $b^2:=\sup_x\frac{1}{n}\sum_{i=1}^n\|\nabla f_i(x)-\nabla f(x)\|^2$ bounds gradient dissimilarity}.
    \end{tablenotes}
    \end{threeparttable}
    \label{tab:unbiased}
\end{table}
While identifying the best compressor is undoubtedly a valuable goal, our objective differs in that we are interested in the limits of existing compressor types. It is crucial to recognize that nearly all compressor performance analyses ultimately rely on one of the two properties: unbiasedness or contraction. This observation prompts a natural question: To enhance the convergence rate of distributed optimization with communication compression, should we continue utilizing these properties and focus on their more intelligent integration into distributed algorithms, or should we explore new compressor properties?

To address the question above, we must first determine the theoretical limits imposed by these two properties. If achievable performance remains significantly distant from the limit, it may indicate that current compressors have not been adequately utilized, suggesting that we should concentrate on devising clever integration of distributed algorithms and existing compressors. Conversely, if the limit is nearly reached (as demonstrated by NEOLITHIC), it becomes necessary to identify a new compressor property. This is not to say that all existing compressors are inherently flawed, but rather that their compression analyses have approached the limit. As a result, if one seeks to establish superior compressor performance beyond the lower bounds presented in this paper, it becomes imperative to uncover a fundamentally new compressor property.

As our goal is to uncover the performance boundaries of optimization algorithms with a type of compressor when confronted with the most challenging problem instance, we do not study the compression performance of the compression methods themselves~\cite{Safaryan2020UncertaintyPF}.

\begin{table}[t]
\footnotesize
    \centering
    \caption{\small Lower and upper bounds for distributed algorithms with contractive compressors. Notation $\delta$ is a parameter associated with a contractive compressor (see Assumption \ref{ass:contract}). The other notations are the same as in Table \ref{tab:unbiased}.}
    \begin{threeparttable}
    \begin{tabular}{cccc}
    \toprule
     \textbf{Method} & \textbf{NC} & \textbf{GC} & \textbf{SC}\\
    \midrule
    \rowcolor{blue!20}\textbf{L.B.} & $\Omega\left(\frac{L\Delta_f\sigma^2}{n\epsilon^2}+\frac{L\Delta_f}{\delta\epsilon}\right)$ & $\Omega\left(\frac{\Delta_x\sigma^2}{n\epsilon^2}+\frac{\sqrt{L\Delta_x}}{\delta\sqrt{\epsilon}}\right)$ & $\Omega\left(\frac{\sigma^2}{\mu n\epsilon}+\frac{1}{\delta}\sqrt{\frac{L}{\mu}}\ln\left(\frac{\mu\Delta_x}{\epsilon}\right)\right)$\\ \midrule
    EC-SGD \cite{Karimireddy2019ErrorFF}$^\diamond$ & $\mathcal{O}\left(\frac{L\Delta_f\sigma^2}{\epsilon^2}+\frac{L\Delta_f}{\delta\epsilon}\right)$ & $\mathcal{O}\left(\frac{\Delta_x\sigma^2}{\epsilon^2}+\frac{L\Delta_x}{\delta\epsilon}\right)$ & $\tilde{\mathcal{O}}\left(\frac{\sigma^2}{\mu\epsilon}+\frac{L}{\delta\mu}\ln\left(\frac{1}{\epsilon}\right)\right)$\\
      MEM-SGD \cite{Stich2018SparsifiedSW} & $\mathcal{O}\left(\frac{L\Delta_f\sigma^2}{n\epsilon^2}+\frac{L\Delta_fG}{\delta\epsilon^{3/2}}+\frac{L\Delta_f}{\epsilon}\right)$ & --- & $\mathcal{O}\left(\frac{G^2}{\mu\epsilon}+\frac{\sqrt{L}G}{\delta\mu\sqrt{\epsilon}}+\frac{\mu^{1/3}\Delta_x^{1/3}}{\delta\epsilon^{1/3}}\right)$\\
     D.S. \cite{Tang2019DoubleSqueezePS} & $\mathcal{O}\left(\frac{L\Delta_f\sigma^2}{n\epsilon^2}+\frac{L\Delta_fG}{\delta^2\epsilon^{3/2}}+\frac{L\Delta_f}{\epsilon}\right)$ & --- & ---\\
     CSER \cite{Xie2020CSERCS} & $\mathcal{O}\left(\frac{L\Delta_f\sigma^2}{n\epsilon^2}+\frac{L\Delta_fG}{\delta\epsilon^{3/2}}+\frac{L\Delta_f}{\epsilon}\right)$ & --- & ---\\
     EF21-SGD \cite{Fatkhullin2021EF21WB} & $\mathcal{O}\left(\frac{L\Delta_f\sigma^2}{\delta^3\epsilon^2}+\frac{L\Delta_f}{\delta\epsilon}\right)$ & --- & $\tilde{\mathcal{O}}\left(\left(\frac{L\sigma^2}{\delta^3\mu^2\epsilon}+\frac{L}{\delta\mu}\right)\ln\left(\frac{1}{\epsilon}\right)\right)$\\
     
     \rowcolor{blue!20}\textbf{Ours} & $\tilde{\mathcal{O}}\left(\frac{L\Delta_f\sigma^2}{n\epsilon^2}+\frac{L\Delta_f}{\delta\epsilon}\ln\left(\frac{1}{\epsilon}\right)\right)$ & $\tilde{\mathcal{O}}\left(\frac{\Delta_x\sigma^2}{n\epsilon^2}+\frac{\sqrt{L\Delta_x}}{\delta\sqrt{\epsilon}}\ln\left(\frac{1}{\epsilon}\right)\right)$ & $\tilde{\mathcal{O}}\left(\frac{\sigma^2}{\mu n\epsilon}+\frac{1}{\delta}\sqrt{\frac{L}{\mu}}\ln\left(\frac{1}{\epsilon}\right)\right)$\\
    \bottomrule    
    \end{tabular}
    \begin{tablenotes}
    \item[$\Diamond$]The convergence analysis is for the single-worker setting and cannot be extended to the distributed setting.
    \end{tablenotes}
    \end{threeparttable}
    \label{tab:contractive}
\end{table}

\subsection{Main Results}
This paper tackles the open questions mentioned earlier by presenting a series of lower bounds for distributed stochastic optimization and introducing a new unified algorithm that match these lower bounds up to logarithmic factors under mild conditions. Specifically, our contributions include:
\begin{itemize}
    \item We derive lower bounds for the convergence of distributed algorithms with communication compression in stochastic optimization, considering six different settings resulting from combining strongly-convex, generally-convex, or non-convex objective functions with unbiased or contractive compressor types. All these lower bounds are novel, and we observe a clear gap between them and the established complexities of existing algorithms.

    \item To address this gap, we propose \textbf{NEOLITHIC} (\underline{\textbf{Ne}}arly \underline{\textbf{O}}ptimal a\underline{\textbf{L}}gor\underline{\textbf{ith}}m w\underline{\textbf{i}}th  \underline{\textbf{C}}ompression). NEOLITHIC achieves the established lower bounds up to logarithmic factors under mild conditions, outperforming existing algorithms in the same setting. 
    Notably, NEOLITHIC is the first accelerated algorithm with convergence guarantees in the  setting of stochastic convex optimization.

    \item To support our analyses, we conduct comprehensive experiments. The results demonstrate that NEOLITHIC not only exhibits competitive convergence performance but also remains robust to data heterogeneity, gradient noises, and  choices of compressors.
    
\end{itemize}

We present the lower and upper bounds established in this paper, as well as the complexities of existing state-of-the-art distributed algorithms with \emph{unbiased} compressors, in Table \ref{tab:unbiased}. NEOLITHIC nearly achieves the lower bounds in all strongly-convex, generally-convex, and non-convex scenarios under additional mild assumptions. A similar superiority for contractive compressors can be found in Table \ref{tab:contractive}, except that NEOLITHIC is worse by a factor $\ln(1/\epsilon)$ than the derived lower bound in the generally-convex and non-convex scenarios.

This paper represents a significant advancement over our previous conference paper~\cite{Huang2022LowerBA}, which only studies the non-convex scenario. The novel results of this paper include new convergence lower bounds for convex objective functions and new NEOLITHIC variants employing Nesterov acceleration and multi-stage restarting strategies to attain lower bounds in different settings. To the best of our knowledge, this paper presents the first accelerated algorithm with convergence guarantees in the setting of stochastic convex optimization. Additionally, we generalize the original multi-step compression module to preserve unbiasedness when using unbiased compression, which is a critical factor for enhancing NEOLITHIC's performance with unbiased compressors. Finally, we provide supplementary experiments to further support our theories.

\subsection{Related Work}
\textbf{Distributed stochastic optimization.}
Distributed stochastic optimization is a prevalent technique in large-scale machine learning, where data is distributed across multiple worker nodes, and training is carried out through worker communication. 
However, the high cost of communication between workers can significantly impede the scalability of distributed stochastic algorithms. To address this issue, various communication techniques such as decentralized communication, 
lazy communication, and compressed communication, have been developed and proven to be highly effective. 

Decentralized communication focuses on determining who to communicate with during the optimization process. It allows each node to communicate with immediate neighbors, removing the need for global synchronization across all nodes that can incur significant bandwidth costs or high latency. Well-known decentralized algorithms include decentralized SGD \cite{nedic2009distributed,chen2012diffusion,Yuan2016OnTC,Yuan2016OnTC,Lian2017CanDA,koloskova2020unified,Yuan2022RevistOC,huang2022optimal}, D$^2$/Exact-Diffusion \cite{tang2018d,yuan2020influence,Yuan2021RemovingDH}, stochastic gradient tracking \cite{pu2021distributed, xin2020improved,koloskova2021improved,alghunaim2021unified}, and their momentum variants \cite{lin2021quasi,Yuan2021DecentLaMDM}. Lazy communication, on the other hand, focuses on determining when to communicate in optimization algorithms.  
It aims to save communication overhead by reducing communication frequency between workers. Lazy communication can be achieved by letting each worker either conduct a fixed number of multiple local updates before sending messages  \cite{yu2019linear,stich2019local,mishchenko2022proxskip}, or adaptively skip communications when necessary \cite{chen2018lag,liu2019communication}. Lazy communication is also widely used in federated learning \cite{mcmahan2017communication,karimireddy2020scaffold}. In contrast, this paper studies compressed communication, which investigates what to communicate within each iteration.

\vspace{1.5mm}
\noindent\textbf{Communication compression.}
There are two mainstream approaches to compression: quantization and sparsification. Quantization maps input vectors from a large set (\eg, $32$-bit numbers) to a smaller set of discrete values (\eg, $8$-bit numbers). Many quantization schemes, such as Sign-SGD \cite{Seide20141bitSG,Bernstein2018signSGDCO} which uses only 1 bit to represent each entry, are essentially unbiased operators with random noise. Generalized variants of Sign-SGD, like Q-SGD \cite{Alistarh2017QSGDCS}, TurnGrad \cite{Wen2017TernGradTG} and natural compression \cite{Horvath2019NaturalCF}, compress each entry with more flexible bits to enable a trade-off between compression ratio and precision. 

On the other hand, sparsification can be viewed as a biased but contractive operator. One popular approach to sparsification is to randomly drop some entries to achieve a sparse vector, as suggested by \cite{Wangni2018GradientSF}. Another approach, proposed by \cite{Stich2018SparsifiedSW}, is to transmit a subset of the largest elements in the model or gradient. The theoretical analyses of contractive compressors often make assumptions such as bounded gradients \cite{Zhao2019GlobalMC,Karimireddy2019ErrorFF} or quadratic loss functions \cite{Wu2018ErrorCQ}. Further discussions on both unbiased and biased compressors can be found in \cite{Beznosikov2020OnBC,Safaryan2020UncertaintyPF,Richtarik20223PCTP}. Communication compression can also be combined with other communication-saving techniques, such as decentralization \cite{Liu2021LinearCD,Zhao2022BEERFO} and lazy communication \cite{haddadpour2021federated}.

\vspace{1.5mm}
\noindent\textbf{Error compensation.} 
The technique of error compensation (feedback) is proposed in \cite{Seide20141bitSG} to address the compression error of 1-bit quantization. In \cite{Wu2018ErrorCQ}, the authors analyze stochastic gradient descent with error-compensated quantization for quadratic problems and provide convergence guarantees. Error compensation is also shown to be effective in reducing sparsification-induced error by \cite{Stich2018SparsifiedSW}. The convergence rate of error-compensated SGD in the non-convex scenario is studied in \cite{Alistarh2018TheCO}. Recently, \cite{Richtrik2021EF21AN} proposes a novel error feedback scheme, EF21, that compresses only the increment of local gradients and enjoys better theoretical guarantees in the deterministic regime. Additionally, \cite{Fatkhullin2021EF21WB} proposes an extension of EF21 that accommodates stochastic gradients without a linear speedup in the number of workers.

\vspace{1.5mm}
\noindent\textbf{Lower bounds.}
Lower bounds in optimization set a limit for the performance of a single or a group of algorithms. Prior research on lower bounds has established numerous lower bounds for optimization algorithms, particularly in convex scenarios \cite{Agarwal2015ALB,Diakonikolas2019LowerBF,Arjevani2015CommunicationCO,nesterov2003introductory,Balkanski2018ParallelizationDN,AllenZhu2018HowTM,Foster2019TheCO}. In non-convex scenarios, \cite{carmon2020lower,Carmon2021LowerBF} introduce a zero-chain model and provide a tight bound for first-order methods. \cite{Zhou2019LowerBF,Arjevani2019LowerBF} subsequently extend the approach to finite sum and stochastic problems. Recently, \cite{Lu2021OptimalCI,Yuan2022RevistOC,huang2022optimal} investigate the lower bound for decentralized stochastic optimization. In the field of  distributed stochastic optimization with communication compression, \cite{Philippenko2020BidirectionalCI} provides an algorithm-specific lower bound for strongly convex functions. 
However, prior to our work, no research had studied the algorithm-agnostic lower bounds.

\vspace{1.5mm}
\noindent\textbf{Accelerated algorithms with communication compression.} 
Nesterov acceleration \cite{nesterov1983method, nesterov2003introductory} is a crucial technique for accelerating algorithms to achieve optimal convergence rates in deterministic and convex optimization. However, the study of accelerated algorithms with communication compression is limited to a few works, such as \cite{li2020acceleration, li2021canita, qian2021error}. For example, ADIANA proposed in \cite{li2020acceleration} achieves a faster convergence rate in the strongly-convex scenario, while CANITA proposed in \cite{li2021canita} accelerates distributed communication compression in the generally-convex scenario. Both approaches are restricted to unbiased compressors and deterministic optimization. Furthermore, their analysis relies on the mutual independence of all worker-associated compressors, which may not hold in practical applications. Moreover, \cite{qian2021error} integrates Nesterov acceleration and variance reduction with error compensation to accelerate distributed finite-sum problems using contractive compression. However, this technique employs the special structure of the finite-sum problem and cannot be easily extended to the stochastic online setting. In this paper, we propose NEOLITHIC, the first accelerated algorithm for the stochastic online setting in both strongly-convex and generally-convex scenarios. Furthermore, we show that NEOLITHIC can nearly attain the lower bounds in these settings.

\section{Problem and Assumptions} \label{sec:prob}

In this section, we introduce the problem formulation and assumptions used throughout the paper.  Consider the following distributed stochastic optimization problem
\begin{equation}\label{eqn:prob}
	\min _{x \in \mathbb{R}^{d}}\quad f(x)=\frac{1}{n} \sum_{i=1}^{n} f_{i}(x) \quad\text{with}\quad f_{i}(x)=\mathbb{E}_{\xi_{i} \sim D_{i}} [F(x ; \xi_{i})],
\end{equation}
where the global objective function $f(x)$ is decomposed into $n$ local objective functions $\{f_i(x)\}_{i=1}^n$, and each local $f_i(x)$ is maintained by worker node $i$. Random variable $\xi_i$ represents the local data sample, and it follows a local distribution $D_i$. Since $D_i$ is typically unknown in advance, each node $i$ can only access its  stochastic gradient $\nabla F(x; \xi_i)$ per iteration rather than the true local gradient $\nabla f_i(x)$. In practice, the local data distribution $D_i$ within each node is generally different, and hence, $f_i(x) \neq f_j(x)$ holds for any nodes $i$ and $j$. Next, we introduce the setup under which we study the convergence rate.

\vspace{1mm}
\noindent \textbf{Notations.} In a variable $x_i^{(k)}$, the subscript $i$ indicates the node index and superscript $k$ indicates the iteration index. We let $\|\cdot\|$ denote the $\ell_2$ norm of vectors throughout the paper.

\subsection{Problem Setup}

\subsubsection{Function Class}
We let function class $\mathcal{F}^{\Delta_f}_L$ denote the set of non-convex and smooth functions satisfying Assumption \ref{asp:nonconvex}, and $\cF_{L,\mu}^{\Delta_x}$ ($0\leq \mu\leq L$) denote the set of convex and smooth functions satisfying Assumption  \ref{asp:convex}. Note that when $\mu>0$, $\cF_{L,\mu}^{\Delta_x}$ indicates strongly-convex functions and when $\mu=0$, $\cF_{L,\mu}^{\Delta_x}$ indicates generally-convex functions.

\begin{assumption}[\sc Non-convex and smooth function]\label{asp:nonconvex}
	We assume each $f_i(x)$ in $\cF_L^{\Delta_f}$ is $L$-smooth, \ie,
	\begin{align}\label{ass-eq-L-smooth}
	\left\|\nabla f_i(x)-\nabla f_i(y)\right\|\leq L\|x-y\|, \quad \forall x, y \in \RR^d, \quad \forall\, i\in \{1,\cdots, n\}, 
	\end{align}
	and $f(x^{(0)})-\inf_{x\in\mathbb{R}^d}f(x)\le\Delta_f$.
\end{assumption}
\begin{assumption}[\sc Convex and smooth function] \label{asp:convex}
	We assume each $f_{i}(x)$ in $\cF_{L, \mu}^{\Delta_x}$ is $L$-smooth, {\ie}, $f_i(x)$ satisfies \eqref{ass-eq-L-smooth}, and meanwhile it is  $\mu$-strongly convex, \ie, there exists a constant $\mu \ge 0$ such that 
	\begin{align}\label{ass-eq-mu-convex}
		f_i(y)\geq f_i(x)+\langle \nabla f_i(x), y-x\rangle +\frac{\mu}{2}\|y-x\|^2, \quad \forall x, y \in \RR^d, \quad \forall\, i\in  \{1,\cdots, n\}, 
	\end{align}
and $\|x^{(0)}-x^\star\|^2\leq \Delta_x$ where $x^\star$ is one of the global minimizers of $f(x)=\frac{1}{n}\sum_{i=1}^n f_i(x)$. When $\mu=0$, $f_i(x)$ reduces to a generally-convex function.
\end{assumption}

Typically, we assume that $\Delta_f$ and $\Delta_x$ are finite, which implies the lower bounded properties of function class $\cF_L^{\Delta_f}$ and $\cF_{L,\mu}^{\Delta_x}$. In the following part, we'll use notation $f^\star:=\min_xf(x)$, $f_i^\star:=\min_xf_i(x)$. 

\subsubsection{Gradient Oracle Class}
We assume each worker $i$ has  access to its local gradient $\nabla f_i(x)$ via a stochastic gradient oracle $O_i(x;\zeta_i)$ subject to independent random variables $\zeta_i$, \emph{e.g.}, the mini-batch sampling $\zeta_i\triangleq\xi_i\sim D_i$. We further assume that $O_i(x,\zeta_i)$ is an  unbiased estimator of the full-batch gradient $\nabla f_i(x)$ with a bounded variance. Formally, we let the {stochastic gradient oracle class $O_{\sigma^2}$} denote the set of all oracles $O_i$ satisfying Assumption \ref{asp:gd-noise}.
\begin{assumption}[\sc Gradient stochasticity]\label{asp:gd-noise}
	The stochastic gradient oracles $\{O_i:1\leq i\leq n\}$ satisfy
	\begin{align*}
		\EE_{\zeta_i}[O_i(x;\zeta_i)]=\nabla f_i(x)\quad \text{ and }\quad \EE_{\zeta_i}[\|O_i(x;\zeta_i)-\nabla f_i(x)\|^2]\leq \sigma^2,\quad \forall\, x\in\RR^d\text{ and }i\in \{1,\dots,n\}.
	\end{align*}
\end{assumption}

\subsubsection{Compressor Class}
The two  widely-studied classes of compressors  in literature are the $\omega$-\emph{unbiased} compressor, described by Assumption \ref{ass:unbiased}, \emph{e.g.}, the random quantization operator  \cite{Alistarh2017QSGDCS,Horvath2019NaturalCF}, and  the $\delta$-\emph{contractive} compressor, described by Assumption \ref{ass:contract}, \emph{e.g.},
the rand-$k$
\cite{qian2021error}
operator and top-$k$ operator \cite{Stich2018SparsifiedSW,qian2021error}. 
\begin{assumption}[\sc Unbiased compressor]\label{ass:unbiased}
	We assume the (possibly random) compression operator
	$C: \RR^d\rightarrow \RR^d$ satisfies
	\begin{equation*}
		\EE[C(x)]=x,\quad \EE[\|C(x)-x\|^2]\leq \omega\|x\|^2,\quad \forall\,x\in\RR^d
	\end{equation*}
	for constant $\omega \ge 0$, where the expectation is taken over the randomness of the compression operator $C$.
\end{assumption}
\begin{assumption}[\sc Contractive compressor]\label{ass:contract}
	We assume the (possibly random) compression operator
	$C: \RR^d\rightarrow \RR^d$ satisfies
	\begin{equation*}
		\EE[\|C(x)-x\|^2]\leq (1-\delta)\|x\|^2,\quad \forall\,x\in\RR^d
	\end{equation*}
	for constant $\delta \in(0,1]$, where the expectation is taken over the randomness of the compression operator $C$.
\end{assumption}
We let $\cU_{\omega}$ and $\cC_\delta$ denote
the set of all $\omega$-unbiased compressors and $\delta$-contractive compressors satisfying Assumptions \ref{ass:unbiased} and \ref{ass:contract}, respectively. Note that the identity operator $I$ satisfies $I \in \cU_\omega$ for all $\omega \ge 0$ and $I \in \cC_\delta$ for all $\delta \in (0,1]$. 
Generally speaking, an $\omega$-unbiased compressor is not necessarily contractive when $\omega$ is larger than $1$. However, since $C\in\cU_\omega$ implies $(1+\omega)^{-1}C\in \cC_{(1+\omega)^{-1}}$, the scaled unbiased compressor is contractive though the converse may not hold. For this reason, the class of contractive compressors is strictly richer since it contains all unbiased compressors through scaling.

\subsubsection{Algorithm Class}

We consider a centralized and synchronous algorithm $A$ in which first, every worker  is allowed to communicate only directly with a central server but not between one another; second, all iterations are synchronized, meaning that all workers start each of their iterations simultaneously; further, we assume that output $\hat{x}^{(t)}$ by the server---after $k$ iterations---can be any linear combination of all previous messages received by the server. 

We further require algorithms $A$ to satisfy the so-called ``zero-respecting'' property, which appears in \cite{carmon2020lower,Carmon2021LowerBF,Lu2021OptimalCI} (see formal definition in Appendix \ref{app:lower}). Intuitively, this property implies that the number of non-zero entries of the local parameters in a worker can be increased only by conducting local stochastic gradient queries or synchronizing with the server. The zero-respecting property holds with all algorithms in Tables \ref{tab:unbiased} and \ref{tab:contractive} and most first-order methods based on SGD \cite{Nesterov1983AMF,Kingma2015AdamAM,Huang2021Improved,Zeiler2012ADADELTAAA}. In addition to these properties, algorithm $A$ has to admit communication compression on all worker nodes, \ie, workers can only send compressed messages in communication. Specifically, we endow each worker $i \in \{1,\cdots, n\}$ with a compressor $C_i$. 
If $C_i = I$ for some $i\in\{1,\cdots, n\}$, then worker $i$ conducts lossless communication. Formally, we have the following definition for the  algorithms with communication compression that we consider throughout this paper.

\begin{definition}[\sc Algorithm class]\label{def:unidirect-alg}
	Given compressors $\{C_1,\cdots, C_n\}$, we let  $\cA_{\{C_i\}_{i=1}^n}$ denote the set of all centralized, synchronous, zero-respecting algorithms admitting  compression in which compressor $C_i$, $\forall\,1\leq i\leq n$, is applied to vectors sent from worker $i$ to the server. 
	
\end{definition}

\subsection{Complexity Metric}

With all interested classes introduced above, we are ready to define the complexity metric that we use for convergence analysis. Given a set of local functions $\{f_i\}_{i=1}^n\in\cF_L^{\Delta_f}$ or $\{f_i\}_{i=1}^n\in\cF_{L,\mu}^{\Delta_x}$,  
a set of stochastic gradient oracles $\{O_i\}_{i=1}^n\subseteq\cO_{\sigma^2}$,
a set of compressors $\{C_i\}_{i=1}^n\in \cU_{\omega}$ or $\{C_i\}_{i=1}^n\in\cC_{\delta}$, and an algorithm $A\in\cA_{\{C_i\}_{i=1}^n}$, we let $\hat{x}^{(t)}_{A}$ denote the output of algorithm $A$ after $t$ 
iterations. For convex functions $\{f_i\}_{i=1}^n\in\cF_{L,\mu}^{\Delta_x}$, the iteration complexity of $A$ solving $f(x)=\frac{1}{n}\sum_{i=1}^n f_i(x)$ under $\{(f_i,O_i,C_i)\}_{i=1}^n$ is defined as 
\begin{equation}\label{eqn:measure}
	T_\epsilon (A,\{(f_i,O_i,C_i)\}_{i=1}^n) =\min\left\{t\in \NN: \EE[f(\hat{x}^{(t)}_A)]-\min_x f(x)\leq \epsilon\right\},
\end{equation}
\ie, the smallest number of iterations required by $A$ to find an $\epsilon$-approximate optimum of $f(x)$ in expectation.
For non-convex functions $\{f_i\}_{i=1}^n\in\cF_L^{\Delta_f}$, it is generally impossible to find the global optimum. As a result, 
we define the iteration complexity of $A$ solving $f(x)=\frac{1}{n}\sum_{i=1}^n f_i(x)$ under $\{f_i, O_i, C_i)\}_{i=1}^n$ as 
\begin{equation}\label{eqn:measure-nonconvex}
	T_\epsilon(A,\{(f_i,O_i,C_i)\}_{i=1}^n)=\min\left\{t\in \NN: \EE[\|\nabla f(\hat{x}^{(t)}_A)\|^2]\le \epsilon\right\},
\end{equation}
\ie, the smallest number of iterations required by $A$ to find an $\epsilon$-stationary point of $f(x)$ in expectation. 

\section{Lower Bounds}\label{sec:lower-bound}
With all interested classes introduced above, we are ready to derive the lower bounds of the iteration complexities in distributed stochastic optimization with different convexities of objectives and properties of compressors. 

\subsection{Unbiased Compressor}
Our first result is for algorithms that admit $\omega$-unbiased compressors.

\begin{theorem}[\sc Unbiased compressor]\label{thm:lower-bounds}
 For any $L\ge\mu\ge0$, $n\ge2$, $\omega\ge0$, and $\sigma\ge0$,  the following results hold (proof is in Appendix \ref{app:lower-bounds}).
\begin{itemize}
\item \textbf{Strongly-convex: }For any $\Delta_x>0$, there exists a constant $c_\kappa$ only depends on $\kappa\triangleq L/\mu$, a set of local  functions $\{f_i\}_{i=1}^n\subseteq\cF_{L,\mu}^{\Delta_x}$, stochastic gradient oracles $\{O_i\}_{i=1}^n\subseteq\cO_{\sigma^2}$, unbiased compressors $\{C_i\}_{i=1}^n\subseteq\cU_\omega$, such that the output $\hat{x}$ of any $A\in\cA_{\{C_i\}_{i=1}^n}$ starting from $x^{(0)}$ requires
    \begin{equation}\label{eq-lower-bound-unbiased-sc}
        T_\epsilon(A,\{(f_i,O_i,C_i)\}_{i=1}^n)=\Omega\left(\frac{\sigma^2}{\mu n\epsilon} + (1+\omega)\sqrt{\frac{L}{\mu}}\ln\left(\frac{\mu\Delta_x}{\epsilon}\right)\right)
    \end{equation}
    iterations to reach  $\EE[f(\hat{x})]-f^\star\le\epsilon$ for any $0<\epsilon\leq c_\kappa L \Delta_x$.
    
    \item \textbf{Generally-convex: }For any $\Delta_x>0$, there exists a constant $c=\Theta(1)$, a set of local functions $\{f_i\}_{i=1}^n\subseteq\cF_{L,0}^{\Delta_x}$, stochastic gradient oracles $\{O_i\}_{i=1}^n\subseteq\cO_{\sigma^2}$, unbiased compressors $\{C_i\}_{i=1}^n\subseteq\cU_\omega$, such that the output $\hat{x}$ of any $A\in\cA_{\{C_i\}_{i=1}^n}$ starting from $x^{(0)}$ requires
    \begin{equation}\label{eq-lower-bound-unbiased-gc}
        T_\epsilon(A,\{(f_i,O_i,C_i)\}_{i=1}^n)=\Omega\left(\frac{\Delta_x\sigma^2}{n\epsilon^2} + (1+\omega)\left(\frac{L\Delta_x}{\epsilon}\right)^{\frac{1}{2}}\right)
    \end{equation}
    iterations to reach  $\EE[f(\hat{x})]-f^\star\le\epsilon$ for any $0<\epsilon\leq c\mu \Delta_x$.
    \item \textbf{Non-convex: } For any $\Delta_f>0$, there exists a constant $c=\Theta(1)$, a set of local functions $\{f_i\}_{i=1}^n\subseteq\cF_L^{\Delta_f}$, stochastic gradient oracles $\{O_i\}_{i=1}^n\subseteq\cO_{\sigma^2}$, unbiased compressors $\{C_i\}_{i=1}^n\subseteq\cU_\omega$, such that the output $\hat{x}$ of any $A\in\cA_{\{C_i\}_{i=1}^n}$ starting from $x^{(0)}$ requires
    \begin{equation}\label{eq-lower-bound-unbiased-nc}
        T_\epsilon(A,\{(f_i,O_i,C_i)\}_{i=1}^n)=\Omega\left(\frac{\Delta_fL\sigma^2}{n\epsilon^2} + \frac{(1+\omega)\Delta_f L}{\epsilon}\right)
    \end{equation}
    iterations to reach  $\EE[\|\nabla f(\hat{x})\|^2]\le\epsilon$ for any $0<\epsilon\leq cL \Delta_f$.
\end{itemize}
\end{theorem}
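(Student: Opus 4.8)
The plan is to split each of the three bounds into its \emph{statistical term} (the one carrying $\sigma^2$, which is insensitive to the compressor) and its \emph{compression term} (the one carrying the $1+\omega$ factor), establish the two separately, and combine them: for any fixed values of the parameters one term is within a factor $2$ of the sum, so whichever of the two hard instances corresponds to the larger term already forces $\Omega(\text{sum})$ iterations; alternatively one places the two constructions on disjoint coordinate blocks, adds the functions, splits $\Delta_f$ (resp.\ $\Delta_x$) between them, and uses a product compressor that is the identity on the first block and the compressor below on the second (this product still lies in $\cU_\omega$).

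For the statistical term I would take every $C_i$ to be the identity, which is admissible since $I\in\cU_\omega$ for all $\omega\ge0$. The problem then reduces to ordinary distributed stochastic optimization in which the $n$ workers jointly draw $n$ fresh samples per iteration, i.e.\ effective batch size $n$ and effective noise level $\sigma^2/n$. On a scaled quadratic (or the standard $L$-smooth non-convex surrogate) with Gaussian gradient noise, the classical information-theoretic lower bounds via two-point / Le Cam / Fano arguments (as in \cite{nesterov2003introductory,Arjevani2019LowerBF,carmon2020lower}) yield $\Omega(\sigma^2/(\mu n\epsilon))$, $\Omega(\Delta_x\sigma^2/(n\epsilon^2))$, and $\Omega(\Delta_f L\sigma^2/(n\epsilon^2))$ in the strongly-convex, generally-convex, and non-convex cases respectively, on the stated ranges of $\epsilon$; I would cite these rather than reprove them.

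The heart of the matter is the compression term, where I take a \emph{deterministic} oracle ($\sigma=0$ is permitted, and a deterministic oracle has variance $0\le\sigma^2$) and build a two-worker ``zero-chain'' hard instance. Fix a chain length $d$ and use the standard worst-case smooth function on $\RR^d$ for the class in question---the chain of \cite{carmon2020lower} in the non-convex case, Nesterov's worst function \cite{nesterov2003introductory} in the (generally- and strongly-) convex cases---but distribute the couplings so that the term linking coordinates $k$ and $k+1$ lives in $f_1$ for odd $k$ and in $f_2$ for even $k$ (remaining $f_i$ being harmless shifts). Since the instance is a zero-chain and the algorithms are zero-respecting, the progress $\mathrm{prog}(x)=\max\{j:x_j\neq0\}$ of any local iterate can cross a given threshold only after the worker owning the relevant coupling has received the preceding coordinate from the other worker, hence only after a successful worker$\to$server compressed transmission. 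I then take each $C_i$ to be the Bernoulli compressor $C_i(x)=(1+\omega)x$ with probability $p=1/(1+\omega)$ and $C_i(x)=0$ otherwise, which one checks lies in $\cU_\omega$; a transmission is informative only on the probability-$p$ event that it is not zeroed. Writing $N_T$ for the number of informative worker$\to$server transmissions in the first $T$ iterations, $\EE[N_T]\le 2T/(1+\omega)$, and since linear combinations create no coordinates outside the union of supports, $\mathrm{prog}(\hat x^{(T)})\le N_T+O(1)$. A Markov inequality then shows that if $T< c(1+\omega)d$ for a small absolute $c$, then $\mathrm{prog}(\hat x^{(T)})<d$ with probability $\ge1/2$, in which case the zero-chain bound on the untouched tail forces $f(\hat x^{(T)})-f^\star$ (resp.\ $\|\nabla f(\hat x^{(T)})\|^2$) above $2\epsilon$; taking expectations contradicts $\epsilon$-accuracy, so $T_\epsilon=\Omega((1+\omega)d)$. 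Calibrating $d$ and the scaling of the hard function to $\epsilon$---$d=\Theta(L\Delta_f/\epsilon)$ (non-convex), $d=\Theta(\sqrt{L\Delta_x/\epsilon})$ (generally-convex), $d=\Theta(\sqrt{L/\mu}\,\ln(\mu\Delta_x/\epsilon))$ (strongly-convex)---and matching the $\Delta$-normalizations and the $L,\mu$ constants by the usual rescaling reproduces the three stated compression terms.

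The principal obstacle is making the step ``progress advances only through an informative compressed transmission'' rigorous against an \emph{adaptive} algorithm: when and what each worker sends depends on the whole history, including past compressor coin flips, so the argument must be conditioned on the natural filtration---verifying that each new transmission is zeroed with probability exactly $1-p$ given the past---and must carefully propagate the zero-respecting property through the split chain to control the support of every local iterate, of the server's linear aggregates, and of the output, while accounting for the lossless server$\to$worker direction. The secondary delicate point, though entirely standard, is exhibiting the convex zero-chains with simultaneously the prescribed $L$, the prescribed $\mu$, and the $\Delta_x$ bound while retaining the $d^{-2}$-type (resp.\ $(1-c/\sqrt{\kappa})^{d}$-type) suboptimality decay that underlies the choice of $d$.
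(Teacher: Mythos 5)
Your overall architecture coincides with the paper's: two separate hard instances (one for the $\sigma^2$ term with identity compressors, one for the compression term with a deterministic oracle), a zero-chain function split by parity of the coupling index, an unbiased compressor that zeroes its input with probability $\omega/(1+\omega)$, a probabilistic bound on the number of informative transmissions (the paper uses a Chernoff-type estimate, Lemma \ref{lem:small-prob}, where Markov would also do), and the standard calibration of the chain length $d$ to $\epsilon$. The statistical terms are indeed cited/adapted from prior work in the paper as well.

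There is, however, a genuine gap in your compression-term construction, located exactly at the interaction between \emph{how many workers carry the chain} and \emph{whether the compressors' randomness is shared}. You place the odd couplings in $f_1$ and the even couplings in $f_2$ and make the remaining $f_i$ "harmless shifts." Since the objective is the average $f=\frac1n\sum_i f_i$ and each $f_i$ individually must be $L$-smooth (and $\mu$-strongly convex), carrying the chain on only two workers dilutes the chain's curvature in $f$ by a factor of $\Theta(n)$: after renormalizing so that $f\in\cF_{L,\mu}^{\Delta_x}$ (resp.\ $\cF_L^{\Delta_f}$), your construction yields only $\Omega\bigl((1+\omega)\sqrt{L/(n\mu)}\ln(\mu\Delta_x/\epsilon)\bigr)$, $\Omega\bigl((1+\omega)(L\Delta_x/(n\epsilon))^{1/2}\bigr)$, and $\Omega\bigl((1+\omega)L\Delta_f/(n\epsilon)\bigr)$ --- each short of the stated $n$-free compression terms. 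The natural repair is to replicate the odd-coupling function on workers $1,\dots,n/2$ and the even-coupling function on workers $n/2+1,\dots,n$ (as the paper does), but then your independent per-worker Bernoulli coins break the progress argument: all $n/2$ workers in the active group hold the frontier coordinate simultaneously, so the probability that \emph{at least one} of them gets an informative transmission in a round is $1-(\omega/(1+\omega))^{n/2}$, which tends to $1$ for large $n$ and destroys the $(1+\omega)$ factor. The paper's resolution is to endow all workers with compressors driven by \emph{shared randomness} (a scaled random sparsifier choosing the same coordinates on every worker), so that the frontier coordinate is suppressed on all workers simultaneously with probability $\omega/(1+\omega)$; your all-or-nothing Bernoulli compressor would work equally well if its coin were shared across workers, but this is the missing ingredient, and without it the argument does not give the claimed bound. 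The filtration/adaptivity issue you flag is real but is handled by the paper's Lemma \ref{lem:small-prob} in essentially the way you anticipate.
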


\noindent \textbf{Influence of communication compression.} 
The lower bounds presented in Theorem \ref{thm:lower-bounds} consist of two terms: a sample complexity term (the first term) which determines the number of gradient samples required to achieve an $\epsilon$-accurate solution, and a communication complexity term (the second term) that determines the number of communication rounds. Communication compression only affects the communication complexity but not the sample complexity. Moreover, the rounds of communication needed to attain an $\epsilon$-accurate solution increase linearly with compression error $\omega$, which aligns with our intuition. In addition, the sample complexity decreases linearly as the number of nodes $n$ increases. As more nodes join distributed stochastic optimization, the communication complexity gradually becomes the dominant term.

\vspace{1.5mm}
\noindent \textbf{Lower bounds for deterministic optimization.} It is worth noting that Theorem \ref{thm:lower-bounds} also establishes the convergence lower bound for distributed {\em deterministic} optimization with unbiased compression when the gradient noise $\sigma^2 = 0$. Notably, none of these lower bounds have been derived in existing literature.

\vspace{1.5mm}
\noindent \textbf{Consistency with prior works.} 
The lower bounds established in Theorem \ref{thm:lower-bounds} are consistent with the best-known lower bounds in previous literature. When $\omega = 0$, our result reduces to the tight bound for distributed training without compression \cite{li2014communication}. When $n=1$ and $\omega=0$, our result reduces to the lower bound established in \cite{Arjevani2019LowerBF,Foster2019TheCO} for single-node stochastic optimization. When $n=1, \omega=0$ and $\sigma^2=0$, our result recovers the tight bound
for deterministic optimization \cite{carmon2020lower,nesterov1983method,nesterov2003introductory}. 

\subsection{Contractive Compressor}
To obtain lower bounds for contractive compressors, we need the following lemma \cite[Lemma 1]{Safaryan2020UncertaintyPF}.

\begin{lemma}[\sc Compressor relation]\label{lem:un-con}
	It holds that $\delta\,\cU_{\delta^{-1}-1} \triangleq\{\delta \,C:C\in \cU_{\delta^{-1}-1}\}\subseteq \cC_{\delta}$.
\end{lemma}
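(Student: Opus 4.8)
The plan is to unpack the definitions of $\cU_\omega$ and $\cC_\delta$ and verify the claimed inclusion directly. Fix any $C \in \cU_{\delta^{-1}-1}$, so that $\EE[C(x)] = x$ and $\EE[\|C(x)-x\|^2] \le (\delta^{-1}-1)\|x\|^2$ for all $x \in \RR^d$. I want to show that the scaled operator $\widetilde C \triangleq \delta C$ lies in $\cC_\delta$, i.e. $\EE[\|\widetilde C(x) - x\|^2] \le (1-\delta)\|x\|^2$ for all $x$.

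The key computation is a bias-variance decomposition of $\EE[\|\delta C(x) - x\|^2]$. First I would write $\delta C(x) - x = \delta\bigl(C(x) - x\bigr) + (\delta - 1)x$, and note that since $\EE[C(x) - x] = 0$, the cross term vanishes in expectation. Hence
\begin{align*}
\EE\bigl[\|\delta C(x) - x\|^2\bigr] &= \delta^2\,\EE\bigl[\|C(x) - x\|^2\bigr] + (1-\delta)^2\|x\|^2 \\
&\le \delta^2 (\delta^{-1} - 1)\|x\|^2 + (1-\delta)^2\|x\|^2 \\
&= \bigl(\delta(1-\delta) + (1-\delta)^2\bigr)\|x\|^2 = (1-\delta)\|x\|^2,
\end{align*}
where the inequality uses the unbiased-compressor variance bound and the final equality factors out $(1-\delta)$. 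This establishes $\delta C \in \cC_\delta$, and since $C$ was arbitrary, $\delta\,\cU_{\delta^{-1}-1} \subseteq \cC_\delta$.

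There is no real obstacle here; this is an elementary one-line algebraic identity combined with the definitions. The only points worth care are: (i) checking that $\delta^{-1} - 1 \ge 0$ so that $\cU_{\delta^{-1}-1}$ is nonempty and well-defined, which holds because $\delta \in (0,1]$; and (ii) making sure the bias-variance split is valid, which just requires $\EE[C(x)] = x$ (finite second moment is guaranteed by the variance bound). One could also remark that the inclusion can be strict — e.g. the identity operator lies in $\cC_\delta$ for every $\delta$ but is only in $\delta\,\cU_{\delta^{-1}-1}$ when $\delta = 1$ — consistent with the earlier observation that contractive compressors form a strictly richer class, though this is not needed for the lemma itself.
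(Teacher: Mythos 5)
Your proof is correct, and it is the standard bias--variance argument for this fact; the paper itself does not prove the lemma but simply cites it from \cite{Safaryan2020UncertaintyPF}, where essentially the same computation appears. The algebra checks out: $\delta^2(\delta^{-1}-1)+(1-\delta)^2=(1-\delta)$, and the cross term indeed vanishes by unbiasedness, so nothing is missing.
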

The Lemma above establishes that any compressor that is $(\delta^{-1}-1)$-unbiased is $\delta$-contractive when it is scaled by a factor of $\delta$. Consequently, if an algorithm $A$ is compatible with all compressors that are $\delta$-contractive, then it is  automatically compatible with all compressors that are in the set $\delta\,\cU_{\delta^{-1}-1}$, thanks to Lemma \ref{lem:un-con}. This relationship, combined with Theorem \ref{thm:lower-bounds}, enables us to obtain a lower bound that is specific to $\delta$-contractive compressors.

\begin{theorem}\label{thm:lower-bounds-contractive}
For any $L\ge\mu\ge0$, $n\ge2$, $0<\delta\le1$, $\sigma>0$, the  results below hold (proof is in Appendix \ref{app-lower-bounds-contractive}).
\begin{itemize}
\item \textbf{Strongly-convex: }For any $\Delta_x>0$, there exists a constant ${c_\kappa}$ only depends on $\kappa\triangleq L/\mu$, a set of local functions $\{f_i\}_{i=1}^n\subseteq\cF_{L,\mu}^{\Delta_x}$, stochastic gradient oracles $\{O_i\}_{i=1}^n\subseteq\cO_{\sigma^2}$, contractive compressors $\{C_i\}_{i=1}^n\subseteq\cC_\delta$, such that the output $\hat{x}$ of any $A\in\cA_{\{C_i\}_{i=1}^n}$ starting from $x^{(0)}$ requires
    \begin{equation}\label{eq-lower-bound-contractive-sc}
        T_\epsilon(A,\{(f_i,O_i,C_i)\}_{i=1}^n)=\Omega\left(\frac{\sigma^2}{\mu n\epsilon} + \frac{1}{\delta}\sqrt{\frac{L}{\mu}}\ln\left(\frac{\mu\Delta_x}{\epsilon}\right)\right)
    \end{equation}
    iterations to reach $\EE[f(\hat{x})]-f^\star\le\epsilon$ for any $0<\epsilon\leq {c_\kappa}\mu \Delta_x $.
    \item \textbf{Generally-convex: }For any $\Delta_x>0$, there exists a constant $c=\Theta(1)$, a set of local functions $\{f_i\}_{i=1}^n\subseteq\cF_{L,0}^{\Delta_x}$, stochastic gradient oracles $\{O_i\}_{i=1}^n\subseteq\cO_{\sigma^2}$, contractive compressors $\{C_i\}_{i=1}^n\subseteq\cC_\delta$, such that the output $\hat{x}$ of any $A\in\cA_{\{C_i\}_{i=1}^n}$ starting from $x^{(0)}$ requires
    \begin{equation}\label{eq-lower-bound-contractive-gc}
        T_\epsilon(A,\{(f_i,O_i,C_i)\}_{i=1}^n)=\Omega\left(\frac{\Delta_x\sigma^2}{n\epsilon^2} + \frac{1}{\delta}\left(\frac{L\Delta_x}{\epsilon}\right)^{\frac12}\right)
    \end{equation}
    iterations to reach $\EE[f(\hat{x})]-f^\star\le\epsilon$ for any $0<\epsilon\leq cL\Delta_x$.
    \item \textbf{Non-convex: }For any $\Delta_f>0$, there exists a constant $c=\Theta(1)$, a set of local functions $\{f_i\}_{i=1}^n\subseteq\cF_L^{\Delta_f}$, stochastic gradient oracles $\{O_i\}_{i=1}^n\subseteq\cO_{\sigma^2}$, contractive compressors $\{C_i\}_{i=1}^n\subseteq\cC_\delta$, such that the output $\hat{x}$ of any $A\in\cA_{\{C_i\}_{i=1}^n}$ starting from $x^{(0)}$ requires
    \begin{equation}\label{eq-lower-bound-contractive-nc}
        T_\epsilon(A,\{(f_i,O_i,C_i)\}_{i=1}^n)=\Omega\left(\frac{\Delta_fL\sigma^2}{n\epsilon^2} + \frac{\Delta_fL}{\delta\epsilon}\right)
    \end{equation}
    iterations to reach $\EE[\|\nabla f(\hat{x})\|^2]\le\epsilon$ for any $0<\epsilon\leq cL \Delta_f$.
\end{itemize}
\end{theorem}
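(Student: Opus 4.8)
The plan is to obtain Theorem~\ref{thm:lower-bounds-contractive} from Theorem~\ref{thm:lower-bounds} by a scaling reduction built on Lemma~\ref{lem:un-con}. The key point is the inclusion $\delta\,\cU_{\delta^{-1}-1}\subseteq\cC_\delta$: a hard instance of unbiased compressors with parameter $\omega=\delta^{-1}-1$, rescaled by $\delta$, becomes a hard instance of $\delta$-contractive compressors, and since $\delta$ is a fixed problem constant, rescaling can be absorbed into the algorithm at no cost. With this choice of $\omega$ one has $1+\omega=1/\delta$, which converts the three unbiased bounds \eqref{eq-lower-bound-unbiased-sc}, \eqref{eq-lower-bound-unbiased-gc}, \eqref{eq-lower-bound-unbiased-nc} into exactly \eqref{eq-lower-bound-contractive-sc}, \eqref{eq-lower-bound-contractive-gc}, \eqref{eq-lower-bound-contractive-nc}.

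In detail, fix $L\ge\mu\ge0$, $n\ge2$, $0<\delta\le1$, $\sigma>0$, and $\Delta_x>0$ (or $\Delta_f>0$ in the non-convex case). Since $\delta\le1$, $\omega:=\delta^{-1}-1\ge0$, so Theorem~\ref{thm:lower-bounds} with this $\omega$ supplies hard local functions $\{f_i\}_{i=1}^n$, oracles $\{O_i\}_{i=1}^n\subseteq\cO_{\sigma^2}$, and unbiased compressors $\{C_i\}_{i=1}^n\subseteq\cU_{\delta^{-1}-1}$ for which every $A\in\cA_{\{C_i\}_{i=1}^n}$ obeys the claimed unbiased lower bound. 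Keep the same $\{f_i\}$ and $\{O_i\}$, and set $\widehat{C}_i:=\delta\,C_i$; Lemma~\ref{lem:un-con} gives $\{\widehat{C}_i\}_{i=1}^n\subseteq\cC_\delta$, so these are admissible contractive compressors.

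It then suffices to transfer the lower bound from $\cA_{\{C_i\}}$ to $\cA_{\{\widehat{C}_i\}}$. Given any $A\in\cA_{\{\widehat{C}_i\}}$, construct a conjugate algorithm $A'\in\cA_{\{C_i\}}$ that simulates $A$ step by step, the only change being that wherever $A$ uses the message $\widehat{C}_i(v)=\delta\,C_i(v)$ (the server upon reception, or worker $i$ for a local update), $A'$ instead forms $C_i(v)$ and multiplies by the known constant $\delta$. Multiplying by $\delta\ne0$ preserves the support of every vector and the linearity of the server's output in the received messages, so $A'$ is still centralized, synchronous, zero-respecting, and outputs a linear combination of received messages; hence $A'\in\cA_{\{C_i\}}$. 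Coupling the oracle and compressor randomness, $A'$ on $\{(f_i,O_i,C_i)\}$ reproduces the iterates and output of $A$ on $\{(f_i,O_i,\widehat{C}_i)\}$, so $T_\epsilon(A,\{(f_i,O_i,\widehat{C}_i)\})=T_\epsilon(A',\{(f_i,O_i,C_i)\})$, which is bounded below by Theorem~\ref{thm:lower-bounds}. Substituting $1+\omega=1/\delta$ then yields \eqref{eq-lower-bound-contractive-sc}, \eqref{eq-lower-bound-contractive-gc}, \eqref{eq-lower-bound-contractive-nc}, with the admissible range of $\epsilon$ (and the $\kappa$-dependent constant $c_\kappa$ in the strongly-convex case) inherited from Theorem~\ref{thm:lower-bounds}.

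The only delicate point is verifying that the rescaling conjugation keeps $A'$ inside $\cA_{\{C_i\}}$: one must check that the zero-respecting definition is not broken (it is not, since the constructed compressors output vectors supported on the support of their inputs, and multiplying by $\delta\ne0$ changes no support pattern) and that permitting the server to rescale incoming messages before aggregating does not enlarge the algorithm class (a scaled linear combination of messages is still a linear combination of them). Everything else — the choice $\omega=\delta^{-1}-1$, the identity $1+\omega=1/\delta$, and the appeals to Lemma~\ref{lem:un-con} and Theorem~\ref{thm:lower-bounds} — is immediate.
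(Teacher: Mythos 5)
Your proof is correct, and it packages the argument as a formal black-box reduction from Theorem~\ref{thm:lower-bounds}, whereas the paper's appendix proof is a white-box adaptation: it reruns the construction of Theorem~\ref{thm:lower-bounds} with the \emph{unscaled} random sparsification operators (which zero each coordinate with probability $1-\delta$ and transmit it unchanged otherwise), verifies directly that these lie in $\cC_\delta$, and observes that dropping the $(1+\omega)$-rescaling does not change $\prog$ of any transmitted vector, so the progress argument of Lemma~\ref{lem:small-prob} and Steps~3--4 go through verbatim with $\omega=\delta^{-1}-1$. Both routes rest on the same core fact --- multiplication by a nonzero scalar preserves supports --- and the paper's own main-text discussion surrounding Lemma~\ref{lem:un-con} sketches exactly your reduction, even though the appendix does not formalize it. What your version buys is that Theorem~\ref{thm:lower-bounds} is used as a genuine black box, at the price of the extra (and necessary) verification that the conjugate algorithm $A'$ remains in $\cA_{\{C_i\}_{i=1}^n}$; you correctly identify this as the one delicate point and dispose of it properly: the rescaling changes no support pattern, so zero-respecting is preserved, and a $\delta$-rescaled linear combination of received messages is still a linear combination. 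What the paper's version buys is that no simulation lemma is needed at all, since the contractive compressors are constructed directly and the entire four-step argument is inherited unchanged. One cosmetic remark: the admissible range of $\epsilon$ and the constant $c_\kappa$ you inherit from Theorem~\ref{thm:lower-bounds} match what the paper's proof actually derives (the threshold $c_\kappa\mu\Delta_x$ in the strongly-convex case), and since $c_\kappa$ depends only on $\kappa=L/\mu$ the discrepancy between the $L\Delta_x$ and $\mu\Delta_x$ normalizations in the two theorem statements is absorbed into the constant.
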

\noindent \textbf{Influence of communication compression.} 
The lower bounds presented in Theorem \ref{thm:lower-bounds-contractive} are comprised of two terms: a sample complexity term (the first term) and a communication complexity term (the second term). It is worth noting that the number of communication rounds required to achieve an $\epsilon$-accurate solution is inversely proportional to $\delta$. Thus, a less precise contractive compressor with a smaller $\delta$ value will incur more communication rounds to attain an $\epsilon$-accurate solution, which well aligns with our intuition.

\vspace{1.5mm}
\noindent \textbf{Lower bounds for deterministic optimization.} Theorem \ref{thm:lower-bounds-contractive} also establishes the convergence lower bound for distributed {\em deterministic} optimization with contractive compression when the gradient noise $\sigma^2 = 0$. Notably, none of these lower bounds for deterministic optimization have been derived in existing literature.

\section{NEOLITHIC: Nearly Optimal Algorithms}

By comparing the complexities in Tables \ref{tab:unbiased} and \ref{tab:contractive} with the established lower bounds in Theorems \ref{thm:lower-bounds} and \ref{thm:lower-bounds-contractive}, it becomes clear that existing algorithms are not optimal as there is a noticeable gap between their convergence rates and our established lower bounds. To bridge this gap, we propose NEOLITHIC in this section. NEOLITHIC achieves convergence rates that nearly match the established lower bounds in Theorems \ref{thm:lower-bounds} and \ref{thm:lower-bounds-contractive} up to logarithmic factors and under mild conditions. It can work with strongly-convex, generally-convex, and non-convex scenarios, and is compatible with both unbiased and contractive compressors. Before the development of the NEOLITHIC algorithm, we first introduce a novel multi-step compression module, which plays a critical role in helping NEOLITHIC to attain state-of-the-art convergence rates.

\begin{algorithm}[t]
\caption{Multi-step compression module: $\text{MSC}(x,C,R)$}
\KwIn {Vector $x$ to be compressed; contractive operator \colorbox{blue!20}{$C\in\cC_\delta$} or unbiased operator \colorbox{red!20}{$C\in\cU_\omega$}; number of compressed communication steps $R$.}
\textbf{Initialize} $v^{(0)}=0$\;
\For{$r=1,\cdots,R$}{
Compress $x-v^{(r-1)}$ to achieve $c^{(r-1)}=C(x-v^{(r-1)})$\;
Send $c^{(r-1)}$ to the receiver\;
\hspace{-1mm}\colorbox{blue!20}{Update $v^{(r)} = v^{(r-1)}+c^{(r-1)}$ if $C\in\cC_\delta$}\;
\hspace{-1mm}\colorbox{red!20}{Or update $v^{(r)} = v^{(r-1)}+(1+\omega)^{-1}c^{(r-1)}$ if $C\in\cU_\omega$}\;
}
\Return \colorbox{blue!20}{$v^{(R)}$ if $C\in\cC_\delta$} or \colorbox{red!20}{$\left[1-(\omega/(1+\omega))^R\right]^{-1}v^{(R)}$ if $C\in\cU_\omega$}.
\label{alg:MSC}
\end{algorithm}

\subsection{Multi-Step Compression}
Multi-step compression (MSC) aims to achieve a flexible trade-off between compression precision and communication rounds. It is based on an base compressor which can be either unbiased or contractive, and can achieve an arbitrarily high compression precision by simply increasing the communication rounds, without the need to tune other parameters associated with the base compressor. For this reason, MSC is a great fit for scenarios where high compression precision is required in algorithmic development, such as with Nesterov acceleration. The MSC module  is listed in Algorithm \ref{alg:MSC}. 

\begin{itemize}
    \item \textbf{Input arguments.} The input  $x$ is the vector to be compressed, $R$ is the number of compressed communication steps, and $C\in\cC_\delta$ or $C\in\cU_\omega$ is the base contractive or unbiased compressor. 

    \item \textbf{Transmitted variables.} The MSC module conducts a total of $R$  compressed communication steps, where each step $r \in \{1,\cdots, R\}$ involves transmitting a compressed vector $c^{(r)}$ to the receiver. After MSC finishes, the receiver obtains a set of compressed vectors $\{c^{(r)}\}_{r=0}^{R-1}$. It is important to note that each transmitted vector $c^{(r-1)}$ is a compressed array under the compressor rule specified by Assumptions \ref{ass:unbiased} or \ref{ass:contract}. This property ensures that MSC can be used to develop optimal algorithms that attain the lower bounds without violating  the protocol of communication compression. In addition, MSC reduces to the normal single-step contractive or unbiased compressor utilized in existing literature \cite{Tang2019DoubleSqueezePS,Stich2018SparsifiedSW,Fatkhullin2021EF21WB,Xie2020CSERCS,Jiang2018ALS} when $R=1$. 

    \item \textbf{Returned values.} The output of the MSC module with contractive compressors is the vector $v^{(R)}$, while with unbiased compressors, it is $\left[1-(\omega/(1+\omega))^R\right]^{-1}v^{(R)}$. The scaling factor $1-(\omega/(1+\omega))^R$ is necessary to preserve  unbiasedness of the output, \ie, $\EE[v^{(R)}]=v^{(0)}$. It is important to note that the returned value is not directly transmitted to the receiver. Instead, the receiver will recover $v^{(R)}$ upon receiving the set of compressed messages $\{c^{(r)}\}_{r=0}^{R-1}$. Specifically, it is easy to verify that $v^{(R)} = \sum_{r=0}^{R-1}c^{(r)}$ with contractive compressors, and $v^{(R)} = (1+\omega)^{-1}\sum_{r=0}^{R-1}c^{(r)}$ with unbiased compressors.
\end{itemize}

The following lemma establishes that the compression error diminishes exponentially fast as $R$ increases. When $R=1$, the following lemma reduces to the property specified in Assumptions \ref{ass:unbiased} and \ref{ass:contract}. 
\begin{lemma}[\sc MSC property]\label{lm:MSC}
Under Assumptions \ref{ass:unbiased} and \ref{ass:contract}, the following results hold for any $R\ge 1$ (Proof is in Appendix \ref{app-MSC-lemma}).
\begin{itemize}
    \item \textbf{Contractive compressor:} If $C\in\cC_\delta$, the returned value of MSC module satisfies
\begin{align}
    \EE[\|\mathrm{MSC}(x,C,R)-x\|^2]\le(1-\delta)^R\|x\|^2,\quad \forall\,x\in\RR^d.
\end{align}
\item \textbf{Unbiased compressor:} If $C\in\cU_\omega$, the returned value of MSC module satisfies
\begin{align}
    \mathbb{E}[\mathrm{MSC}(x,C,R)]&=x,\quad\forall x\in\mathbb{R}^d\\
        \mathbb{E}[\|\mathrm{MSC}(x,C,R)-x\|^2]&\le(1+\omega)\left(\frac{\omega}{1+\omega}\right)^R\|x\|^2,\quad \forall\, x\in\mathbb{R}^d
\end{align}
where the expectation is taken over the randomness of the compression operator $C$.
\end{itemize}
\end{lemma}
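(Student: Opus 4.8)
The plan is to unfold the recursion in Algorithm \ref{alg:MSC} and track the residual $x - v^{(r)}$ across the $R$ steps, handling the contractive and unbiased cases separately. For the contractive case, I would first observe that the update $v^{(r)} = v^{(r-1)} + c^{(r-1)}$ with $c^{(r-1)} = C(x - v^{(r-1)})$ gives the residual identity $x - v^{(r)} = (x - v^{(r-1)}) - C(x - v^{(r-1)})$. Writing $e^{(r)} \define x - v^{(r)}$, Assumption \ref{ass:contract} applied with input $e^{(r-1)}$ yields $\EE[\|e^{(r)}\|^2 \mid e^{(r-1)}] \le (1-\delta)\|e^{(r-1)}\|^2$. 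Taking total expectation and iterating from $e^{(0)} = x$ down to $e^{(R)}$ gives $\EE[\|e^{(R)}\|^2] \le (1-\delta)^R\|x\|^2$, which is exactly the claim since $\mathrm{MSC}(x,C,R) = v^{(R)}$ in this branch.

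For the unbiased case the residual recursion is $e^{(r)} = e^{(r-1)} - (1+\omega)^{-1} C(e^{(r-1)})$. First I would establish unbiasedness: since $\EE[C(e^{(r-1)}) \mid e^{(r-1)}] = e^{(r-1)}$, we get $\EE[e^{(r)} \mid e^{(r-1)}] = (1 - (1+\omega)^{-1}) e^{(r-1)} = \frac{\omega}{1+\omega} e^{(r-1)}$, so by induction $\EE[e^{(R)}] = (\omega/(1+\omega))^R x$, equivalently $\EE[v^{(R)}] = [1 - (\omega/(1+\omega))^R] x$; multiplying by the output scaling factor $[1-(\omega/(1+\omega))^R]^{-1}$ gives $\EE[\mathrm{MSC}(x,C,R)] = x$. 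For the variance bound I would compute $\EE[\|e^{(r)}\|^2 \mid e^{(r-1)}]$ by expanding the square: the cross term contributes $-2(1+\omega)^{-1}\langle e^{(r-1)}, \EE[C(e^{(r-1)})]\rangle = -2(1+\omega)^{-1}\|e^{(r-1)}\|^2$, and the quadratic term is $(1+\omega)^{-2}\EE[\|C(e^{(r-1)})\|^2] \le (1+\omega)^{-2}(1+\omega)\|e^{(r-1)}\|^2 = (1+\omega)^{-1}\|e^{(r-1)}\|^2$ using $\EE[\|C(x)\|^2] = \|x\|^2 + \EE[\|C(x)-x\|^2] \le (1+\omega)\|x\|^2$. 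Collecting these gives $\EE[\|e^{(r)}\|^2 \mid e^{(r-1)}] \le (1 - (1+\omega)^{-1})\|e^{(r-1)}\|^2 = \frac{\omega}{1+\omega}\|e^{(r-1)}\|^2$, so iterating yields $\EE[\|e^{(R)}\|^2] \le (\omega/(1+\omega))^R \|x\|^2$. Finally, since $\mathrm{MSC}(x,C,R) - x = [1-(\omega/(1+\omega))^R]^{-1}(v^{(R)} - [1-(\omega/(1+\omega))^R]x) = -[1-(\omega/(1+\omega))^R]^{-1} e^{(R)} + (\text{a correction})$ — more carefully, writing $\alpha \define (\omega/(1+\omega))^R$ and using $v^{(R)} = x - e^{(R)}$, the output is $(1-\alpha)^{-1}(x - e^{(R)})$ and one expands $\|(1-\alpha)^{-1}(x-e^{(R)}) - x\|^2 = (1-\alpha)^{-2}\|\alpha x - e^{(R)}\|^2$; taking expectation and using $\EE[e^{(R)}] = \alpha x$ together with $\EE\|e^{(R)}\|^2 \le \alpha\|x\|^2$ gives a bias-variance split $(1-\alpha)^{-2}(\EE\|e^{(R)}\|^2 - \alpha^2\|x\|^2) \le (1-\alpha)^{-2}(\alpha - \alpha^2)\|x\|^2 = \frac{\alpha}{1-\alpha}\|x\|^2$; since $\frac{\alpha}{1-\alpha} = \frac{(\omega/(1+\omega))^R}{1-(\omega/(1+\omega))^R} \le (1+\omega)(\omega/(1+\omega))^R$ (because $1 - (\omega/(1+\omega))^R \ge 1/(1+\omega)$ as $(\omega/(1+\omega))^R \le \omega/(1+\omega)$), the stated bound follows.

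The main obstacle I anticipate is the unbiased variance bound: unlike the contractive case, the residuals $e^{(r)}$ are not themselves the algorithm's output, so the bound on $\EE\|e^{(R)}\|^2$ must be converted into a bound on $\EE\|\mathrm{MSC}(x,C,R) - x\|^2$ through the rescaling, and this conversion requires the bias-variance decomposition plus the elementary inequality $\frac{\alpha}{1-\alpha} \le (1+\omega)\alpha$ valid precisely because $\alpha \le \omega/(1+\omega) < 1$. Care must also be taken that all conditional expectations are with respect to the natural filtration $\mathcal{F}_{r-1}$ generated by $C(e^{(0)}), \dots, C(e^{(r-2)})$, so that $e^{(r-1)}$ is $\mathcal{F}_{r-1}$-measurable and the tower property applies cleanly; the zero-respecting and measurability bookkeeping is routine but should be stated. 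Everything else is a direct induction on $r$.
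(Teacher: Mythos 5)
Your proposal is correct and follows essentially the same route as the paper's proof: a per-step contraction of the residual by $(1-\delta)$ (resp.\ $\omega/(1+\omega)$) established by induction, the unbiasedness recursion $\EE[v^{(r)}]=[1-(\omega/(1+\omega))^r]x$, and the final bias--variance split after rescaling that yields $\frac{\alpha}{1-\alpha}\|x\|^2\le(1+\omega)(\omega/(1+\omega))^R\|x\|^2$. The only cosmetic difference is that you obtain the one-step factor $\omega/(1+\omega)$ by expanding the square with $\EE[\|C(y)\|^2]\le(1+\omega)\|y\|^2$, whereas the paper decomposes $v^{(r)}-x$ into a compression-error term plus a scaled residual; both computations are equivalent.
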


Lemma \ref{lm:MSC} demonstrates that the MSC module can attain arbitrary precision by increasing the number of compressed communication steps. Furthermore, the compression error decreases exponentially with $R$, implying that a slight increase in $R$ can result in significant improvements in compression precision. The MSC module is closely related to the EF21 compression strategy \cite{Richtrik2021EF21AN}. If we switch the roles of $v$ and $v^\star$ in  \cite[Eq. (8)]{Richtrik2021EF21AN} and let $R=1$, we obtain a single step of the MSC module with contractive compressor. However, the main contribution of MSC lies in utilizing multiple such compression rounds to increase the compression precision and enable the development of algorithms that can approach the established lower bounds.

\subsection{The NEOLITHIC Algorithm}

NEOLITHIC builds upon the vanilla stochastic compressed gradient descent method \cite{li2020acceleration,Alistarh2017QSGDCS} and incorporates three key enhancements: Nesterov acceleration \cite{nesterov1983method,nesterov2003introductory}, stochastic gradient accumulation \cite{Lu2021OptimalCI,Yuan2021RemovingDH}, and multi-step compression (MSC) discussed in the last section.

In NEOLITHIC, the server runs a standard Nesterov accelerated algorithm while receiving compressed stochastic gradients from each worker. Since Nesterov acceleration is not well-suited to inexact gradients with large stochastic variance and severe compression bias, each worker must refine their stochastic gradient estimate and communication compression,  thus motivating the use of gradient accumulation and MSC within each worker. To improve the stochastic gradient estimate, each worker accumulates $R$ stochastic gradients per iteration. To compensate for compression error, each worker uses MSC to transmit messages. As the number of communication rounds $R$ increases, each worker can provide arbitrarily-accurate gradient estimates, which makes Nesterov acceleration useful to improve the convergence rate. 

\begin{algorithm}[t]
\caption{NEOLITHIC}
\KwIn{Hyperparamters $\eta$, $p$, $\{\gamma_k\}_{k=0}^{K-1}$, $R$}
Initialize $x^{(0)}=z^{(0)}$\;
\For{$k=0,1,\cdots,K-1$}{
\textbf{On server:}\\
\quad Generate point to query $y^{(k)}=\left(1-\frac{\gamma_k}{p}\right)x^{(k)}+\frac{\gamma_k}{p}z^{(k)}$\tcc*{Server sends $y^{(k)}$ to workers}
\textbf{On all workers in parallel:}\\
\quad Query stochastic gradients $g_i^{(k)}=\frac{1}{R}\sum_{r=0}^{R-1}\nabla F (y^{(k)};\xi_i^{(k,r)})$\tcc*{Gradient accumulation}
\quad Multi-step Compression $\hat{g}_i^{(k)}\hspace{-0.5mm} = \hspace{-0.5mm} \mathrm{MSC}(g_i^{(k)}\hspace{-1mm},C_i,R)$\tcc*{Worker $i$ sends $\{c_i^{(r)}\}_{r=0}^{R-1}$ to server}
\textbf{On server:}\\
\quad Gather gradients $\hat{g}^{(k)}=\frac{1}{n}\sum_{i=1}^n\hat{g}_i^{(k)}$\;
\quad Update model parameter $x^{(k+1)}=y^{(k)}-\frac{\eta}{p}\hat{g}^{(k)}$\;
\quad Update auxiliary parameter $z^{(k+1)}=\frac{1}{\gamma_k}x^{(k+1)}+\left(\frac{1}{p}-\frac{1}{\gamma_k}\right)x^{(k)}+\left(1-\frac{1}{p}\right)z^{(k)}$\;
}
\Return $\hat{x}^{(K)}=x^{(K)}$ for convex functions or $\hat{x}^{(K)}\sim {\rm Unif}(\{x^{(k)}\}_{k=0}^{(K)})$ for non-convex functions.
\label{alg:NEOLITHIC}
\end{algorithm}

The NEOLITHIC algorithm is listed in Algorithm \ref{alg:NEOLITHIC}. The hyperparameters $p$ and $\gamma_k$ will take different values for strongly-convex, generally-convex, and non-convex scenarios, see Section \ref{sec:convergence}. Compared to other algorithms listed in Tables \ref{tab:unbiased} and \ref{tab:contractive}, the proposed NEOLITHIC takes $R$ times more  gradient queries and communication rounds than them per iteration. Given the same budgets to query gradient oracles and conduct communication as the other algorithms, say $T$ times on each worker, we shall consider $K = T/R$ iterations in NEOLITHIC for fair comparison. 

\begin{remark}[\sc Extension to bidirectional compression] The NEOLITHIC algorithm presented in Algorithm \ref{alg:NEOLITHIC} employs unidirectional compression, which only compresses messages from workers to the server. However, our preliminary work \cite{Huang2022LowerBA} demonstrates that NEOLITHIC can be extended to the bidirectional compression scenario.
\end{remark}

\subsection{Multi-Stage NEOLITHIC}




While NEOLITHIC is capable of achieving state-of-the-art convergence rates, our analysis of  NEOLITHIC requires a constant $\eta$, which depends on $K$, the total number of outer loops, but not  $k$, the index of each loop, see Appendix \ref{app:upper-sconvex-single} for the details. This results in a sub-optimal rate of  NEOLITHIC in the strongly-convex case. To overcome this issue, we introduce a variant of NEOLITHIC called multi-stage NEOLITHIC, which restarts NEOLITHIC with different values of $\eta$ to provide improved theoretical performance guarantees.

Multi-stage NEOLITHIC is inspired by multi-stage accelerated stochastic approximation \cite{ghadimi2013optimal} which possess an optimal convergence rate for solving strongly-convex stochastic composite optimization problems. Our proposed algorithm involves running NEOLITHIC in each stage, with the results obtained from the previous stage serving as the initialization.  Furthermore, the stopping criterion is gradually tightened by a factor of two for each new stage and the choice of $\eta$ varies in stages.
Multi-stage NEOLITHIC is listed in Algorithm \ref{alg:NEOLITHIC-mul}. The superscript $[s]$ indicates the stage-index, while superscript $(k)$ indicates the iteration index within each stage. All stage-wise hyperparameters will be determined in convergence analysis.

\begin{algorithm}
\caption{Multi-stage NEOLITHIC}
\KwIn{number of stages $S$; stage-wise hyperparameters $\{K^{[s]}\}_{s=0}^{S-1}$, $\{R^{[s]}\}_{s=0}^{S-1}$, $\{\eta^{[s]}\}_{s=0}^{S-1}$, $\{p^{[s]}\}_{s=0}^{S-1}$, $\{\gamma_k^{[s]}\}_{0\le s<S-1,0\le k<K^{[s]}}$; Initialize $x^{[0]}$\;} 

\For{$s=0,\cdots,S-1$}{
    Call NEOLITHIC (Algorithm \ref{alg:NEOLITHIC}) with initialization $x^{(0)}=z^{(0)}=x^{[s]}$ and hyperparameters $K^{[s]}$, $R^{[s]}$, $\eta^{[s]}$, $p^{[s]}$, and $\{\gamma_{k}^{[s]}\}_{k=0}^{K^{[s]}-1}$\;
    Let the output of the above NEOLITHIC algorithm be $x^{[s+1]}$\;
}
\Return $x^{[S]}$.
\label{alg:NEOLITHIC-mul}
\end{algorithm}

\section{Convergence Analysis}\label{sec:convergence}

This section aims to establish the complexities of NEOLITHIC for different objective functions, with both contractive and unbiased compressors. 

\subsection{Strongly-Convex Scenario}
The following theorem establishes that the complexity lower bounds for strongly-convex scenario can be nearly-attained by multi-stage NEOLITHIC.

\begin{theorem}[\sc Strongly-convex scenario]\label{thm:upper-sc-contractive} Given $n \ge 1$, precision $\epsilon>0$, and $G^\star=f^\star-\frac{1}{n}\sum_{i=1}^nf_i^\star$ in which $f^\star$ and $f_i^\star$ are minimum of function $f$ and $f_i$ in problem \eqref{eqn:prob}, if we let $S=\lceil\log_2(L\Delta_x/\epsilon)\rceil$ and set stage-wise hyperparameters $K^{[s]}$, $R^{[s]}$, $\eta^{[s]}$, $\{\gamma_k\}_{k=0}^{K^{[s]}-1}$ as stated in Appendix \ref{app:upper-sconvex}, the following results hold (proof is in Appendix \ref{app:upper-sconvex}). 
\begin{itemize}
    \item \textbf{Contractive compressor:} For any $\{f_i\}_{i=1}^n\subseteq\cF_{L,\mu}^{\Delta_x}$  with $\mu>0$, $\{O_i\}_{i=1}^n \subseteq \cO_\sigma^2$, and $\{C_i\}_{i=1}^n \subseteq \cC_\delta$, multi-stage NEOLITHIC requires 
    \begin{align}\label{eq-upper-bound-contractive-sc}
    \tilde{\cO}\left(\frac{\sigma^2}{\mu n\epsilon} + \frac{1}{\delta}\sqrt{\frac{L}{\mu}}\ln\left(\frac{1}{\epsilon}\right)\right)
    \end{align}
    total number of iterations to achieve an $\epsilon$-approximate optimum, where notation $\tilde{\cO}(\cdot)$ hides all logarithm factors of $n,\mu,L,\sigma,G^\star,\Delta_x$ and $\delta$.

    \item \textbf{Unbiased compressor:} For any $\{f_i\}_{i=1}^n\subseteq\cF_{L,\mu}^{\Delta_x}$ with $\mu>0$, $\{O_i\}_{i=1}^n \subseteq \cO_\sigma^2$, and $\{C_i\}_{i=1}^n \subseteq \cU_\omega$, multi-stage NEOLITHIC requires 
    \begin{align}\label{eq-upper-bound-unbiased-sc}
    \tilde{\cO}\left(\frac{\sigma^2}{\mu n\epsilon} + (1+\omega)\sqrt{\frac{L}{\mu}}\ln\left(\frac{1}{\epsilon}\right)\right)
    \end{align}
    total number of iterations to achieve an $\epsilon$-approximate optimum, where notation $\tilde{\cO}(\cdot)$ hides all logarithm factors of $n,\mu,L,\sigma,G^\star,\Delta_x$ and $\omega$.
\end{itemize}

\end{theorem}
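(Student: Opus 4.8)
The plan is to prove the theorem in two layers: a single-call analysis of NEOLITHIC (Algorithm~\ref{alg:NEOLITHIC}) viewed as inexact Nesterov acceleration, followed by a telescoping argument over the restarts of multi-stage NEOLITHIC (Algorithm~\ref{alg:NEOLITHIC-mul}). For the single call, I would treat the server as running exact-form accelerated gradient descent driven by the perturbed gradient $\hat g^{(k)}=\tfrac1n\sum_{i}\mathrm{MSC}(g_i^{(k)},C_i,R)$ at the query point $y^{(k)}$. Writing the error as $\hat g^{(k)}-\nabla f(y^{(k)})=\underbrace{\tfrac1n\sum_i(g_i^{(k)}-\nabla f_i(y^{(k)}))}_{\text{accumulation noise}}+\underbrace{\tfrac1n\sum_i(\mathrm{MSC}(g_i^{(k)},C_i,R)-g_i^{(k)})}_{\text{compression perturbation}}$, Assumption~\ref{asp:gd-noise} gives the first piece zero mean and variance $\le \sigma^2/(nR)$, while Lemma~\ref{lm:MSC} controls the second: in the unbiased case it is also zero mean with second moment $\le (1+\omega)(\omega/(1+\omega))^{R}\cdot\tfrac1n\sum_i\|g_i^{(k)}\|^2$, and in the contractive case it is a bias of squared norm $\le(1-\delta)^{R}\cdot\tfrac1n\sum_i\|g_i^{(k)}\|^2$. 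Running the standard accelerated-method Lyapunov argument — a potential of the form $a_k(f(x^{(k)})-f^\star)+b_k\|z^{(k)}-x^\star\|^2$ — with an inexact oracle then yields, for a fixed step size $\eta$ matched to the horizon $K$,
\[
\EE[f(x^{(K)})]-f^\star\ \le\ C_1\Big(1-c\sqrt{\tfrac{\mu}{L}}\Big)^{K}\big(f(x^{(0)})-f^\star\big)\ +\ C_2\,\frac{\eta\,\sigma^2}{nR}\ +\ C_3\,(\text{compression term}),
\]
where $C_1,C_2,C_3$ depend polynomially on $L/\mu$.

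To turn the compression term into something negligible I would establish an a priori bound $\tfrac1n\sum_i\EE\|g_i^{(k)}\|^2\ \lesssim\ L^2\,\EE\|y^{(k)}-x^\star\|^2+L\,G^\star+\sigma^2/R$, using $L$-smoothness of each $f_i$, the inequality $\tfrac1n\sum_i\|\nabla f_i(x^\star)\|^2\le 2L\,G^\star$ with $G^\star=f^\star-\tfrac1n\sum_i f_i^\star$, and Assumption~\ref{asp:gd-noise}. Feeding this into the single-call inequality, the compression term is $\lesssim (1-\delta)^R\cdot\big(L^2\|y^{(k)}-x^\star\|^2+LG^\star+\sigma^2/R\big)$ summed with the appropriate accelerated weights; choosing $R$ so that $(1-\delta)^{R}$ (resp.\ $(1+\omega)(\omega/(1+\omega))^{R}$) is below the current tolerance divided by these constants makes it absorbable, which asks for $R\gtrsim \delta^{-1}\log(\cdot)$ (resp.\ $R\gtrsim(1+\omega)\log(\cdot)$). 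The unbiased case is slightly cleaner because the compression perturbation is then an honest zero-mean oracle and folds directly into $C_2\,\eta\sigma^2/(nR)$-type terms rather than into a bias that must be separately damped.

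For the multi-stage wrapper I would set $\epsilon_s=\epsilon_0/2^{s}$ with $\epsilon_0\asymp L\Delta_x$ and $S=\lceil\log_2(L\Delta_x/\epsilon)\rceil$, and choose, for stage $s$, $K^{[s]}=\Theta(\sqrt{L/\mu}\,\log(\text{const}))$ so the geometric factor contracts the entering suboptimality to $\tfrac14$ of its value, $\eta^{[s]}$ the accelerated step size matched to $K^{[s]}$, and $R^{[s]}$ equal to the maximum of $\Theta\!\big(\tfrac{\sigma^2 2^{s}}{n\epsilon_0\sqrt{L\mu}}\big)$ (so the noise contribution $\eta^{[s]}\sigma^2/(nR^{[s]})$ is below $\tfrac14\epsilon_s$) and $\Theta(\delta^{-1}\log(\cdot))$ (resp.\ $\Theta((1+\omega)\log(\cdot))$, so the compression term is below $\tfrac14\epsilon_s$). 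An induction over $s$ with hypothesis $\EE[f(x^{[s]})]-f^\star\le\epsilon_s$ closes at $s=S$, giving $\EE[f(x^{[S]})]-f^\star\le\epsilon$. Summing the total iteration count $\sum_{s}K^{[s]}R^{[s]}$: the noise part is dominated by the last stage and telescopes to $\tilde\cO(\sigma^2/(\mu n\epsilon))$, while the compression part is $\sum_{s}\Theta(\sqrt{L/\mu})\cdot\Theta(\delta^{-1}\log(\cdot))=\tilde\cO(\delta^{-1}\sqrt{L/\mu}\,\ln(1/\epsilon))$ because there are $S=\Theta(\log(1/\epsilon))$ stages, with $\delta^{-1}\rightsquigarrow(1+\omega)$ for the unbiased variant; lower-order logarithmic factors are absorbed into $\tilde\cO$.

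The main obstacle I expect is the interaction of the compression \emph{bias} (contractive case) with Nesterov acceleration, which amplifies persistent gradient errors far more than plain gradient descent; this forces the analysis to be an inexact-oracle potential inequality rather than a black-box reduction, and it is precisely what the MSC module is engineered to defeat by driving $(1-\delta)^R$ below the per-stage tolerance. A second, more technical, difficulty is that the a priori bound on $\tfrac1n\sum_i\|g_i^{(k)}\|^2$ contains the heterogeneity floor $LG^\star$, which does not shrink across stages, so $R^{[s]}$ genuinely must grow logarithmically in $1/\epsilon_s$; this has to be tracked carefully so that the $G^\star$ and $\sigma^2/R$ pieces end up only in log-absorbed terms, and the a priori bound itself — since $y^{(k)}$ is random — must be maintained jointly with the convergence recursion via an induction that simultaneously bounds $\EE\|z^{(k)}-x^\star\|^2$.
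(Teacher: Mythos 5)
Your proposal matches the paper's proof in all essentials: a single call of NEOLITHIC is analyzed as inexact Nesterov acceleration via a potential of the form $f(x^{(k)})-f^\star+c\,\mu\|z^{(k)}-x^\star\|^2$, with the compression perturbation bounded through $\frac{1}{n}\sum_i\|\nabla f_i(y)\|^2\le 2L(f(y)-f^\star)+2LG^\star$ and driven below tolerance by taking $R=\tilde{\Theta}(\delta^{-1})$, and the multi-stage wrapper then halves the tolerance per stage so that the compression cost sums to $\tilde{\cO}(\delta^{-1}\sqrt{L/\mu}\ln(1/\epsilon))$ while the noise cost is dominated by the last stage, exactly as you describe. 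The only cosmetic divergence is the unbiased case, which the paper dispatches by scaling the compressor into a $(1+\omega)^{-1}$-contractive one (Lemma \ref{lem:un-con}) and substituting $\delta\to(1+\omega)^{-1}$ in the contractive bound, rather than re-deriving the recursion with a zero-mean compression oracle as you propose; both routes give the same $(1+\omega)\sqrt{L/\mu}\ln(1/\epsilon)$ term.
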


\begin{remark} [\sc total number of iterations]
Theorem \ref{thm:upper-sc-contractive} presents the complexity in terms of the total number of iterations required by the entire optimization process. To illustrate the total number of iterations, we assume the number of loops in NEOLITHIC is fixed per stage and the number of iterations within MSC is constant per NEOLITHIC loop. Based on this assumption, we can determine the total number of iterations as follows:
\begin{align}
\text{Total number of iterations} = \text{Number of stages}&\times \text{Number of loops in NEOLITHIC} \\
& \times  \text{Number of iterations within MSC}. 
\end{align}
This count is also equivalent to the total number of sample queries and the total number of communication/compression rounds. 
\end{remark}

\begin{remark} [\sc Tightness of the lower bounds]
Comparing the upper bounds in \eqref{eq-upper-bound-contractive-sc} and \eqref{eq-upper-bound-unbiased-sc} with the corresponding lower bounds in \eqref{eq-lower-bound-unbiased-sc} and \eqref{eq-lower-bound-contractive-sc} for the strongly-convex scenario, we observe that multi-stage NEOLITHIC nearly attains the lower bounds up to logarithm terms. This implies that lower bounds in \eqref{eq-lower-bound-unbiased-sc} and \eqref{eq-lower-bound-contractive-sc} are nearly tight, and multi-stage NEOLITHIC is nearly optimal.
\end{remark}

\begin{remark} [\sc Performance of NEOLITHIC]
The validity of multi-stage NEOLITHIC's near-optimal convergence for the strongly-convex scenario in Theorem \ref{thm:upper-sc-contractive} relies on the theoretical performance of NEOLITHIC (Algorithm \ref{alg:NEOLITHIC}). In Appendix \ref{app:upper-sconvex-single}, we demonstrate that NEOLITHIC can achieve an upper bound that is slower than the lower bound by a factor of $\ln(1/\epsilon)$. In contrast, multi-stage NEOLITHIC can effectively remove the $\ln(1/\epsilon)$ term and nearly achieve the lower bounds. 
\end{remark}

\begin{remark} [\sc Independent unbiased compressors] 
In the deterministic scenario, where $\sigma^2=0$, multi-stage NEOLITHIC achieves a convergence rate of $\tilde{\cO}\big((1+\omega)\sqrt{L/\mu}\ln(1/\epsilon)\big)$ with unbiased compressors. Although some existing literature can outperform NEOLITHIC in certain scenarios, \eg, \cite{li2020acceleration} converges with rate $\tilde{\cO}(\omega(1+\sqrt{L/(n\mu)})\ln(1/\epsilon))$ when $n\leq \omega$, this improvement is based on an additional assumption that all local unbiased compressors $\{C_i\}_{i=1}^n$ are {independent and cannot share the same randomness}. Similarly, recent works \cite{li2021canita} and \cite{Gorbunov2021MARINAFN,Tyurin2022DASHADN} have achieved cheaper complexities than our derived lower bounds in Theorem \ref{thm:lower-bounds} and \ref{thm:lower-bounds-contractive} for generally-convex and non-convex scenarios, respectively, by using independent unbiased compressors.

In contrast, NEOLITHIC does not require such an assumption and can be applied to both dependent and independent compressors. This is particularly useful in practical distributed machine learning systems, where the high-performance ring-allreduce protocol \cite{patarasuk2009bandwidth} is used to conduct global averaging. This protocol cannot support independent compressors, as it requires all compressed vectors to share the same element indices. 
\end{remark}

\subsection{Generally-Convex Scenario}
The following theorem establishes convergence of NEOLITHIC for the generally-convex scenario.  
\begin{theorem}[\sc Generally-convex scenario]\label{thm:upper-gc-contractive}
Given $n \ge 1$, precision $\epsilon>0$, and $G^\star=f^\star-\frac{1}{n}\sum_{i=1}^nf_i^\star$ in which $f^\star$ and $f_i^\star$ are minimum of function $f$ and $f_i$ in problem \eqref{eqn:prob}, the following results hold (proof is in Appendix \ref{app:upper-gconvex}).
\begin{itemize}
    \item \textbf{Contractive compressor:} For any $\{f_i\}_{i=1}^n\subseteq\cF_{L,\mu}^{\Delta_x}$  with $\mu=0$, $\{O_i\}_{i=1}^n \subseteq \cO_\sigma^2$, and $\{C_i\}_{i=1}^n \subseteq \cC_\delta$, if we let $p=5$, $\gamma_k=10/(k+2)$, and set $R$ and $\eta$ as in Appendix \ref{app:upper-convex-contractive}, then NEOLITHIC requires 
    \begin{align}\label{eq-upper-bound-contractive-gc}
        \tilde{\cO}\left(\frac{\Delta_x\sigma^2}{n\epsilon^2} + \frac{1}{\delta}\cdot\left(\frac{L\Delta_x}{\epsilon}\right)^{\frac12}\ln\left(\frac{1}{\epsilon}\right)\right)
    \end{align}
    total number of iterations to achieve an $\epsilon$-approximate optimum, where notation $\tilde{\cO}(\cdot)$ hides all logarithm factors of $n,L,\sigma,G^\star,\Delta_x$ and $\delta$.

    \item \textbf{Unbiased compressor:} For any $\{f_i\}_{i=1}^n\subseteq\cF_{L,\mu}^{\Delta_x}$  with $\mu=0$, $\{O_i\}_{i=1}^n \subseteq \cO_\sigma^2$, and $\{C_i\}_{i=1}^n \subseteq \cU_\omega$, if we let $p=2$, $\gamma_k=6/(k+3)$ and set $R$, $\eta$ as in Appendix \ref{app:upper-convex-unbiased}, then NEOLITHIC requires 
    \begin{align}\label{eq-upper-bound-unbiased-gc}
        \tilde{\cO}\left(\frac{\Delta_x\sigma^2}{n\epsilon^2} + (1+\omega)\left(\frac{L\Delta_x}{\epsilon}\right)^{\frac12}\right)
    \end{align}
    total number of iterations to achieve an $\epsilon$-approximate optimum, where notation $\tilde{\cO}(\cdot)$ hides all logarithm factors of $n,L,\sigma,G^\star,\Delta_x$ and $\omega$.
\end{itemize} 
\end{theorem}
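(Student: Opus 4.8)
The plan is to recognize that, viewed from the server, NEOLITHIC is exactly Nesterov's accelerated gradient method run with the inexact gradient surrogate $\hat g^{(k)}=\frac1n\sum_{i=1}^n\mathrm{MSC}(g_i^{(k)},C_i,R)$ in place of $\nabla f(y^{(k)})$, and then to carry out a perturbed estimate-sequence (Lyapunov) analysis for accelerated methods under an inexact oracle. First I would decompose $\hat g^{(k)}=\nabla f(y^{(k)})+\zeta^{(k)}+b^{(k)}$, where $\zeta^{(k)}$ is conditionally mean-zero and $b^{(k)}$ is a deterministic bias; by Lemma~\ref{lm:MSC} one has $b^{(k)}=0$ for unbiased compressors, while for contractive compressors $\|b^{(k)}\|\le (1-\delta)^{R/2}\cdot\frac1n\sum_i\mathbb{E}\|g_i^{(k)}\|$ by Jensen. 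Combining Assumption~\ref{asp:gd-noise} (variance $\sigma^2$, cut to $\sigma^2/R$ by accumulating $R$ samples and to $\sigma^2/(nR)$ after averaging over the $n$ independent workers) with the second-moment bounds in Lemma~\ref{lm:MSC}, I would get $\mathbb{E}\|\zeta^{(k)}\|^2\lesssim \frac{\sigma^2}{nR}+\rho^R\cdot\frac1n\sum_i\mathbb{E}\|g_i^{(k)}\|^2$, with $\rho=1-\delta$ in the contractive case and $\rho=\frac{\omega}{1+\omega}$ (times a factor $1+\omega$) in the unbiased case.

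Second, I would plug this surrogate into the standard weighted telescoping for accelerated gradient with the prescribed schedule ($p=2,\gamma_k=6/(k+3)$ for unbiased; $p=5,\gamma_k=10/(k+2)$ for contractive), for which the accumulation weights satisfy $A_k=\Theta(k^2)$. This yields, for the output $x^{(K)}$,
\[
\mathbb{E}[f(x^{(K)})]-f^\star \;\lesssim\; \frac{L\Delta_x}{K^2}\;+\;\frac{1}{A_K}\sum_{k=0}^{K-1}A_k\Big(\eta\,\mathbb{E}\|\zeta^{(k)}\|^2+\|b^{(k)}\|\cdot\mathbb{E}\|z^{(k)}-x^\star\|\Big).
\]
Bounding $\mathbb{E}\|z^{(k)}-x^\star\|^2$ by the same Lyapunov function and using $\sum_{k<K}A_k=\Theta(K A_K)$ reduces this to the template $\mathbb{E}[f(x^{(K)})]-f^\star\lesssim \frac{L\Delta_x}{K^2}+\sqrt{\frac{\sigma^2\Delta_x}{nRK}}+(\text{compression term})$.

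Third, I would close with the choice of $R$ and $K$. For contractive compressors, take $R=\Theta\big(\tfrac1\delta\log(1/\epsilon)\big)$ so that $\rho^R$ drives the compression bias and variance below $\epsilon$, and $K=\Theta(\sqrt{L\Delta_x/\epsilon})$ to control the first term; balancing the stochastic term forces $KR\gtrsim \sigma^2\Delta_x/(n\epsilon^2)$, so the total count $KR$ equals $\tilde{\cO}\big(\tfrac{\Delta_x\sigma^2}{n\epsilon^2}+\tfrac1\delta\sqrt{L\Delta_x/\epsilon}\,\log(1/\epsilon)\big)$. For unbiased compressors $b^{(k)}=0$, so $R$ needs only to be large enough --- $R=\Theta\big((1+\omega)\log(\mathrm{poly}(n,L,\sigma,G^\star,\Delta_x,\omega))\big)$, crucially \emph{independent of $\epsilon$} --- to absorb the compression-induced variance into the sampling variance; this removes the extra $\log(1/\epsilon)$ and gives $KR=\tilde{\cO}\big(\tfrac{\Delta_x\sigma^2}{n\epsilon^2}+(1+\omega)\sqrt{L\Delta_x/\epsilon}\big)$.

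The step I expect to be the main obstacle is controlling the gradient-moment quantities $\frac1n\sum_i\mathbb{E}\|g_i^{(k)}\|^2$ that multiply $\rho^R$: since the $f_i$ are heterogeneous these are not bounded by the global gap alone, and one must use $\|\nabla f_i(y)\|^2\le 2L(f_i(y)-f_i^\star)$ together with a self-bounding induction relating $\sum_i(f_i(y^{(k)})-f_i^\star)$ to $n(f(y^{(k)})-f^\star)+nG^\star$ and back to the Lyapunov function, so that --- for $R$ above an $\epsilon$-free threshold --- the compression terms are absorbed rather than summed. A secondary subtlety is arranging the perturbed estimate-sequence inequality so that the $A_k$-weights do not amplify a genuine \emph{bias} super-linearly; this is precisely why the contractive bound retains the extra $\log(1/\epsilon)$ factor while the unbiased bound, having only mean-zero error, does not.
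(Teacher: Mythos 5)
Your proposal is correct and follows essentially the same route as the paper's proof: a perturbed Nesterov/Lyapunov telescoping with weights $\Theta(k^2)$, the variance/bias split of $\hat g^{(k)}$ via Lemma~\ref{lm:MSC}, heterogeneity controlled through $\frac1n\sum_i\|\nabla f_i(y)\|^2\le 2L(f(y)-f^\star)+2LG^\star$ and a self-bounding absorption into the recursion (the paper's Lemmas~\ref{lm:newasp}--\ref{lm-contractive-strong}), and the decisive choice of $R$ being $\epsilon$-independent for unbiased compressors but $\Theta(\delta^{-1}\ln(1/\epsilon))$ for contractive ones, which is exactly where the extra $\ln(1/\epsilon)$ enters. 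The only cosmetic difference is that the paper handles the contractive bias through Young's inequality on $\langle\hat g^{(k)}-g^{(k)},\cdot\rangle$ with a tuning sequence $M_k=(k+1)^{3/2}$ rather than an explicit $\|b^{(k)}\|\cdot\EE\|z^{(k)}-x^\star\|$ cross term, but these are equivalent.
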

\begin{remark} [\sc total number of iterations]
Theorem \ref{thm:upper-gc-contractive} presents the complexity in terms of the total number of iterations required by the entire optimization process. To illustrate the total number of iterations, we assume the number of iterations within MSC is constant per NEOLITHIC loop. Based on this assumption, we can determine the total number of iterations as follows:
\begin{align}
\text{Total number of iterations} = \text{Number of loops in NEOLITHIC} &\times \text{Number of iterations in MSC}
\end{align}
This count is also equivalent to the total number of sample queries and the total number of communication/compression rounds.
\end{remark} 

\begin{remark} [\sc Tightness of the lower bounds]
Comparing the upper bound \eqref{eq-upper-bound-unbiased-gc} with the lower bound \eqref{eq-lower-bound-unbiased-gc} when utilizing unbiased compressors, we observe that NEOLITHIC nearly attains the lower bound up to logarithmic terms, implying that the lower bound \eqref{eq-lower-bound-unbiased-gc} is nearly tight, and NEOLITHIC is nearly optimal for the generally-convex scenario with unbiased compressors. However, the upper bound \eqref{eq-upper-bound-contractive-gc} is worse than the lower bound by a factor of $\ln(1/\epsilon)$. Although we believe that the presence of the $\ln(1/\epsilon)$ term is attributed to the analysis, we cannot eliminate it using existing techniques including the strategy of multi-stage NEOLITHIC.
\end{remark}

\subsection{Non-Convex Scenario}

\begin{theorem}[\sc Non-convex scenario]\label{thm:upper-nc-contractive}
Given $n \ge 1$, precision $\epsilon>0$, and $G^\star\triangleq f^\star-\frac{1}{n}\sum_{i=1}^nf_i^\star$ in which $f^\star$ and $f_i^\star$ are minimum of function $f$ and $f_i$ in problem \eqref{eqn:prob}, the following results hold (proof is in Appendix \ref{thm:upper-nc}).
\begin{itemize}
    \item \textbf{Contractive compressor:} For any $\{f_i\}_{i=1}^n\subseteq\cF_{L}^{\Delta_f}$, $\{O_i\}_{i=1}^n \subseteq \cO_\sigma^2$, and $\{C_i\}_{i=1}^n \subseteq \cC_\delta$, if we let $\gamma_k\equiv\gamma=p=1$, and set $R$ and $\eta$ as in Appendix \ref{thm:upper-nc}, then NEOLITHIC requires 
    \begin{align}\label{eq-upper-bound-contractive-nc}
        \tilde{\cO}\left(\frac{\Delta_fL\sigma^2}{n\epsilon^2} + \frac{\Delta_fL}{\delta\epsilon}\ln\left(\frac{1}{\epsilon}\right)\right)
    \end{align}
    total number of iterations to achieve an $\epsilon$-approximate stationary point, where notation $\tilde{\cO}(\cdot)$ hides all logarithm factors of $n,L,\sigma,G^\star ,\Delta_f$ and $\delta$.

    \item \textbf{Unbiased compressor:} For any $\{f_i\}_{i=1}^n\subseteq\cF_{L}^{\Delta_f}$, $\{O_i\}_{i=1}^n \subseteq \cO_\sigma^2$, and $\{C_i\}_{i=1}^n \subseteq \cU_\omega$, if we let $\gamma_k\equiv\gamma=p=1$, and set $R$ and $\eta$ as in Appendix \ref{thm:upper-nc}, then NEOLITHIC requires 
    \begin{align}\label{eq-upper-bound-unbiased-nc}
        \tilde{\cO}\left(\frac{\Delta_fL\sigma^2}{n\epsilon^2} + \frac{(1+\omega)\Delta_fL}{\epsilon}\right)
    \end{align}
    total number of iterations to achieve an $\epsilon$-approximate stationary point, where notation $\tilde{\cO}(\cdot)$ hides all logarithm factors of $n,L,\sigma,G^\star,\Delta_f$ and $\omega$.
\end{itemize} 
\end{theorem}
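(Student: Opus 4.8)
The plan is to analyze NEOLITHIC with the degenerate choice $\gamma_k\equiv\gamma=p=1$, under which the server update collapses to plain (compressed, gradient-accumulated) SGD: $y^{(k)}=x^{(k)}$ and $x^{(k+1)}=x^{(k)}-\eta\,\hat g^{(k)}$, where $\hat g^{(k)}=\frac1n\sum_{i=1}^n\mathrm{MSC}(g_i^{(k)},C_i,R)$ and $g_i^{(k)}=\frac1R\sum_{r=0}^{R-1}\nabla F(x^{(k)};\xi_i^{(k,r)})$. The first step is a descent lemma: using $L$-smoothness of $f$ and the update rule, $\EE[f(x^{(k+1)})]\le \EE[f(x^{(k)})]-\frac{\eta}{2}\EE\|\nabla f(x^{(k)})\|^2+\frac{\eta}{2}\EE\|\hat g^{(k)}-\nabla f(x^{(k)})\|^2$ after absorbing the $\frac{L\eta^2}{2}\EE\|\hat g^{(k)}\|^2$ term for $\eta\le 1/L$. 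So everything reduces to bounding the aggregated error $\EE\|\hat g^{(k)}-\nabla f(x^{(k)})\|^2$.

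The second step is to decompose this error into a stochastic part and a compression part. Write $\bar g^{(k)}=\frac1n\sum_i g_i^{(k)}$. Then $\hat g^{(k)}-\nabla f(x^{(k)}) = (\bar g^{(k)}-\nabla f(x^{(k)})) + (\hat g^{(k)}-\bar g^{(k)})$. Gradient accumulation of $R$ i.i.d. samples per worker, together with Assumption~\ref{asp:gd-noise} and independence across workers, gives $\EE\|\bar g^{(k)}-\nabla f(x^{(k)})\|^2\le \sigma^2/(nR)$. For the compression part, I invoke Lemma~\ref{lm:MSC}: with a contractive base compressor, $\EE\|\mathrm{MSC}(g_i^{(k)},C_i,R)-g_i^{(k)}\|^2\le(1-\delta)^R\|g_i^{(k)}\|^2$, and with an unbiased base compressor MSC stays unbiased with variance $\le(1+\omega)(\omega/(1+\omega))^R\|g_i^{(k)}\|^2$. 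In the unbiased case, unbiasedness lets me average the per-worker variances, so the compression error is $\le \frac1n\cdot(1+\omega)(\omega/(1+\omega))^R\cdot\max_i\EE\|g_i^{(k)}\|^2$; in the contractive case I must instead bound $\|\hat g^{(k)}-\bar g^{(k)}\|^2\le \frac1n\sum_i\|\mathrm{MSC}(g_i^{(k)},C_i,R)-g_i^{(k)}\|^2\le(1-\delta)^R\frac1n\sum_i\|g_i^{(k)}\|^2$ by Jensen. In either case I bound $\EE\|g_i^{(k)}\|^2\le 2\EE\|\nabla f_i(x^{(k)})\|^2+2\sigma^2/R$, and $\EE\|\nabla f_i(x^{(k)})\|^2$ by $4L(f_i(x^{(k)})-f_i^\star)$ using $L$-smoothness and $f_i\ge f_i^\star$; summing over $i$ introduces the constant $G^\star$ via $\frac1n\sum_i(f_i(x^{(k)})-f_i^\star)=f(x^{(k)})-f^\star+G^\star$.

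The third step is to choose $R$ large enough that the compression-error contraction factor $(1-\delta)^R$ (resp. $(\omega/(1+\omega))^R$) is at most $\epsilon/(\mathrm{poly})$ — specifically $R=\Theta(\delta^{-1}\log(1/\epsilon))$ (resp. $R=\Theta((1+\omega)\log(1/\epsilon))$) suffices, using $\log(1/(1-\delta))\ge\delta$ and $\log((1+\omega)/\omega)\ge 1/(1+\omega)$ — so that the compression term is dominated by $\eta$ times something comparable to $\frac{\eta}{4}\EE\|\nabla f(x^{(k)})\|^2$ plus a lower-order additive term; the function-value-gap factors are handled by telescoping since they appear with a small coefficient that can be absorbed into the left side over the sum (this is the standard "$f(x^{(k)})-f^\star$ on both sides" trick, valid because the coefficient is made $\le 1/2$ by the choice of $\eta$ and $R$). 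Telescoping the descent inequality over $k=0,\dots,K-1$ and dividing by $K$ yields $\frac1K\sum_k\EE\|\nabla f(x^{(k)})\|^2=\cO\big(\frac{\Delta_f}{\eta K}+\frac{\eta L\sigma^2}{nR}+\text{(exp.\ small)}\big)$, and since the returned point is uniform over $\{x^{(k)}\}$ this equals $\EE\|\nabla f(\hat x^{(K)})\|^2$. Optimizing $\eta=\Theta(\min\{1/L,\sqrt{n R\Delta_f/(L\sigma^2 K)}\})$ gives total iteration count (outer loops $K$ times inner $R$) $\tilde\cO\big(\frac{L\Delta_f\sigma^2}{n\epsilon^2}+\frac{R L\Delta_f}{\epsilon}\big)$, and substituting $R$ produces the claimed $\frac{L\Delta_f}{\delta\epsilon}\ln(1/\epsilon)$ (contractive) or $\frac{(1+\omega)L\Delta_f}{\epsilon}$ (unbiased) communication terms; note the unbiased case loses no $\log$ factor in the final rate because the log is already inside $R$ and the $\epsilon/R\to\epsilon$ conversion is harmless there. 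The main obstacle is the contractive case's lack of unbiasedness: the MSC error $\hat g^{(k)}-\bar g^{(k)}$ is correlated with $\bar g^{(k)}$, so the cross term in $\EE\|\hat g^{(k)}-\nabla f(x^{(k)})\|^2$ cannot simply be dropped; handling it forces the extra $\ln(1/\epsilon)$ factor (one must make the compression error genuinely negligible rather than merely unbiased-with-controlled-variance), which is exactly why the contractive upper bound in \eqref{eq-upper-bound-contractive-nc} carries a $\ln(1/\epsilon)$ that is absent from the lower bound \eqref{eq-lower-bound-contractive-nc}.
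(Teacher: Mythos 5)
Your proposal follows essentially the same route as the paper's proof in Appendix \ref{thm:upper-nc}: the smoothness descent lemma, splitting the error of $\hat g^{(k)}$ into a sampling part of order $\sigma^2/(nR)$ and an MSC compression part, bounding the latter through $\frac1n\sum_i\|\nabla f_i(x)\|^2\le 2L(f(x)-f^\star)+2LG^\star$ (Lemma \ref{lm:newasp}), choosing $R=\Theta(\delta^{-1}\ln(\cdot))$ or $\Theta((1+\omega)\ln(\cdot))$, telescoping, and tuning $\eta$; your observation that unbiasedness kills the cross term and removes the extra $\ln(1/\epsilon)$ also matches the paper. The one point to tighten is your justification for absorbing the multiplicative $f(x^{(k)})-f^\star$ drift: a coefficient merely $\le 1/2$ would compound to $(3/2)^K$ over the horizon, so what is actually needed (and what the paper enforces via $2\eta L(1-\tilde\delta)\le 1/(K+1)$ together with its geometric reweighting) is that the drift coefficient be $O(1/K)$ — your choice of $R$ making $(1-\delta)^R$ polynomially small in $\epsilon$ does deliver this, but the condition should be stated that way.
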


\begin{remark}[\sc Tightness of the lower bounds]

In contrast to the preliminary results in the conference paper \cite{Huang2022LowerBA}, where the upper bounds are derived under additional data heterogeneity assumptions beyond those required for the lower bounds, Theorem \ref{thm:upper-nc-contractive} removes the data heterogeneity assumption and establishes  upper bounds under assumptions that closely align with those used to construct the lower bounds in Theorems \ref{thm:lower-bounds} and \ref{thm:lower-bounds-contractive}. As a result, given the closeness between the upper and lower bounds, we conclude that the lower bound \eqref{eq-lower-bound-unbiased-nc} is nearly tight, and NEOLITHIC is nearly optimal for the non-convex setting with unbiased compressors. Similar to the generally-convex case, the upper bound \eqref{eq-upper-bound-contractive-nc} is worse than the lower bound by a factor of $\ln(1/\epsilon)$; however, it is derived under milder assumptions than most prior works \cite{Zhao2019GlobalMC,Karimireddy2019ErrorFF,Stich2018SparsifiedSW,Xie2020CSERCS,Tang2019DoubleSqueezePS,Basu2020QsparseLocalSGDDS} such as bounded gradients, and it also achieves a tighter convergence rate than EF21-SGD \cite{Fatkhullin2021EF21WB}.

\end{remark}

\section{Experiments}
In this section, we conduct simulations to evaluate the efficiency of NEOLITHIC algorithms across various tasks. Our primary focus is on comparing NEOLITHIC with other algorithms such as QSGD, MEM-SGD, Double-Squeeze, and EF21-SGD.

\subsection{Strongly-Convex Scenario: Distributed Least Square}\label{subsec:syn}
Consider a strongly-convex distributed least-square problem in which each worker $i$ holds a local function 
$$
f_i(x)=\frac{1}{2n}\|A_ix-b_i\|^2, \quad \forall\, i \in \{1,\cdots, n\}.
$$
Let $M$ denote the size of each local dataset, and $d$ denote the dimension of the model, represented by $x$. In this experiment, we set $n=30$, $M=100$, and $d=10$. To generate the matrices $A_i$, we first generate a random matrix $A\in\RR^{nM\times d}$ by independently sampling each entry from a normal distribution. Next, we modify its condition number through SVD decomposition to justify the theoretical advantages in the scaling of $\kappa=L/\mu$, and partition it into $A_1,\cdots,A_n$. For each node, we generate an optimal solution $x_i^\star$ based on a randomly generated reference solution $x_0^\star$ and a randomly generated noise term, as follows: $x_i^\star=x_0^\star+e_i$, where $x_0^\star\sim\cN(0,I_d)$ and $e_i\sim\cN(0,0.01I_d)$. The $b_i$'s are generated such that $b_i-A_ix_i^\star\sim\cN(0,0.01I_d)$.
We consider two settings of gradient oracles. In the small noise setting, the gradient oracle $O_i$ at $x$ returns $\nabla f_i(x)+0.001\varepsilon$, where $\varepsilon\sim\cN(0,I_d)$ is independently drawn for all queries. In the large noise setting, we use $\nabla f_i(x)+0.1\varepsilon$ instead. 

\renewcommand{\dblfloatpagefraction}{.9}
\begin{figure}[t]
\centering
\begin{subfigure}{.4\textwidth}
\includegraphics[width=\textwidth]{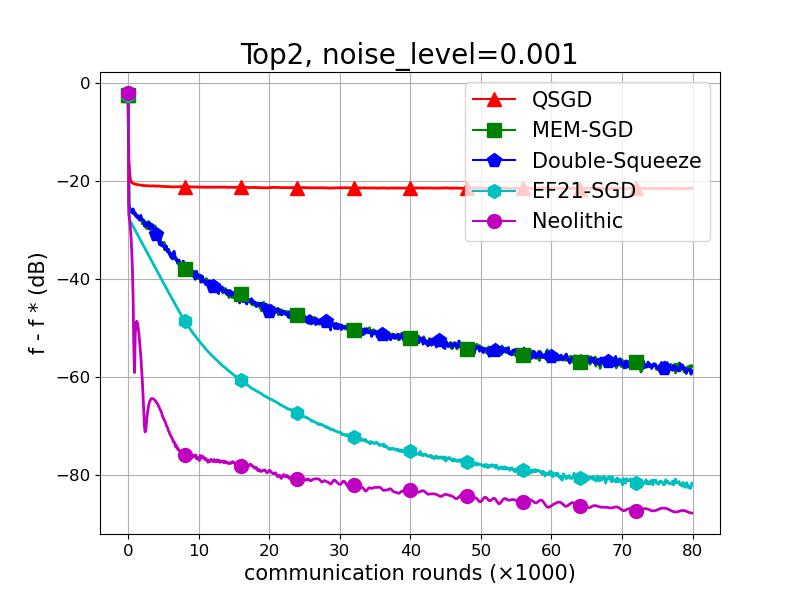}
\end{subfigure}
\begin{subfigure}{.4\textwidth}
\includegraphics[width=\textwidth]{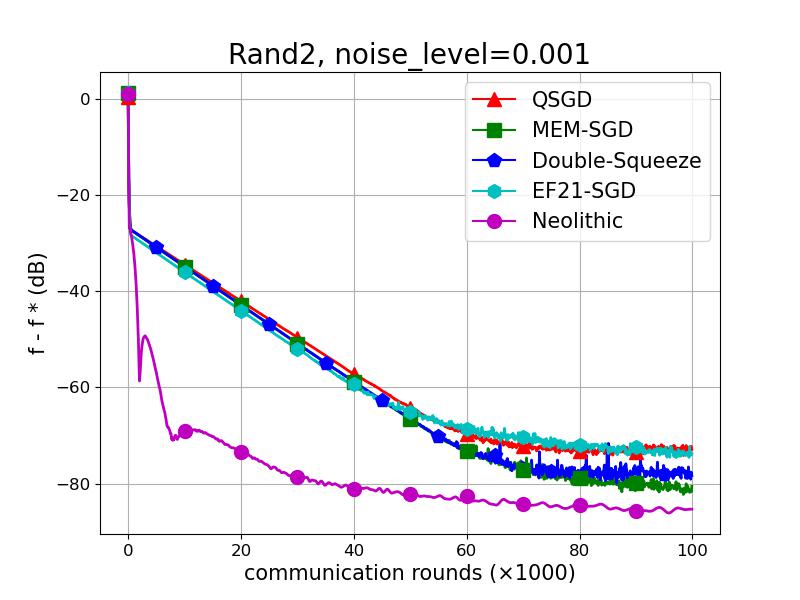}
\end{subfigure}
\begin{subfigure}{.4\textwidth}
\includegraphics[width=\textwidth]{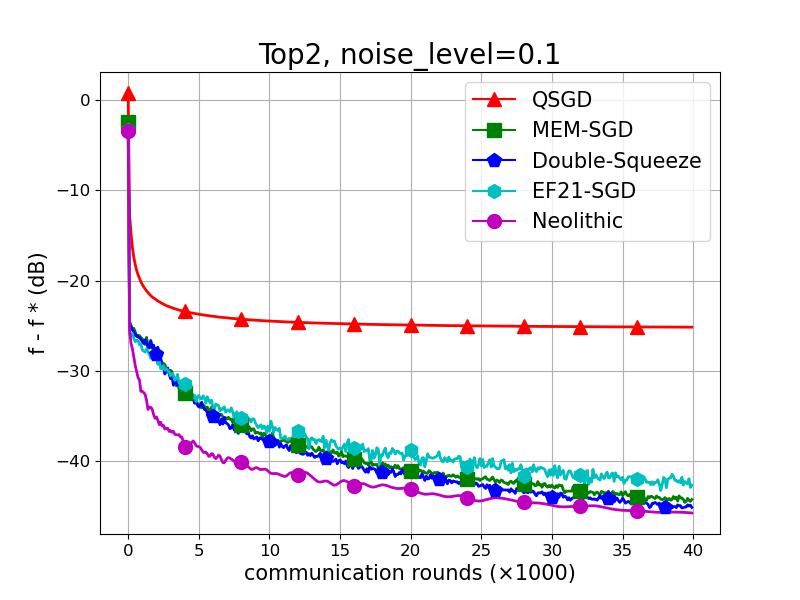}
\end{subfigure}
\begin{subfigure}{.4\textwidth}
\includegraphics[width=\textwidth]{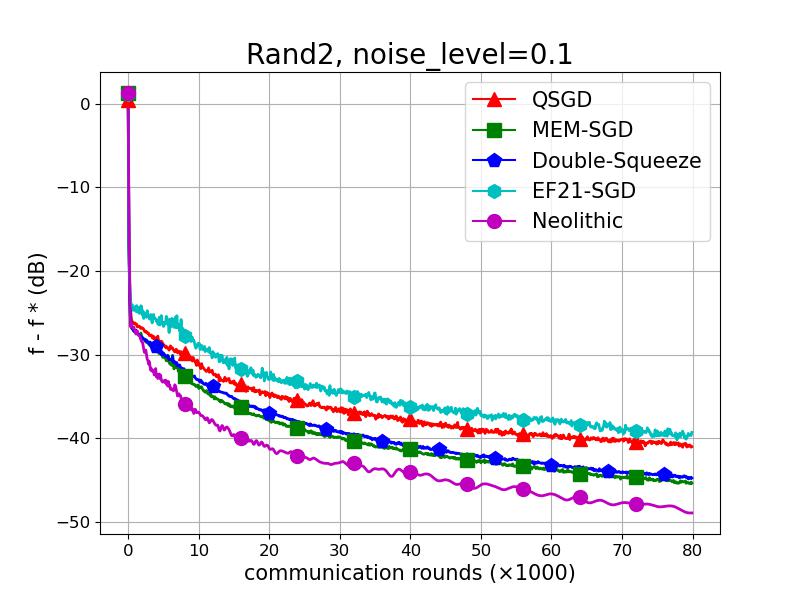}
\end{subfigure}
\caption{\small Convergence results of various algorithms on distributed least square problem. The $y$-axis represents $f-f^\star$ (dB) and the $x$-axis indicates the total communication rounds (in units of 
 thousands). All curves are averaged over 20 trials.} 
\label{fig:ls}
\end{figure}

We evaluate the contractive compressors by using the Top-2 compressor, while for the simulations with unbiased compressors, we employ the uRand-2 (unbiased Rand-2 via post scaling) compressor. The learning rates for all algorithms are set using the formula $\min\{c_1/L,\eta_0\cdot(\mathrm{iters}+1)^{-c_2}\}$, where constants $c_1$ and $c_2$ are tuned to best fit in each algorithm. We implement single-stage NEOLITHIC algorithm with $R=5$ and parameters $p,\eta,\gamma$ tuned to best fit the algorithm where $p$ is a constant, $\eta$ has formation $\min\{c_1/L,\eta_0\cdot(\mathrm{iters}+1)^{-c_2}\}$ and $\gamma=\gamma_0\cdot(\mathrm{iters}+1)^{-c_2}$. 
When we embed the uRand-2 compressors to MEM-SGD, Double-Squeeze and EF21-SGD, we observe a divergence in the earliest iterations, even when scaling down the learning rate by $10^{10}$. Consequently, we used the Rand-2 compressor without post scaling instead. Nonetheless, we continue to use the uRand-2 in Q-SGD and NEOLITHIC. To ensure fair comparison, we compressed only the messages from the workers to the server in Double-Squeeze.

\begin{figure}[h]
\centering
\begin{subfigure}{.4\textwidth}
\includegraphics[width=\textwidth]{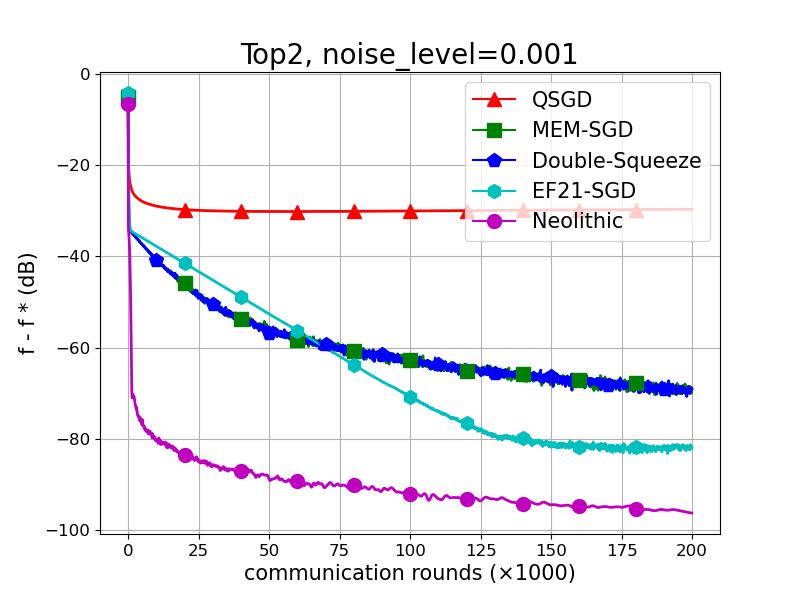}
\end{subfigure}
\begin{subfigure}{.4\textwidth}
\includegraphics[width=\textwidth]{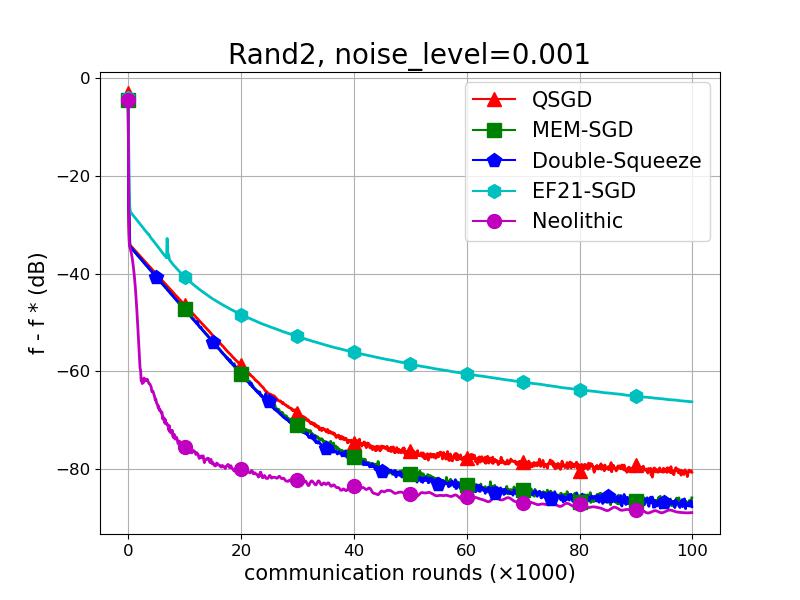}
\end{subfigure}
\begin{subfigure}{.4\textwidth}
\includegraphics[width=\textwidth]{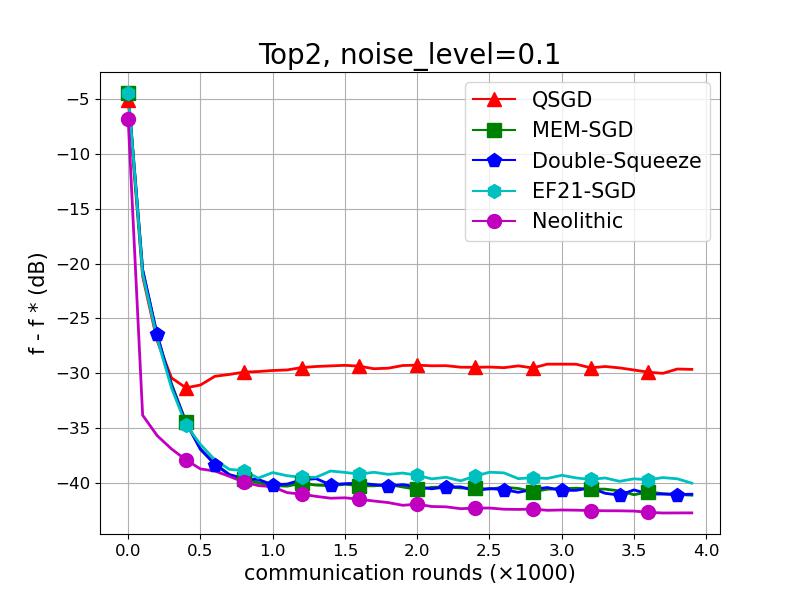}
\end{subfigure}
\begin{subfigure}{.4\textwidth}
\includegraphics[width=\textwidth]{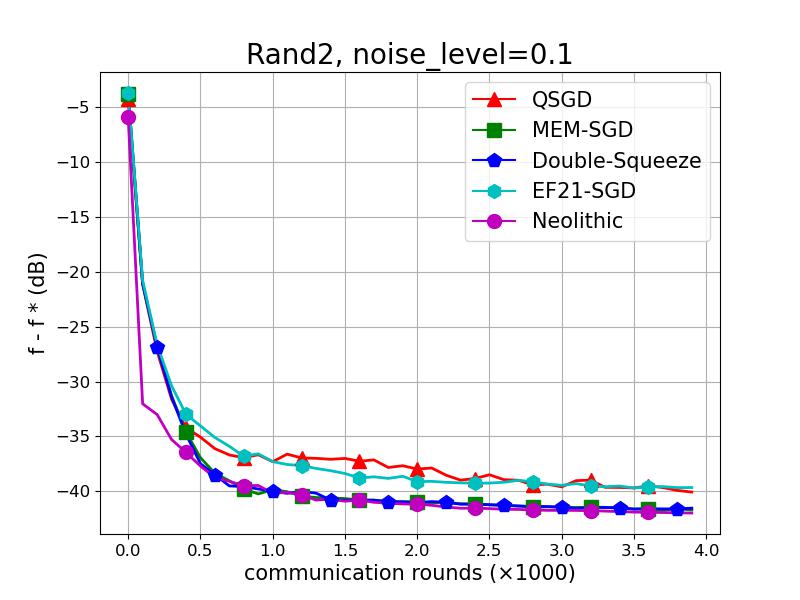}
\end{subfigure}
\caption{\small Convergence results of various algorithms on distributed logistic regression problem. The $y$-axis represents $f-f^\star$ (dB) and the $x$-axis indicates the total communication rounds (in units of thousands). All curves are averaged over 20 trials.}
\label{fig:lr}
\end{figure}

In Figure \ref{fig:ls}, we present the performance of all algorithms. When operating in the small noise setting, the convergence of each algorithm is dominated by the term involving the condition number of the objective function $\kappa = L/\mu$. In the top two plots, NEOLITHIC presents a clear superiority over other baselines due to its improved complexity in the scaling of $\kappa$  thanks to the acceleration mechanism. However, in the large noise setting, the term involving the variance of the stochastic gradient $\sigma^2$ governs the performance of each algorithm, and all algorithms perform closely. However, NEOLITHIC still outperforms baselines by a visible margin.

\subsection{Generally-Convex Scenario: Distributed Logistic Regression }\label{subsec:lr}
Consider the generally-convex distributed logistic regression problem in which each worker $i$ holds a function
$$
f_i(x)=\frac{1}{M}\sum_{i=1}^M\ln(1+\exp(-b_{i,m}a_{i,m}^\top x)), \quad \forall i\in \{1,\cdots, n\}. 
$$
where $A_i=(a_{i,1}^\top,\cdots,a_{i,M}^\top)^\top$, $x_i^\star$ are generated similarly to the least square experiments, with the same values of $n$, $M$ and $d$. The label $b_{i,m}$ is generated independently through Bernoulli distributions with a probability $\PP(b_{i,m}=1)=(1+\exp(-a_{i,m}^\top x_i^\star))^{-1}$. We set the gradient oracles, algorithms, compressors and hyperparameters similarly to those in the least square experiments.

Figure \ref{fig:lr} displays the performance of all algorithms. In the small noise regime, smoothness of the objective function dominates the convergence of each algorithm. NEOLITHIC exhibits a significant advantage over other baselines in this scenario, with an improved complexity of $\cO((1+\omega)\sqrt{L/\epsilon})$ or $\cO(\delta^{-1}\sqrt{L/\epsilon}\ln(1/\epsilon))$, as shown in the top two plots. In the large noise setting, stochastic gradient variance $\sigma^2$ largely determines the performance of each algorithm, resulting in similar performances across all algorithms. Nevertheless, NEOLITHIC outperforms the baselines by a mild margin.

\subsection{Influence of MSC round $R$}
We investigate the impact of MSC rounds on NEOLITHIC by varying $R$ in an experiment with fixed hyperparameters. We consider the least square problem with $n=30$, $M=60$, and $d=50$. We use fixed values of $\eta$, $\gamma$, $p$, and compressors from the set $\{\mbox{Top-2, Rand-2, uRand-2}\}$ while varying $R$ from 1 to 50. We perform the experiment under varying settings of data heterogeneity and noise scales. In the  case of small heterogeneity, we use $e_i\sim\mathcal{N}(0,0.01I_d)$, while in the case of big heterogeneity, we use $e_i\sim\cN(0,I_d)$. The noise scale is identical to the previous experiment. The results in Figure \ref{fig:R} suggest that a moderate value of $R$ is necessary to balance compression error, while larger values of $R$ may cause performance degradation due to over-accumulation in gradient queries. This phenomenon is more evident in cases with big heterogeneity, where local gradients are less informative than global gradients, and in cases with small noise, where a small minibatch is sufficient to reduce the gradient noise. Therefore, selecting the appropriate value of $R$ involves a trade-off between reducing the compression error and saving gradient computation.


\begin{figure}[t]
\centering
\begin{subfigure}{.4\textwidth}
\includegraphics[width=\textwidth]{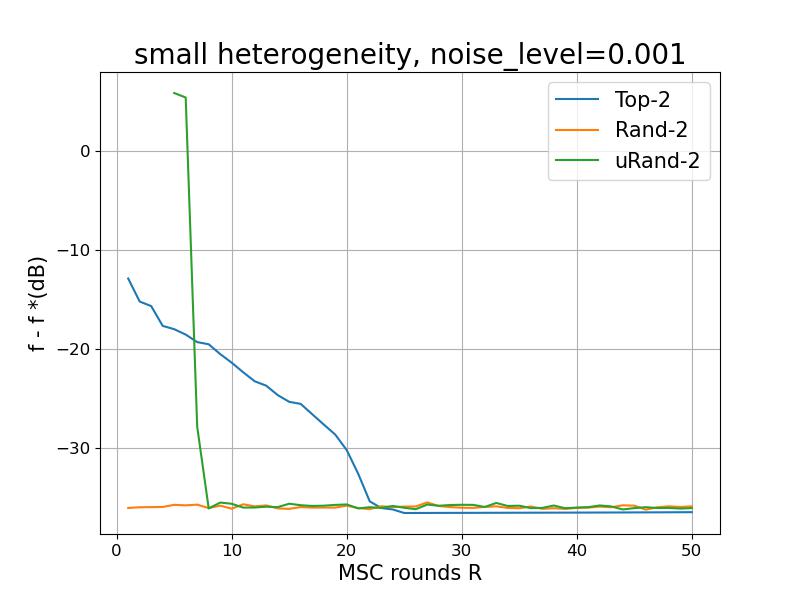}
\end{subfigure}
\begin{subfigure}{.4\textwidth}
\includegraphics[width=\textwidth]{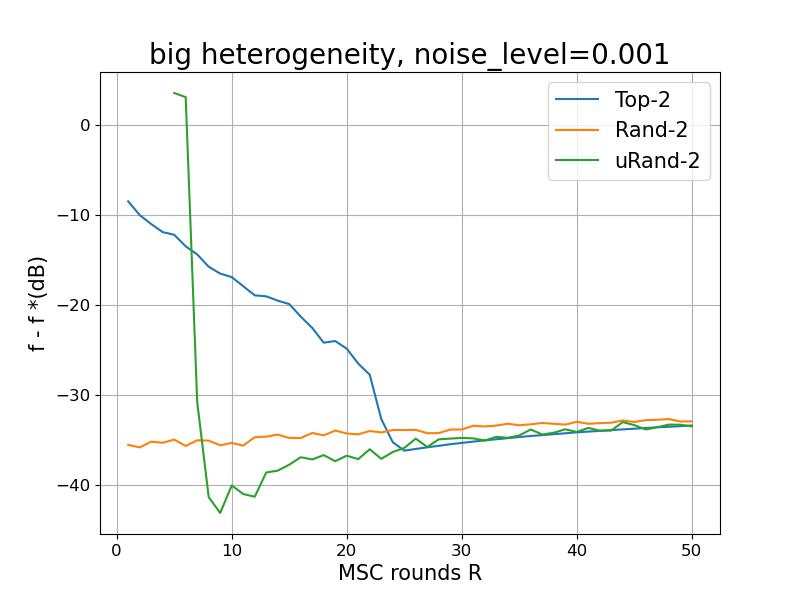}
\end{subfigure}
\begin{subfigure}{.4\textwidth}
\includegraphics[width=\textwidth]{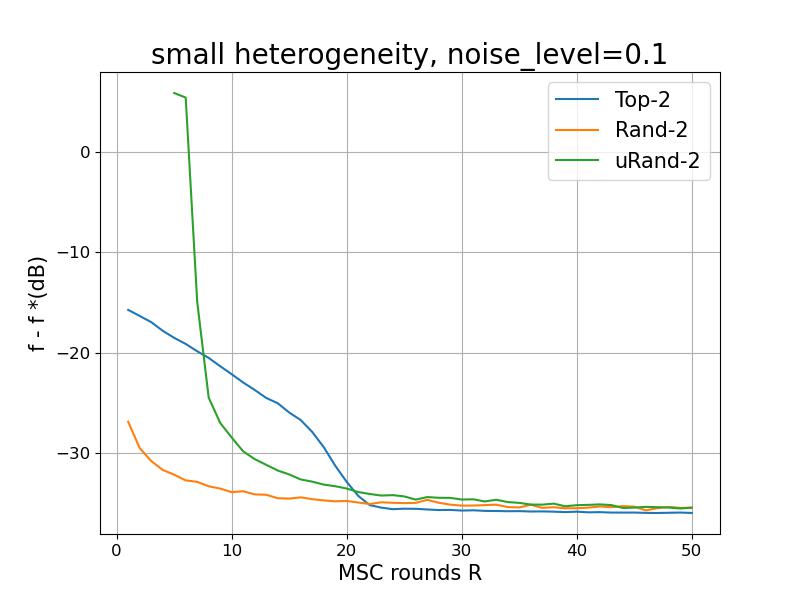}
\end{subfigure}
\begin{subfigure}{.4\textwidth}
\includegraphics[width=\textwidth]{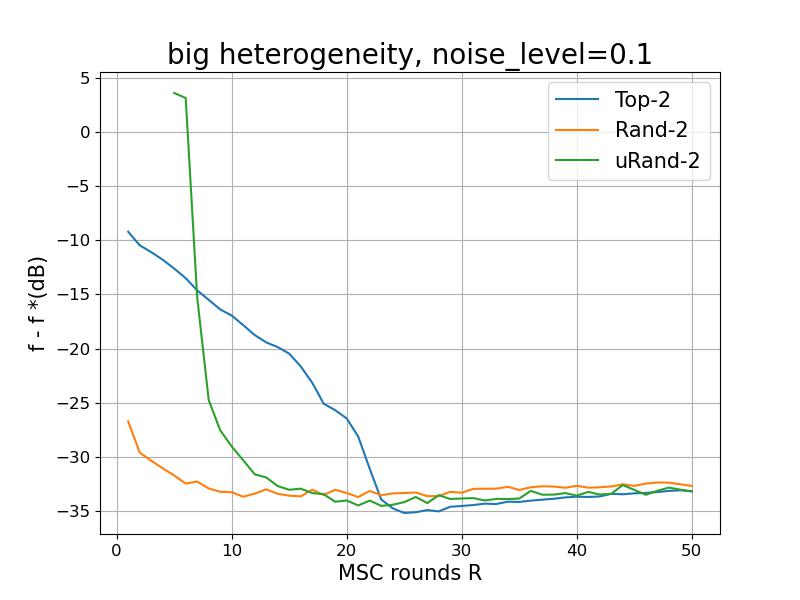}
\end{subfigure}
\caption{\small Best precision over $10,000$ total communication rounds for different MSC rounds and compressors, under varying data heterogeneity and gradient noise scales. All curves are averaged over 20 trails.}
\label{fig:R}
\end{figure}

\section{Conclusion}
This paper focuses on distributed stochastic algorithms for minimizing smooth objective functions under two representative compressors: unbiased and contractive. We establish convergence lower bounds for algorithms using these compressors in strongly-convex, generally-convex, and non-convex scenarios. To bridge the gap between the established lower bounds and existing upper bounds, we introduce NEOLITHIC, an algorithm that almost attains the lower bounds (up to logarithmic factors) under mild conditions. We employ a novel multi-step compression module to achieve state-of-the-art convergence rates. Extensive experimental results support our findings.

Despite the promising results presented in this paper, there are still several open questions yet to be answered. One important question is how to achieve the lower bound by removing the $\ln(1/\epsilon)$ term from the complexity upper bounds of NEOLITHIC in the generally-convex and non-convex scenario with contractive compressors. 
Further research in these directions is needed to fully understand the performance limits of compression algorithms in distributed stochastic optimization.


\bibliography{main}
\bibliographystyle{abbrv}

\allowdisplaybreaks
\appendix

\newpage

\newpage
\appendix

\section{Lower Bounds}\label{app:lower}
In this section, we provide the proofs for Theorem \ref{thm:lower-bounds} and \ref{thm:lower-bounds-contractive}. Without loss of generality, we assume algorithms to start from $x^{(0)}=0$ throughout this section. Otherwise we can consider the translated objectives $\tilde{f}_i(x):=f_i(x-x^{(0)})$.

To measure the optimization progress when algorithms starts from $x^{(0)}=0$,
we denote the $k$-th coordinate of a vector $x\in\RR^d$ by $[x]_k$ for $k=1,\dots,d$, and 
 let $\prog(x)$ be 
\begin{equation*}
    \prog(x):=\begin{cases}
    0, & \text{if $x=0$},\\
    \max_{1\leq k\leq d}\{k:[x]_k\neq 0\},& \text{otherwise}.
    \end{cases}
\end{equation*}
Similarly, for a set of multiple points $\cX=\{x_1,x_2,\dots\}$, we define $\prog(\cX):=\max_{x\in\cX}\prog(x)$.
As defined in \cite{Arjevani2019LowerBF,carmon2020lower}, a  function $f$ is called zero-chain if it satisfies
\begin{equation*}
    \prog(\nabla f(x))\leq \prog(x)+1,\quad\forall\,x\in\RR^d,
\end{equation*}
which implies that starting from $x^{(0)}=0$, a single gradient evaluation can only earn at most one more non-zero coordinate for the model parameters. 

Now we consider the setup of distributed learning with communication compression. For each worker $i$ and $t\geq 1$, we let $y^{(t)}_i$ be the point at which worker $i$ queries its $t$-th gradient oracle during the optimization procedure. 

Between the $t$-th and $(t+1)$-th gradient queries, each worker is allowed to communicate with the server by transmitting compressed messages. For worker $i$, we let $\cV^{(t)}_{i}$ denote the set of messages
that  worker $i$ aims to send to the server, \ie, the vectors before compression. 
Due to  communication compression, the vectors received by the server from worker $i$, which we denote by $\hat{\cV}^{(t)}_{i}$, are the compressed version of ${\cV}^{(t)}_{i}$ with some underlying compressors $C_i$, \ie, $\hat{\cV}^{(t)}_{i}\triangleq \{C_i(v):v\in{\cV}^{(t)}_{i}\}$.
Note that $\hat{\cV}^{(t)}_{i}$ is a set that may contain multiple vectors, and its cardinality equals the rounds of communication. After receiving the compressed messages from all workers, the server will broadcast  some messages back to all workers. We let   $\cU^{(t)}$  denote the set of messages that the server sends to workers.

Following the above description, we now extend the zero-respecting property \cite{carmon2020lower,Carmon2021LowerBF} to the setting of centralized distributed optimization with communication compression, which originally appears in single-node (stochastic) optimization.
\begin{definition}[\sc Zero-Respecting Algorithms]\label{def:zero-repsect}
    We say a distributed algorithm $A$ is zero-respecting if for any $t\geq 1$ and $1\leq k\leq d$, the following conditions are met:
\begin{enumerate}
    \item If worker $i$ queries at $y^{(t)}_i$ with $[y^{(t)}_i]_k\neq 0$, then one of the following must be true: 
    \begin{equation*}
        \begin{cases}
            \text{there exists some $1\leq s< t$ such that $[O_i(y^{(s)}_i;\zeta^{(s)}_i)]_k\neq 0$};\\
            \text{there exists some $1\leq s< t$ such that worker $i$ has received some $u\in\cU^{(s)}$ with $[u]_k\neq 0$};\\
            \text{there exists some $1\leq s<t$ such that worker $i$ has  a  compressed message  $\hat{v}\in\hat{\cV}_{i}^{(s)}$ with $[\hat{v}]_k\neq 0$}.
        \end{cases}
    \end{equation*}
    \item If worker $i$ aims to send some $v\in\cV_{i}^{(t)}$ with $[v]_k\neq 0$ to the server,  then one of the following must be true: 
    \begin{equation*}
        \hspace{-7mm}\begin{cases}
            \text{there exists some $1\leq s\leq  t$ such that $[O_i(y^{(s)}_i;\zeta^{(s)}_i)]_k\neq 0$};\\
            \text{there exists some $1\leq s< t$ such that worker $i$ has received some $u\in\cU^{(s)}$ with $[u]_k\neq 0$};\\
            \text{there exists some $1\leq s<t$ such that worker $i$ has  compressed some  $\hat{v}\in\cV_{i}^{(s)}$ with $[\hat{v}]_k\neq 0$}.
        \end{cases}
    \end{equation*}
    \item If the server aims to broadcast some $u\in\cU^{(t)}$ with $[u]_k\neq 0$ to workers, there exists some $1\leq s\leq  t$ such that the server has received some $v \in\cup_{1\leq i\leq n}\cV_{i}^{(s)}  $ with $[v]_k\neq 0$.
\end{enumerate}
\end{definition}

In essence, the above zero-respecting property requires that any expansion of non-zero coordinates in  $y^{(t)}_i$, $\cV_i^{(t)}$ associated with worker $i$ is  attributed to its historical local gradient updates, local compression, or synchronization with the server.
Meanwhile, it also requires that any expansion of non-zero coordinate in vectors held, including the final algorithmic output, in the server is due to the  received compressed messages from workers.
One can easily  verify  that most existing distributed algorithms with communication compression, including those listed in Table \ref{tab:unbiased} and Table \ref{tab:contractive}, are zero-respecting.

Next, we outline the proofs for the lower bounds. In each case  of Theorem \ref{thm:lower-bounds} and \ref{thm:lower-bounds-contractive}, we separately prove the two terms in the lower bound by constructing two hard-to-optimize examples respectively.
The construction of the example can be conducted in four steps: first, constructing zero-chain local functions $\{f_i\}_{i=1}^n$; second, constructing compressors $\{C_i\}_{i=1}^n\subseteq \cU_\omega$ or $\{C_i\}_{i=1}^n\subseteq \cC_\delta$ and independent gradient oracles $\{O_i\}_{i=1}^n\subseteq \cO_{\sigma^2}$ that hamper algorithms to expand the non-zero coordinates of model parameters; third, establishing a limitation, in terms of the non-zero coordinates of model parameters, 
for zero-respecting algorithms utilizing the predefined compression protocol 
with $T$ gradient queries and compressed communication on each worker; last, translating this limitation into the lower bound of the complexity measure defined in \eqref{eqn:measure} and \eqref{eqn:measure-nonconvex}.

In particular, we will use the following lemma in the analysis of the third step.
\begin{lemma}\label{lem:small-prob}
Given a constant $p\in[0,1]$ and 
random variables $\{B^{(t)}\}_{t=0}^T$ with $T\geq 1/p$ such that $B^{(t)}\leq B^{(t-1)}+1$ and  $\PP(B^{(t)}\leq B^{(t-1)}\mid \{B^{(r)}\}_{r=0}^{t-1})\geq 1-p$ for any $1\leq t\leq T$, it holds with probability at least $1-e^{-1}$ that $B^{(T)}\leq B^{(0)}+epT$.
\end{lemma}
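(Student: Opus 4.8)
The plan is to recast the statement as a multiplicative Chernoff bound for a sum of conditionally sub-Bernoulli indicators. I would first set up the natural filtration $\mathcal{F}_t := \sigma(B^{(0)},\dots,B^{(t)})$ and write $D^{(t)} := B^{(t)}-B^{(t-1)}$. The hypothesis $B^{(t)}\le B^{(t-1)}+1$ gives $D^{(t)}\le 1$, while $\PP(B^{(t)}\le B^{(t-1)}\mid\mathcal{F}_{t-1})\ge 1-p$ says precisely that the ``bad'' indicator $Z^{(t)} := \one\{D^{(t)}>0\}$ obeys $\PP(Z^{(t)}=1\mid\mathcal{F}_{t-1})\le p$. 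Since every positive increment is at most $1$, telescoping yields $B^{(T)}-B^{(0)}=\sum_{t=1}^T D^{(t)}\le \sum_{t=1}^T Z^{(t)}=:N$, so it suffices to show $\PP(N\ge epT)\le e^{-1}$.

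Next I would bound the moment generating function of $N$ by iterated conditioning. For any $\theta>0$, $\EE[e^{\theta Z^{(t)}}\mid\mathcal{F}_{t-1}] = 1+(e^\theta-1)\PP(Z^{(t)}=1\mid\mathcal{F}_{t-1}) \le 1+p(e^\theta-1)\le \exp\!\big(p(e^\theta-1)\big)$. Peeling off the indicators one at a time via the tower property gives $\EE[e^{\theta N}]\le \exp\!\big(Tp(e^\theta-1)\big)$, and Markov's inequality then produces $\PP(N\ge a)\le \exp\!\big(-\theta a + Tp(e^\theta-1)\big)$ for every $\theta>0$ and every $a>0$.

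Finally I would optimize in $\theta$: for $a\ge Tp$ the choice $\theta=\ln\!\big(a/(Tp)\big)\ge 0$ gives the standard bound $\PP(N\ge a)\le e^{-Tp}\,(eTp/a)^{a}$. Taking $a=epT$ (which satisfies $a\ge Tp$ because $e>1$) makes $(eTp/a)^{a}=1$, so $\PP(N\ge epT)\le e^{-Tp}$; the hypothesis $T\ge 1/p$ gives $Tp\ge 1$, hence $e^{-Tp}\le e^{-1}$. Combined with $B^{(T)}-B^{(0)}\le N$, this is exactly the assertion.

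I do not expect a real obstacle: the only mild points to be careful about are the rigorous justification of the iterated-conditioning step (i.e.\ writing $\EE[e^{\theta N}]=\EE\big[e^{\theta\sum_{t<T}Z^{(t)}}\,\EE[e^{\theta Z^{(T)}}\mid\mathcal{F}_{T-1}]\big]$ and recursing, which is routine once $\mathcal{F}_t$ is in place) and the edge cases $p=0$ (then $N=0$ almost surely and the claim is trivial) and the constraint $a\ge Tp$ used in the Chernoff optimization, both of which are immediate here.
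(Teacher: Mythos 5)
Your proof is correct and follows essentially the same route as the paper's: both bound the conditional moment generating function of each increment by $1+(e-1)p$, iterate via the tower property, and apply Markov's inequality at the threshold $epT$, using $pT\ge 1$ to conclude. Your reduction to the indicator sum $N=\sum_t Z^{(t)}$ and the formal Chernoff optimization (which lands at $\theta=1$, the value the paper fixes from the start) are only cosmetic variations yielding the identical bound $e^{-pT}\le e^{-1}$.
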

\begin{proof}
Without loss of generality, we assume $B^{(0)}=0$; otherwise we can translate the variables as $\tilde{B}^{(t)}:=B^{(t)}-B^{(0)}$. Therefore, we have for any constant $c\geq 0$
\begin{align}\label{eqn:Lvbsubvs}
    \PP(B^{(T)}\geq cpT)&\leq e^{-cpT}\EE[e^{B^{(T)}}]=e^{-cpT}\EE\left[\exp\left(\sum_{t=1}^{T}\left(B^{(t)}-B^{(t-1)}\right)\right)\right].
\end{align}
Since $B^{(t)}\leq B^{(t-1)}+1$ and  $\PP(B^{(t)}\leq B^{(t-1)}\mid \{B^{(r)}\}_{r=0}^{t-1})\geq 1-p$, we have
\begin{align}
&\EE\left[\exp\left(\sum_{r=1}^{t}\left(B^{(r)}-B^{(r-1)}\right)\right)\,\Big|\, \{B^{(r)}\}_{r=0}^{t-1}\right]\\
    =&\exp\left(\sum_{r=1}^{t-1}\left(B^{(r)}-B^{(r-1)}\right)\right)\EE\left[\exp\left(B^{(t)}-B^{(t-1)}\right)\,\Big|\, \{B^{(r)}\}_{r=0}^{t-1}\right]\\
    \leq &\exp\left(\sum_{r=1}^{t-1}\left(B^{(r)}-B^{(r-1)}\right)\right)(p e + (1-p)).\label{eqn:vcvxdasdasdz}
\end{align}
Taking the expectation of \eqref{eqn:vcvxdasdasdz} with respect to $\{B^{(r)}\}_{r=0}^{t-1}$, we have 
\begin{equation}\label{eqn:vnisnfidfa}
\EE\left[\exp\left(\sum_{r=1}^{(t)}\left(B^{(r)}-B^{(r-1)}\right)\right)\right]\leq \EE\left[\exp\left(\sum_{r=1}^{t-1}\left(B^{(r)}-B^{(r-1)}\right)\right)\right] (p e + (1-p))
\end{equation}
Iterating \eqref{eqn:vnisnfidfa} over $t=1,\dots,T$, we obtain 
\[
\EE\left[\exp\left(\sum_{t=1}^{T}\left(B^{(t)}-B^{(t-1)}\right)\right)\right]\leq (1+(e-1)p)^T\leq e^{(e-1)pT},
\]
which, combined with \eqref{eqn:Lvbsubvs}, leads to
\[
\PP(B^{(T)}\geq cpT)\leq e^{(e-1-c)pT}.
\]
Finally, letting $c=e$ and using $pT\geq 1$ completes the proof.
\end{proof}

\subsection{Proof of Theorem \ref{thm:lower-bounds}}\label{app:lower-bounds}
\subsubsection{Strongly-Convex Case}\label{app:lower-sconvex}

\textbf{Example 1.}
In this example, we prove the lower bound $\Omega((1+\omega)\sqrt{{L}/{\mu}}\ln\left({\mu\Delta_x}/{\epsilon}\right))$.

\vspace{2mm}
\noindent (Step 1.) We assume the variable $x\in\ell_2\triangleq\{([x]_1,[x]_2,\dots,):\sum_{r=1}^{\infty}[x]_r^2<\infty\}$ to be infinitely dimensional and square-summable for simplicity. It is easy to adapt the argument for finitely dimensional variables as long as the dimension is proportionally larger than $T$. Let $M $ be
\begin{align*}
    M=\left[\begin{array}{ccccc}
2 & -1 & & &\\
-1 & 2 & -1 & &  \\
& -1 & 2 & -1 &   \\
& & \ddots & \ddots & \ddots 
\end{array}\right]\in\RR^{\infty\times \infty},
\end{align*}
then it is easy to see $0\preceq M\preceq 4I$. Without loss of generality, we assume $n$ is even, otherwise we can consider the case of $n-1$. Let $E_1\triangleq\{j:1\leq j\leq n/2\}$ and $E_2\triangleq\{j:n/2 < j\leq n\}$, and let
\begin{align}
    f_i(x)=\begin{cases}
    \frac{\mu}{2}\|x\|^2+\frac{L-\mu}{4}\left([x]_1^2+\sum_{r\geq 1}([x]_{2r}-[x]_{2r+1})^2-2\lambda [x]_{1}\right),&\text{if }i\in E_1;\\
    \frac{\mu}{2}\|x\|^2+\frac{L-\mu}{4}\sum_{r\geq 1}([x]_{2r-1}-[x]_{2r})^2,&\text{if }i\in E_2.
    \end{cases}
\end{align}
where $\lambda\in\RR$ is to be specified.
It is easy to see that $[x]_1^2+\sum_{r\geq 1}([x]_{2r}-[x]_{2r+1})^2-2\lambda [x]_{1}$ and $\sum_{r\geq 1}([x]_{2r-1}-[x]_{2r})^2$ are convex and $4$-smooth.
More importantly, all $f_i$'s defined above are zero-chain functions and satisfy
\begin{align}\label{eqn:jvisnvsdfads}
    \prog(\nabla f_i(x))
    \begin{cases}
    =\prog(x) +1,&\text{if } \{\prog(x) \text{ is even and } i\in E_1\}\cup\{\prog(x) \text{ is odd and }i\in E_2\};\\
    \leq \prog(x), &\text{otherwise}.
    \end{cases}
\end{align}
We further have $f(x)=\frac{1}{n}\sum_{i=1}^nf_i(x)=\frac{\mu}{2}\|x\|^2+\frac{L-\mu}{8}\left(x^\top Mx-2\lambda [x]_1\right)$.
For the functions defined above, we also establish that 
\begin{lemma}\label{lem:sc-communi}
Denote $\kappa \triangleq L/\mu\geq 1$, then it holds that for any $x$ and $r\geq 1$ satisfying $\prog(x)\leq r$,
\begin{align*}
f(x)-\min_{x}f(x) \geq \frac{\mu}{2} \left(1-\frac{2}{\sqrt{\kappa}+1}\right)^{2r}\|x^{(0)}-x^\star\|^2.
\end{align*}
\end{lemma}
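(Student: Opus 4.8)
\textbf{Proof plan for Lemma \ref{lem:sc-communi}.}

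The plan is to recognize that the quadratic $f(x)=\frac{\mu}{2}\|x\|^2+\frac{L-\mu}{8}(x^\top Mx - 2\lambda[x]_1)$ is exactly the classical worst-case function used in lower bounds for strongly-convex smooth optimization (the ``tridiagonal'' example of Nesterov), and that the key quantitative fact is: since $0\preceq M\preceq 4I$, the Hessian of $f$ is $\mu I + \frac{L-\mu}{4}M$, which satisfies $\mu I \preceq \nabla^2 f \preceq L I$, so $f\in\cF_{L,\mu}$. The unique minimizer $x^\star$ of $f$ solves $\big(\mu I + \frac{L-\mu}{4}M\big)x^\star = \frac{L-\mu}{4}\lambda e_1$, and it is a standard computation (e.g.\ \cite{nesterov2003introductory}) that the coordinates of $x^\star$ decay geometrically: $[x^\star]_k = q^k \cdot (\text{const})$ where $q = \frac{\sqrt\kappa - 1}{\sqrt\kappa+1}$, equivalently $1-q = \frac{2}{\sqrt\kappa+1}$.

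The core steps I would carry out are as follows. First, since $x^{(0)}=0$ (as assumed at the start of the appendix), the claim becomes $f(x)-f^\star \ge \frac{\mu}{2}(1-\tfrac{2}{\sqrt\kappa+1})^{2r}\|x^\star\|^2 = \frac{\mu}{2}q^{2r}\|x^\star\|^2$. Second, use $\mu$-strong convexity of $f$ around its minimizer: $f(x)-f^\star \ge \frac{\mu}{2}\|x-x^\star\|^2$. Third, observe that if $\prog(x)\le r$, then $[x]_k = 0$ for all $k > r$, so $\|x-x^\star\|^2 \ge \sum_{k>r}[x^\star]_k^2$. Fourth, lower-bound the tail sum: using the geometric decay $[x^\star]_k = [x^\star]_1 q^{k-1}$ (or the appropriate indexing), we get $\sum_{k>r}[x^\star]_k^2 = q^{2r}\sum_{k\ge 1}[x^\star]_k^2 = q^{2r}\|x^\star\|^2$ (with a minor adjustment depending on whether the first coordinate of $x^\star$ is indexed from $0$ or $1$; the paper's convention with $[x]_1$ appearing linearly suggests the geometric factor works out cleanly to $q^{2r}$). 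Combining these yields $f(x)-f^\star \ge \frac{\mu}{2}q^{2r}\|x^\star\|^2$, which is the claim since $\|x^{(0)}-x^\star\| = \|x^\star\|$.

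The main obstacle is the explicit solution of the infinite tridiagonal linear system and verifying the exact geometric decay rate $q = \frac{\sqrt\kappa-1}{\sqrt\kappa+1}$ of $x^\star$'s coordinates, together with getting the tail-sum bound with the precise constant (no lossy slack), so that the exponent comes out as $q^{2r}$ rather than something weaker. This is a known computation — one posits $[x^\star]_k = a\, q^k$, plugs into the recurrence $-[x^\star]_{k-1} + (2 + \frac{4\mu}{L-\mu})[x^\star]_k - [x^\star]_{k+1} = 0$ coming from the off-diagonal rows, and checks that $q$ is the root in $(0,1)$ of $q^2 - (2+\frac{4\mu}{L-\mu})q + 1 = 0$, which simplifies using $\kappa = L/\mu$ to exactly $q = \frac{\sqrt\kappa-1}{\sqrt\kappa+1}$. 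One must also handle square-summability (so $\|x^\star\|<\infty$ and the minimizer genuinely exists in $\ell_2$), which is immediate since $|q|<1$. Once $x^\star$ is pinned down, the rest is the short chain of inequalities above.
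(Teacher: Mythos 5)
Your proposal is correct and follows essentially the same route as the paper: explicitly solving the tridiagonal optimality system to get $[x^\star]_j=\lambda q^j$ with $q=(\sqrt\kappa-1)/(\sqrt\kappa+1)$, lower-bounding $\|x-x^\star\|^2$ by the tail sum $\sum_{j>r}\lambda^2q^{2j}=q^{2r}\|x^{(0)}-x^\star\|^2$ when $\prog(x)\le r$, and finishing with $\mu$-strong convexity. The indexing detail you flagged does work out cleanly exactly as you suspected.
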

\begin{proof}
The minimum $x^\star$ of function $f$ satisfies $\left(\frac{L-\mu}{4}M+\mu\right)x-\lambda\frac{L-\mu}{4}e_1=0$, which is equivalent to 
\begin{align}
    \frac{2(\kappa+1)}{\kappa-1}[x]_1-[x]_2&=\lambda ,
    \nonumber \\
    -[x]_{j-1}+ \frac{2(\kappa+1)}{\kappa-1}[x]_j-[x]_{j+1}&=0,\quad \forall\,j\geq 2.\label{eqn:vpwnmfwq}
\end{align}
Note that $q=(\sqrt{\kappa}-1)/(\sqrt{\kappa}+1)$ the only root of the equation $q^2- \frac{2(\kappa+1)}{\kappa-1}q+1=0$ that is smaller than $1$. Then it is straight forward to check $x^\star =\left([x^\star]_j=\lambda q^j\right)_{j\geq 1}$
satisfies \eqref{eqn:vpwnmfwq}. By the strong convexity of $f$, $x^\star$ is the unique solution. Therefore, when $\prog(x)\leq r$, it holds that 
\begin{align*}
    \|x-x^\star\|^2\geq \sum\limits_{j=r+1}^\infty \lambda^2q^{2j}=\lambda^2\frac{q^{2(r+1)}}{1-q^2}=q^{2r}\|x^{(0)}-x^\star\|^2.
\end{align*}
Finally, using the strong convexity of $f$ leads to the conclusion.
\end{proof}

In the proof of Lemma \ref{lem:sc-communi}, we have
\begin{equation*}
    \|x^{(0)}-x^\star\|^2=\lambda^2\sum_{j=1}^\infty q^{2j}=\lambda^2\frac{q^2}{1-q^2}
\end{equation*}
Therefore, for any given $\Delta_x>0$, letting $\lambda=\sqrt{((1-q^2) {\color{black}\Delta_x})/{q^2}}$ results in $\|x^{(0)}-x^\star\|^2=\Delta_x$. Therefore, our construction ensures $\{f_i\}_{i=1}^n\in \cF_{L,\mu}^{\Delta_x}$.

\vspace{2mm}
\noindent (Step 2.)  We consider the gradient oracles that return the full-batch gradients, \ie, $O_i(x)=\nabla f_i(x)$, $\forall\,x$ and $1\leq i\leq n$, meaning that there is no gradient stochasticity  during the entire optimization procedure.
As for the construction of $\omega$-unbiased compressors, we consider $\{C_i\}_{i=1}^n$ to be the scaled random sparsification operators with shared randomness (\ie, all workers share the same random seed), where the scaling is imposed to ensure unbiasedness.
Specifically, in a compressed communication step, each coordinate on all workers has a probability $(1+\omega)^{-1}$ to be chosen to be communicated, and once being chosen, its $(1+\omega)$-multiplication  will be transmitted. The  indexes of chosen coordinates are identical across all workers due to the shared randomness and are sampled uniformly randomly per communication. 
Since each coordinate  has probability $(1+\omega)^{-1}$ to be chosen, we have  for any $x\in\RR^d$
\begin{equation*}
   \EE[C_i(x)]=\EE\left[\left((1+\omega)[x]_k\mathds{1}\{k \text{ is chosen}\}\right)_{k\geq 1}\right]=\left((1+\omega)[x]_k\PP(k \text{ is chosen})\right)_{k\geq 1}=x,
\end{equation*}
and 
\begin{align*}
   \EE[\|C_i(x)-x\|^2]=&\sum_{k\geq1}\EE\left[\left((1+\omega)[x]_k\mathds{1}\{k \text{ is chosen}\}-[x]_k\right)^2\right]\\
   =&\sum_{k\geq 1}[x]_k^2\left(\omega^2\PP(k \text{ is chosen})+\PP(k \text{ is not chosen})\right)=\omega\sum_{k\geq 1} [x]_k^2=\omega \|x\|^2.
\end{align*}
Therefore, the above construction gives $\{C_i\}_{i=1}^n\subseteq\cU_\omega$.

\vspace{2mm}
\noindent  (Step 3.) For any $t\in \NN_{+}$ and $1\leq i\leq n$, let ${v}^{(t)}_i$ be the vector that worker $i$ aims to send to the server at the $t$-th communication step.
Due to communication compression, the server can only receive the compressed message $C_i(v^{(t)}_{i})$, which we denote by $\hat{v}^{(t)}_{i}\triangleq C_i(v^{(t)}_{i})$.
Similarly, we let $u^{(t)}$ be the vector that   the server broadcasts to all workers at the $t$-th communication.

Since the algorithmic $\hat{x}^{(t)}$ output by the server is lies in the linear space spanned by received messages, we have 
\begin{equation}\label{eqn:gjvowfnoqwmfgqw=0s}
    \prog(\hat{x}^{(t)})\leq \max_{1\leq r\leq t}\max_{1\leq i\leq n}\prog(\hat{v}^{(r)}_{i}).
\end{equation}
We next bound $B^{(t)}\triangleq \max_{1\leq r\leq  t}\max_{1\leq i\leq n}\prog(\hat{v}_i^{(r)})$ ($B^{(0)}:=0$) by showing that $\{B^{(t)}\}_{t=0}^\infty$ satisfies Lemma \ref{lem:small-prob} with $p=(1+\omega)^{-1}$.

Since algorithm $A$ satisfies the zero-respecting property, due to \eqref{eqn:jvisnvsdfads}, each worker can only achieve one more non-zero coordinate locally by updates based on local gradients alone. In other words, upon receiving messages $\{u_i^{(r)}\}_{r=1}^{t-1}$ from the server,  worker $i$ can at most increase the number of non-zero coordinates of its local vectors by $1$. As a result, we have
\begin{equation}\label{eqn:gjvowfnoqwmfgqw=1s}
    \prog(v_i^{(t)})\leq \max_{1\leq r< t}\prog(u^{(r)}_{i})+1.
\end{equation}
Using the third point of Definition \ref{def:zero-repsect}, we have
\begin{equation}\label{eqn:gjvowfnoqwmfgqw=2s}
    \prog(u^{(r)}_i)\leq\max_{1\leq q\leq r} \max_{1\leq j\leq n}\prog(\hat{v}_j^{(q)}).
\end{equation}
Combining \eqref{eqn:gjvowfnoqwmfgqw=1s} and \eqref{eqn:gjvowfnoqwmfgqw=2s}, we reach 
\begin{equation}\label{eqn:ghoewjgoegqwegftqw}
    \max_{1\leq r\leq t} \max_{1\leq i\leq n}\prog({v}_i^{(r)})\leq \max_{1\leq r< t}\max_{1\leq i\leq n}\prog(\hat{v}_i^{(r)})+1=B^{(t-1)}+1.
\end{equation}

It follows the definition of the constructed $C_i$ in Step 2 that  $\max_{1\leq i\leq n}\prog(\hat{v}^{(t)}_{i})\leq  \max_{1\leq i\leq n}\prog(v^{(t)}_{i})$ and thus 
\begin{equation}
    B^{(t)}\leq \max_{1\leq r\leq t} \max_{1\leq i\leq n}\prog({v}_i^{(r)})\leq \max_{1\leq r< t}\max_{1\leq i\leq n}\prog(\hat{v}_i^{(r)})+1=B^{(t-1)}+1.
\end{equation}
Furthermore, since  compressors $\{C_i\}_{i=1}^n$ are with the shared randomness, they
have a probability $\omega/(1+\omega)$ to zero out the $\max_{1\leq i\leq n}\prog(v^{(t)}_{i})$-th coordinate in the communication. As a result, we have 
\begin{equation}
    \PP\left(\max_{1\leq i\leq n}\prog(\hat{v}^{(t)}_{i})< \max_{1\leq i\leq n}\prog(v^{(t)}_{i})\right)\geq \omega /(1+\omega).
\end{equation}
On the event $\max_{1\leq i\leq n}\prog(\hat{v}^{(t)}_{i})< \max_{1\leq i\leq n}\prog(v^{(t)}_{i})$ of probability at least $\omega/(1+\omega)$, we have 
\begin{equation*}
    B^{(t)}=\max\{B^{(t-1)},\max_{1\leq i\leq n}\prog(\hat{v}_i^{(t)})\}\leq \max\{B^{(t-1)}, \max_{1\leq i\leq n}\prog(v^{(t)}_{i})-1\}\leq B^{(t-1)}
\end{equation*}
where the last inequality is due to \eqref{eqn:ghoewjgoegqwegftqw}.
To summarize, we have $B^{(t)}\leq B^{(t-1)}+1$ and $\PP(B^{(t)}\leq B^{(t-1)}\mid \{B^{(r)}\}_{r=0}^{t-1})\geq \omega/(1+\omega)$.

Using Lemma \ref{lem:small-prob}, we have that for any $t\geq (1+\omega)^{-1}$, it holds with probability at least $1-e^{-1}$ that 
$B^{(t)}\leq et/(1+\omega)$ and consequently $\prog(\hat{x}^{(t)})\leq et/(1+\omega)$ due to \eqref{eqn:gjvowfnoqwmfgqw=0s}.

\vspace{2mm}
\noindent  (Step 4.) Using Lemma \ref{lem:sc-communi} and  that $\prog(\hat{x}^{(t)})\leq et/(1+\omega)$  with probability at least $1-e^{-1}$, we obtain
\begin{align}\label{eqn:vncxiv}
    \EE[f(\hat{x}^{(t)})]-\min_x f(x)\geq & \frac{(1-e^{-1})\mu}{2}\left(1-\frac{2}{\sqrt{\kappa}+1}\right)^{2et/(1+\omega)}\Delta_x\\
    =&\Omega\left(\mu\Delta_x \exp\left({-\frac{4et}{(\sqrt{\kappa}+1)(1+\omega)}}\right)\right).
\end{align}
Therefore, to ensure $\EE[f(\hat{x}^{(T)})]-\min_x f(x)\leq \epsilon$ with $0<\epsilon\leq\frac{(1-e^{-1})\mu}{2}\left(1-\frac{2}{\sqrt{\kappa}+1}\right)^{2e}\Delta_x \triangleq c_\kappa \mu\Delta_x$, \eqref{eqn:vncxiv} implies that
the lower bound
$T=\Omega((1+\omega)\sqrt{\kappa}\ln(\mu \Delta_x/\epsilon))$.

\vspace{2mm}
\noindent\textbf{Example 2.} The lower bound $\Omega(\sigma^2/{\mu n\epsilon})$ follows the same analysis as the lower bound $\Omega(\sigma^2/(\mu n T))$ of \cite[Theorem 3]{Yuan2022RevistOC}. We thus omit the proof here.

\subsubsection{Generally-Convex Case}\label{app:lower-convex}
\textbf{Example 1.}
In this example, we prove the lower bound $\Omega((1+\omega)(L\Delta_x/\epsilon)^{1/2})$.

\vspace{2mm}
\noindent (Step 1.) We assume the variable $x\in\RR^d$ where $d$ can be sufficiently large and is to be determined.  Let $M $ be
\begin{align*}
    M=\left[\begin{array}{ccccc}
2 & -1 & & & \\
-1 & 2 & -1 & & \\
& \ddots & \ddots & \ddots &\\
& &  -1 & 2 & -1\\
& & & -1 & 2
\end{array}\right]\in\RR^{d\times d},
\end{align*}
then it is easy to see $0\preceq M\preceq 4I$. Similar to example 1 of Appendix \ref{app:lower-sconvex}, we assume $n$ is even and let $E_1\triangleq\{j:1\leq j\leq n/2\}$, $E_2\triangleq\{j: n/2 < j\leq n\}$, and
\begin{align}
    f_i(x)=\begin{cases}
    \frac{L}{4}\left([x]_1^2+\sum_{r\geq 1}([x]_{2r}-[x]_{2r+1})^2-2\lambda [x]_{1}\right),&\text{if }i\in E_1;\\
    \frac{L}{4}\sum_{r\geq 1}([x]_{2r-1}-[x]_{2r})^2,&\text{if }i\in E_2.
    \end{cases}
\end{align}
where $\lambda\in\RR$ is to be specified.
It is easy to see that $[x]_1^2+\sum_{r\geq 1}([x]_{2r}-[x]_{2r+1})^2-2\lambda [x]_{1}$ and $\sum_{r\geq 1}([x]_{2r-1}-[x]_{2r})^2$ are convex and $4$-smooth, which implies all $f_i$ are $L$-smooth.
We further have $f(x)=\frac{1}{n}\sum_{i=1}^nf_i(x)=\frac{L}{8}\left(x^\top Mx-2\lambda [x]_1\right)$. 
The $f_i$s defined above are also zero-chain functions satisfying \eqref{eqn:jvisnvsdfads}.

Following \cite{nesterov2003introductory}, it is easy to verify that the optimum of $f$ satisfies
\begin{equation*}
    x^\star = \left(\lambda\left(1-\frac{k}{d+1}\right)\right)_{1\leq k\leq d}\quad \text{and}\quad f(x^\star)=\min_x f(x)=-\frac{\lambda^2 Ld }{8(d+1)}.
\end{equation*}
More generally, it holds for any $0\leq k\leq d$ that
\begin{equation}\label{eqn:k-optimum}
    \min_{x: \,\prog(x)\leq k}f(x)=-\frac{\lambda^2 Lk}{8(k+1)}.
\end{equation}
Since $\|x^{(0)}-x^\star\|^2=\frac{\lambda^2}{(d+1)^2}\sum_{k=1}^dk^2=\frac{\lambda^2d(2d+1)}{6(d+1)}\leq \frac{\lambda^2 d}{3}$, letting $\lambda=\sqrt{3\Delta_x/d}$, we have $\{f_i\}_{i=1}^n\in \cF_{L,0}^{\Delta_x}$.

\vspace{2mm}
\noindent (Step 2.) Same as step 2 of example 1 of the strongly-convex case, we consider the full-batch gradient oracles and the scaled random
sparsification $\{C_i\}_{i=1}^n$ with shared randomness.

\vspace{2mm}
\noindent (Step 3.) Following the same argument as step 3 of example 1 of the strongly-convex case, we have that for any $t\geq (1+\omega)^{-1}$, it holds with probability at least $1-e^{-1}$ that 
 $\prog(\hat{x}^{(t)})\leq et/(1+\omega)$.

\vspace{2mm}
\noindent (Step 4.)
Thus, combining \eqref{eqn:k-optimum}, we have
\begin{align}
    \EE[f(\hat{x}^{(t)})]-\min_x f(x)\geq &(1-e^{-1})\frac{\lambda^2L}{8}\left(\frac{d}{d+1}-\frac{et/(1+\omega)}{1+et/(1+\omega)}\right)\\
=& (1-e^{-1})\frac{3L\Delta_x}{8d}\left(\frac{d}{d+1}-\frac{et/(1+\omega)}{1+et/(1+\omega)}\right)
\end{align}
Letting $d=1+et/(1+\omega)$, we further have 
\begin{equation}
    \EE[f(\hat{x}^{(t)})]-\min_x f(x)\geq \frac{3(1-e^{-1})L\Delta_x}{16et(1+\omega)^{-1}(1+2et(1+\omega)^{-1})}=\Omega\left(\frac{(1+\omega)^2L\Delta_x}{t^2}\right).\label{eqn:vxinbsd}
\end{equation}
Therefore, to ensure $\EE[f(\hat{x}^{(T)})]-\min_x f(x)\leq \epsilon$ with $0<\epsilon\leq\frac{3(1-e^{-1})}{16e(1+2e)}L\Delta_x\triangleq cL\Delta_x$, \eqref{eqn:vncxiv} implies that
the lower bound
$T=\Omega((1+\omega)(L\Delta /\epsilon)^\frac{1}{2})$.

\vspace{2mm}
\noindent \textbf{Example 2.}
The lower bound $\Omega(\Delta \sigma^2/(n\epsilon^2))$ is proved by reducing the optimization problem to a statistical testing problem. Then the lower bound for the optimization problem can transformed fro the power limitation of the statistical testing problem. Such a proof idea  has appeared in \cite[Theorem 1]{Agarwal2010InformationTheoreticLB}, \cite[Theorem 5.2.10]{duchi2016introductory}, \cite[Theorem 3]{Yuan2022RevistOC}. 

We consider all functions $f_i=f$ are homogeneous and  $C_i$ are identity, meaning that there is no compression error in the optimization procedure. We then choose two functions $f^1,\,f^{-1}\in\cF_{L,0}^\Delta$ which are close enough to each other so that $f^1$ and $f^{-1}$ are hard to distinguish. The indistinguishability argument follows the standard Le Cam's method in hypothesis testing in statistics.
Next, we carefully show that the indistinguishability between $f^1$ and $f^{-1}$ can be properly translated  into the lower bound of the algorithmic performance.

We consider the $1$-dimensional case, \ie, $d=1$.
Let 
\[
f^{v}= \begin{cases}
    \sigma p (x-v\sqrt{\Delta_x})-\frac{\sigma^2 p^2}{2L },&\text{if }x>v\sqrt{\Delta_x}+\frac{\sigma p}{L};\\
    \frac{L}{2}(x-v\sqrt{\Delta_x})^2,&\text{if }x\in[v\sqrt{\Delta_x} -\frac{\sigma p }{L},v\sqrt{\Delta_x}+\frac{\sigma p }{L}];\\
    -\sigma p (x-v\sqrt{\Delta_x})-\frac{\sigma^2 p^2}{2L},&\text{if }x<v\sqrt{\Delta_x}-\frac{\sigma p}{L},
\end{cases}
\]
where $v\in\{\pm 1\}$, and $p\in[0,\min\{\frac{4}{5}, \frac{L\sqrt{\Delta}}{2\sigma}\}]$ is a parameter to be determined. Clearly, 
\[
\nabla f^{v}= \begin{cases}
    \sigma p ,&\text{if }x>v\sqrt{\Delta_x} +\frac{\sigma p}{L};\\
    L(x-v\sqrt{\Delta_x}),&\text{if }x\in[v\sqrt{\Delta_x} -\frac{\sigma p }{L},v\sqrt{\Delta_x}+\frac{\sigma p }{L}];\\
    -\sigma p ,&\text{if }x<v\sqrt{\Delta_x}-\frac{\sigma p}{L},
\end{cases}
\]
so $f^v$ is convex and $L$-smooth.
The optimum $x^{v,\star}$ of $f^v$ is $v\sqrt{\Delta_x}$ with function value $f^{v,\star}\triangleq\min_{x\in\RR^d} f^{v}(x)=0$. Therefore, for each $v\in\{\pm1\}$, our construction ensures $\{f_i\}_{i=1}^n\in \cF_{L,0}^{\Delta_x}$ when all $f_i$ are chosen to be $f^v$ for any $v\in\{\pm 1\}$.

For each $v\in\{\pm1\}$ and $1\leq i\leq n$, we construct the stochastic gradient oracle as follows:
\begin{equation*}
    O_i^v(x)=\begin{cases}
        \sigma & \text{with probability } \frac{1}{2}+\frac{\nabla f^v(x)}{2\sigma };\\
        -\sigma & \text{with probability } \frac{1}{2}-\frac{\nabla f^v(x)}{2\sigma };
    \end{cases}
\end{equation*}
that is, the query output at given point $x$ follows a Bernoulli distribution.
Since $|\nabla f^v|\leq \sigma p$, we have $\{\frac{1}{2}+\frac{\nabla f^v(x)}{2\sigma },\frac{1}{2}-\frac{\nabla f^v(x)}{2\sigma }\}\subseteq[\frac{1-p}{2},\frac{1+p}{2}]\subseteq[0,1]$. So the stochastic oracles are well-defined. Furthermore, it is easy to find that 
\begin{equation*}
    \EE[O_i^v(x)]=\left(\frac{1}{2}+\frac{\nabla f^v(x)}{2\sigma }\right) \sigma-\left(\frac{1}{2}-\frac{\nabla f^v(x)}{2\sigma }\right) \sigma=\nabla f^v(x)
\end{equation*}
and 
\begin{equation*}
    \EE[\|O_i^v(x)-\nabla f^v(x)\|^2]=\left(\frac{1}{2}+\frac{\nabla f^v(x)}{2\sigma }\right) (\sigma-\nabla f^v(x))^2+\left(\frac{1}{2}-\frac{\nabla f^v(x)}{2\sigma }\right) (\sigma+\nabla f^v(x))^2=\sigma^2-|\nabla f^v(x)|^2\leq \sigma^2.
\end{equation*}
Therefore, for each $v\in\{\pm1\}$, our construction ensures $\{O_i^v\}_{i=1}^n\in \cO_{\sigma^2}$.

Let $\vS^{(t)}:=\{(\vy^{(r)}\triangleq(y^{(r)}_1,\dots,y^{(r)}_n),\tilde{\vg}(\vy^{(r)})\triangleq (\tilde{g}_1(y^{(r)}_1),\dots,\tilde{g}_n(y^{(r)}_n))\}_{r=0}^{t-1}$ be set of variables corresponding to the sequence of the inputs and outputs of gradient queries on all workers.  Let $P^{v,(t)}:=P(\vS^{(t)}\mid f^v)$ be the distribution of  $\vS^{(t)}$ if the underlying functions are chosen as $f_1=\cdots=f_n=f^v$.

Since $\Delta_x \geq 2\sigma p /L$ by the choice of  $p$, we have
\begin{align*}
    &\sup \left\{\delta \geq 0: (f^v(x) -f^{v,\star}-\delta)( f^{-v}(x) -f^{-v,\star}-\delta)\leq 0,\forall\, x \right\}\\
=&f^1(0)-f^{1,\star}=\sigma p\left(\sqrt{\Delta_x}- \frac{\sigma p}{L}\right)\geq \frac{\sigma\sqrt{\Delta_x} p}{2}.
\end{align*}                   
Therefore, using \cite[Proposition 5.1.6, Theorem 5.2.4]{duchi2016introductory}, we have, for any optimization procedure $\hat{x}^{(t)}$ based on the gradient queries, that
\begin{align}\label{eqn:vbmidcb}
    \max_{v\in\{\pm1\}}\EE[f^{v}(\hat{x}^{(t)})]-f^{v,\star}\geq \frac{\sigma \sqrt{\Delta_x} p}{4} \left(1-\sqrt{\frac{1}{2}D_\mathrm{KL}(P^{1,(t)}\,\|\,P^{-1,(t)}})\right).
\end{align}
By the construction of the gradient oracles, for any query point $x$, the query output $O_i^v(x)$ follows a Bernoulli distribution whose mean lies in $[\frac{1-p}{2},\frac{1+p}{2}]$.  Therefore, we have for any $x$ and $1\leq i\leq n$,
\begin{equation}
    \label{eqn:bvndfisvsd}
    D_\mathrm{KL}(P(\tilde{g}_i(x)\mid f^1)\|P(\tilde{g}_i(x)\mid f^{-1}))
    \leq D_\mathrm{KL}\left(\mathrm{Ber}\left(\frac{1+p}{2}\right)\,\Big\|\,\mathrm{Ber}\left(\frac{1-p}{2}\right)\right) =p \ln(\frac{1+p}{1-p})\leq 3p^2,
\end{equation}
where the last inequality is because $p\leq 4/5$. Using \eqref{eqn:bvndfisvsd}, by the independence of the oracles and the additivity of the KL divergence, we have 
\begin{equation}\label{eqn:jvomdergs}
    D_\mathrm{KL}(P^{1,(t)}\,\|\,P^{-1,(t)})=\sum_{i=1}^n\sum_{r=1}^t D_\mathrm{KL}(P(\tilde{g}_i(y_i^{(r)})\mid f^1)\|P(\tilde{g}_i(y_i^{(r)})\mid f^{-1}))\leq 3nT p^2.
\end{equation}
Therefore, plugging \eqref{eqn:jvomdergs} into \eqref{eqn:vbmidcb} and setting $p=\min\{\frac{L\sqrt{\Delta_x}}{2\sigma}, \frac{1}{\sqrt{6nt}}\}$, we reach 
\begin{equation*}
    \max_{v\in\{\pm1\}}\EE[f^{v}(\hat{x}^{(t)})]-f^{v,\star}\geq \frac{\sigma\sqrt{\Delta_x} p}{8}\geq \min\left\{\frac{\sigma \Delta_x}{8\sqrt{6nt}}, \frac{L\Delta_x}{16}\right\}.
\end{equation*}
Furthermore, for any $\epsilon< \frac{L\Delta_f}{16}$ and any algorithm $A$, there is one instance associated with either $v=1$ or $v=-1$ such that at at least $\Omega(\Delta \sigma^2/(n\epsilon^2))$ gradient queries are needed to reach $\EE[f^{v}(\hat{x})]-f^{v,\star}\leq \epsilon$.

\subsubsection{Non-Convex Case}

We first state some key non-convex zero-chain functions that will be used to facilitate the proof for the non-convex case.
\begin{lemma}[Lemma 2 of \cite{Arjevani2019LowerBF}]\label{lem:basic-fun}
Let function 
\begin{equation*}
    h(x):=-\psi(1) \phi([x]_{1})+\sum_{j=1}^{d-1}\Big(\psi(-[x]_j) \phi(-[x]_{j+1})-\psi([x]_j) \phi([x]_{j+1})\Big)
\end{equation*}
where for $\forall\, z \in \mathbb{R},$
$$
\psi(z)=\begin{cases}
0 & z \leq 1 / 2; \\
\exp \left(1-\frac{1}{(2 z-1)^{2}}\right) & z>1 / 2, 
\end{cases} \quad \quad \mbox{and} \quad \quad  \phi(z)=\sqrt{e} \int_{-\infty}^{z} e^{-\frac{1}{2} t^{2}} \mathrm{d}t.
$$
The function $h(x)$ satisfies the following properties:
\begin{enumerate}
    \item $h$ is zero-chain, \emph{\ie}, $\prog(\nabla h(x))\leq \prog(x)+1$ for all $x\in\RR^d$.
    \item $h(x)-\inf_{x} h(x)\leq \Delta_0 d$, $\forall\,x\in\RR^d$ with $\Delta_0=12$.
    \item $h$ is $L_0$-smooth with $L_0=152$.
    \item $\|\nabla h(x)\|_\infty\leq G_\infty $, $\forall\,x\in\RR^d$ with $G_\infty = 23$.
    \item $\|\nabla h(x)\|_\infty\ge 1 $ for any $x\in\RR^d$ with $[x]_d=0$. 
\end{enumerate}
\end{lemma}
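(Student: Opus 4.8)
## Proof plan for Lemma \ref{lem:basic-fun}

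\textbf{Overview.} This lemma is quoted verbatim from \cite{Arjevani2019LowerBF}, so the plan is to reconstruct their argument: the function $h$ is a smooth surrogate of the classical Nesterov ``chain'' function, built from the bump pieces $\psi$ and $\phi$, and each of the five properties follows by direct estimation of these two scalar functions and their derivatives, together with the telescoping structure of the sum. I would first record the basic scalar facts about $\psi$ and $\phi$, and then derive each property in turn.

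\textbf{Step 1: scalar bounds on $\psi,\phi$.} The plan is to verify by elementary calculus that $\psi$ is nonnegative, nondecreasing, smooth (all derivatives vanish at $z=1/2$ so the piecewise definition is $C^\infty$), with $0\le\psi(z)\le e$ and $|\psi'(z)|,|\psi''(z)|$ uniformly bounded; and similarly that $\phi$ is nondecreasing, $\phi(z)\in(0,\sqrt{2\pi e}\,)$ with $\phi(z)\ge 1$ for $z\ge 1$ (say), $|\phi'(z)|=\sqrt{e}\,e^{-z^2/2}\le\sqrt{e}$, and $|\phi''|$ bounded. These are the only inputs needed; I would simply state the constants (following the reference, these give $\Delta_0=12$, $L_0=152$, $G_\infty=23$) rather than reproduce every numerical optimization.

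\textbf{Step 2: the five properties.}
\begin{itemize}
\item \emph{Zero-chain (1).} Each summand in $h$ couples only $[x]_j$ and $[x]_{j+1}$, so $\partial h/\partial[x]_k$ depends only on $[x]_{k-1},[x]_k,[x]_{k+1}$; moreover $\psi(0)=0$ kills the terms that would otherwise activate coordinate $k+1$ before coordinate $k$ is nonzero, which is exactly the statement $\prog(\nabla h(x))\le\prog(x)+1$.
\item \emph{Bounded range (2).} Bound $h$ below by noting each product $\psi(\pm[x]_j)\phi(\pm[x]_{j+1})$ contributes at most a fixed constant in absolute value; telescoping over $j$ gives $h(x)-\inf h\le\Delta_0 d$ with $\Delta_0=12$.
\item \emph{Smoothness (3).} The Hessian is block-tridiagonal with entries built from $\psi',\psi'',\phi',\phi''$, each bounded by Step 1; bounding the operator norm of such a tridiagonal matrix (e.g.\ by Gershgorin or $\|\cdot\|_\infty$-$\|\cdot\|_1$ interpolation) yields $L_0=152$.
\item \emph{Gradient sup-bound (4).} Each component of $\nabla h$ is a sum of a bounded number of terms of the form $\psi'\phi$ or $\psi\phi'$, each bounded by Step 1; hence $\|\nabla h(x)\|_\infty\le G_\infty=23$.
\item \emph{Non-stationarity off the last coordinate (5).} If $[x]_d=0$, look at the largest index $j$ with $[x]_j\ne 0$ (or $j=1$ if $x=0$): the term $\psi([x]_{j-1})\phi([x]_j)$ — or $-\psi(1)\phi([x]_1)$ when $j=1$ — forces $|\partial h/\partial[x]_{j+1}|\ge 1$ because $\psi$ is bounded below away from $0$ on the relevant argument and $|\phi'|$ is bounded below on a neighborhood of $0$; a short case analysis on the sign pattern of $[x]_j$ gives $\|\nabla h(x)\|_\infty\ge 1$.
\end{itemize}

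\textbf{Main obstacle.} The routine but genuinely delicate part is property (5): one must check that \emph{for every} $x$ with $[x]_d=0$ there is some coordinate whose partial derivative has magnitude at least $1$, handling the cases $[x]_j>1/2$, $[x]_j\in(0,1/2]$, and $[x]_j<0$ separately and tracking which of the two $\psi$-$\phi$ cross terms survives; this is where the specific shapes of $\psi$ (flat then sharply increasing) and $\phi$ (strictly increasing through $0$ with derivative $\ge\sqrt e\,e^{-1/2}=1$ at $0$) are used in an essential way. The smoothness constant in (3) is also somewhat tedious since it requires a careful bound on the tridiagonal Hessian, but it is mechanical once the scalar second-derivative bounds from Step 1 are in hand. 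Since the statement is cited from \cite{Arjevani2019LowerBF}, I would ultimately defer the full numerical verification to that reference and only sketch the structure above.
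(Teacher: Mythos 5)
This lemma is imported verbatim from Arjevani et al.\ (it is stated as ``Lemma 2 of \cite{Arjevani2019LowerBF}'') and the paper supplies no proof of it, so there is nothing internal to compare against; the closest thing is the paper's proof of the companion Lemma~\ref{lem:basic-fun2}, which bounds the tridiagonal Hessian of $h_1,h_2$ via the Schur test using the scalar bounds $0\le\psi\le e$, $0\le\psi'\le\sqrt{54/e}$, $|\psi''|\le 32.5$, $0\le\phi\le\sqrt{2\pi e}$, $0\le\phi'\le\sqrt e$, $|\phi''|\le 1$. Your Steps~1--2 for properties (1)--(4) reconstruct exactly this standard route (the only-neighbor coupling plus $\psi\equiv 0$ on $(-\infty,1/2]$ for the zero-chain property; per-term bounds $0\le\psi(\cdot)\phi(\cdot)\le e\sqrt{2\pi e}<12$ for property (2); the tridiagonal-Hessian operator-norm bound for (3); and the observation that at most one of $\psi(z),\psi(-z)$ is nonzero for (4)), so that part of the plan is sound and consistent with the paper's own technique.

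There is, however, a genuine gap in your sketch of property (5). You propose to take the \emph{largest} index $j$ with $[x]_j\neq 0$ and argue that $|\partial h/\partial[x]_{j+1}|\ge 1$. This fails: $\psi$ vanishes identically on $(-\infty,1/2]$, so if that last nonzero coordinate satisfies $0<|[x]_j|\le 1/2$ (e.g.\ $[x]_j=0.1$), then $\psi([x]_j)=\psi(-[x]_j)=0$ and
\begin{equation}
    \frac{\partial h}{\partial [x]_{j+1}}=-\psi(-[x]_j)\,\phi'(-[x]_{j+1})-\psi([x]_j)\,\phi'([x]_{j+1})=0,
\end{equation}
so the coordinate you point to is exactly stationary. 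The correct argument picks the \emph{smallest} index $j$ with $|[x]_j|<1$ (which exists because $[x]_d=0$; for $j=1$ the term $-\psi(1)\phi([x]_1)$ plays the role of a phantom coordinate $[x]_0=1$) and examines $\partial h/\partial[x]_j$ itself: every contribution to it is nonpositive because $\psi,\psi',\phi,\phi'\ge 0$, and the cross term $-\psi(\pm[x]_{j-1})\phi'(\pm[x]_j)$ is at most $-\psi(1)\cdot\sqrt{e}\,e^{-[x]_j^2/2}\le -1$ since $|[x]_{j-1}|\ge 1$ gives $\psi(\pm[x]_{j-1})\ge\psi(1)=1$ and $|[x]_j|<1$ gives $\phi'(\pm[x]_j)\ge\sqrt{e}\,e^{-1/2}=1$. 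With that substitution (and deferring the numerical constants to \cite{Arjevani2019LowerBF}, as you do), the plan is complete.
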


Similarly, if we split $h$ into two components, we have the following results:
\begin{lemma}\label{lem:basic-fun2}
Letting functions 
\begin{equation*}
    h_1(x):=-2\psi(1) \phi([x]_{1})+2\sum_{j \text{ even, } 0< j<d}\Big(\psi(-[x]_j) \phi(-[x]_{j+1})-\psi([x]_j) \phi([x]_{j+1})\Big)
\end{equation*}
and 
\begin{equation*}
    h_2(x):=2\sum_{j \text{ odd, } 0<j<d}\Big(\psi(-[x]_j) \phi(-[x]_{j+1})-\psi([x]_j) \phi([x]_{j+1})\Big),
\end{equation*}
then $h_1$ and $h_2$ satisfy  the following properties:
\begin{enumerate}
    \item $\frac{1}{2}(h_1+h_2)=h$, where $h$ is defined in Lemma \ref{lem:basic-fun}.
    \item $h_1$ and $h_2$ are zero-chain, \emph{\ie}, $\prog(\nabla h_i(x))\leq \prog(x)+1$ for all $x\in\RR^d$ and $i=1,2$. Furthermore, if $\prog(x)$ is odd, then $\prog(\nabla h_1(x))\leq \prog(x)$; if $\prog(x)$ is even, then $\prog(\nabla h_2(x))\leq \prog(x)$.
    \item $h_1$ and $h_2$ are also $L_0$-smooth with ${L_0}=152$. 
\end{enumerate}
\end{lemma}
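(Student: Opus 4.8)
## Proof Proposal for Lemma \ref{lem:basic-fun2}

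The plan is to verify the three properties by relating $h_1$ and $h_2$ directly to the single-component function $h$ from Lemma \ref{lem:basic-fun}, whose properties are already established, so that most of the work reduces to bookkeeping over the parity of indices.

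First I would verify property 1, the decomposition $\frac{1}{2}(h_1 + h_2) = h$. Adding the definitions, the leading term $-2\psi(1)\phi([x]_1)$ appears only in $h_1$, contributing $-\psi(1)\phi([x]_1)$ after the factor $\frac{1}{2}$, matching $h$. The summation terms: $h_1$ collects the terms indexed by even $j$ with $0 < j < d$, and $h_2$ collects the terms indexed by odd $j$ with $0 < j < d$; together these are exactly all $j$ with $0 < j < d$, i.e. $j = 1, \dots, d-1$. With the common factor $2$ and the outer $\frac{1}{2}$, this reproduces $\sum_{j=1}^{d-1}(\psi(-[x]_j)\phi(-[x]_{j+1}) - \psi([x]_j)\phi([x]_{j+1}))$, so the identity holds.

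Next I would establish property 2, the zero-chain structure. The key observation is that $\psi(z) = \psi'(z) = 0$ for $z \le 1/2$, so differentiating $h_1$, the partial $\partial h_1/\partial [x]_k$ depends only on coordinates $[x]_{k-1}$, $[x]_k$, $[x]_{k+1}$, and moreover a term $j \to j+1$ can ``activate'' coordinate $j+1$ only once coordinate $j$ exceeds $1/2$. Since $h_1$ has nonzero terms only for even $j$ (plus the $j=0$-like leading term touching coordinate $1$), a gradient query can push progress from an even coordinate to the next odd one but not from an odd coordinate onward through an $h_1$ term; hence if $\prog(x)$ is odd, $\prog(\nabla h_1(x)) \le \prog(x)$, and in general $\prog(\nabla h_1(x)) \le \prog(x) + 1$. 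The symmetric statement for $h_2$ (nonzero terms only for odd $j$) follows identically with the roles of parities swapped. I would write this out carefully since it is the conceptual heart of why the two-component splitting forces alternation between the two groups of workers; this is the step I expect to require the most care, as one must track exactly which coordinates each partial derivative can be nonzero in, including the boundary term and the endpoint $j = d-1$.

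Finally, property 3, the $L_0$-smoothness with $L_0 = 152$, I would obtain by the following argument: since $h = \frac{1}{2}(h_1 + h_2)$ and the terms of $h_1$ and $h_2$ are disjoint subsets of the terms of (a scaled copy of) $h$, each of $h_1$ and $h_2$ is a sum of a subset of the ``atomic'' terms appearing in $2h$. The Hessian of $h_i$ is block-structured (tridiagonal in the relevant coordinates) and, because the even-indexed and odd-indexed atomic terms involve essentially disjoint coordinate triples up to shared endpoints, the operator norm of $\nabla^2 h_i$ is bounded by the same constant that bounds $\nabla^2 h$, namely $L_0 = 152$ — one reuses the componentwise bounds on $|\psi|, |\psi'|, |\psi''|, |\phi|, |\phi'|, |\phi''|$ from the proof of Lemma \ref{lem:basic-fun} in \cite{Arjevani2019LowerBF} verbatim, since multiplying a single atomic term by $2$ and summing every other one is exactly the structure already analyzed there. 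I would state this as ``the same computation as in \cite[Lemma 2]{Arjevani2019LowerBF} applies'' rather than redoing the constant-chasing. The only genuine obstacle is being precise in property 2 about the parity bookkeeping at the two ends of the chain; everything else is either the decomposition identity or a citation to the established single-component estimates.
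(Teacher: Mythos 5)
Your proposal is correct and follows essentially the same route as the paper: property 1 by direct expansion over the parity of $j$, property 2 from the vanishing of $\psi$ and its derivatives below $1/2$ (the paper cites $\psi^{(m)}(0)=0$) together with the observation that $h_1$ only couples even coordinates to the next odd one and vice versa for $h_2$, and property 3 by reusing the componentwise bounds on $\psi,\psi',\psi'',\phi,\phi',\phi''$ from Arjevani et al. The only cosmetic difference is that the paper makes the smoothness step explicit via a Schur-test bound $\|\nabla^2 h_k\|_2\le\|\nabla^2 h_k\|_1$ on the tridiagonal symmetric Hessian, whereas you defer to the reference; both treatments equally gloss over the factor of $2$ multiplying each retained term, which in a fully careful accounting would roughly double the constant but is immaterial to the $\Theta(1)$-level use of the lemma.
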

\begin{proof}
The first property follows the definitions of $h_1$, $h_2$, and $h$. The second property follows  \cite[ Lemma 1]{carmon2020lower} that $\psi^{(m)}(0)=0$ for any $m\in\mathbb{N}$. Now we prove the third property. Noting that the Hessian of $h_k$ for $k=1,2$ is tridiagonal and symmetric, we have, by the Schur test, for any $x\in\RR^d$ and $k=1,2$ that 
\begin{align}
    \|\nabla^2 h_k(x)\|_2\leq&\sqrt{\|\nabla^2 h_k(x)\|_1\|\nabla^2 h_k(x)\|_\infty}=\|\nabla^2 h_k(x)\|_1.\label{eqn:jvoqfqfvcdv}
\end{align}
Furthermore, by using \cite[Observation 2]{Arjevani2019LowerBF}:
\begin{equation*}
    0 \leq \psi \leq e, \;\; 0 \leq \psi^{\prime} \leq \sqrt{54 / e}, \;\;\left|\psi^{\prime \prime}\right| \leq 32.5,\;\; 0 \leq \phi \leq \sqrt{2 \pi e}, \;\; 0 \leq \phi^{\prime} \leq \sqrt{e}\text{ \;and }\left|\phi^{\prime \prime}\right| \leq 1,
\end{equation*}
it is easy to verify
\begin{align}
\|\nabla^2 h_k(x)\|_1
    \leq& 2\max\left\{\sup _{z \in \mathbb{R}}\left|\psi^{\prime \prime}(z)\right| \sup _{z \in \mathbb{R}}|\phi(z)|,\sup _{z \in \mathbb{R}}\left|\psi(z)\right| \sup _{z \in \mathbb{R}}|\phi^{\prime \prime}(z)|\right\}+2 \sup _{z \in \mathbb{R}}\left|\psi^{\prime}(z)\right| \sup _{z \in \mathbb{R}}\left|\phi^{\prime}(z)\right|\leq 152,\label{eqn:vjowemfqvgdwv}
\end{align}
where $\psi^{\prime}(z)$ and $\psi^{\prime \prime}(z)$ are the first- and second-order derivative of $\psi(z)$, respectively; $\phi^{\prime}(z)$ and $\phi^{\prime \prime}(z)$ are the first- and second-order derivative of $\phi(z)$, respectively. Combining \eqref{eqn:vjowemfqvgdwv} with \eqref{eqn:jvoqfqfvcdv} leads to the conclusion.
\end{proof}

\vspace{2mm}
\noindent \textbf{Example 1.}
In this example, we prove the lower bound $\Omega((1+\omega)L\Delta_f/\epsilon)$.

\vspace{2mm}
\noindent (Step 1.) We let $f_i=L\lambda^2 h_1(x/\lambda)/L_0$, $\forall\,1\leq i\leq n/2$ and $f_i=L\lambda^2 h_2(x/\lambda)/L_0$, $\forall\,n/2<i\leq n$, where $h_1$ and $h_2$ are defined in Lemma \ref{lem:basic-fun2}, and $\lambda>0$ will be specified later. By the definitions of $h_1$ and $h_2$, we have that $f_i$, $\forall\,1\leq i\leq n$, is  zero-chain  and $f(x)=\frac{1}{n}\sum_{i=1}^n f_i(x)=L\lambda^2 h(x/\lambda)/L_0$. Since $h_1$ and $h_2$ are also $L_0$-smooth, $\{f_i\}_{i=1}^n$ are $L$-smooth. Furthermore, since
\begin{equation*}
    f(0)-\inf_x f(x)=\frac{L\lambda^2}{L_0}(h(0)-\inf_x h(x)) {\leq}\frac{L\lambda^2\Delta_0d}{L_0},
\end{equation*}
to ensure $f_i\in\cF_{L}^{\Delta_f}$, it suffices to let 
\begin{equation}\label{eqn:jgowemw}
    \frac{L\lambda^2\Delta_0d}{L_0}= \Delta_f, \quad \text{\ie,}\quad \lambda= \sqrt{\frac{L_0 \Delta_f}{L\Delta_0 d}}.
\end{equation}

\vspace{2mm}
\noindent (Step 2.) Same as step 2 of example 1 of the strongly-convex case, we consider the full-batch gradient oracles and the scaled random
sparsification $\{C_i\}_{i=1}^n$ with shared randomness.

\vspace{2mm}
\noindent (Step 3.) Following the same argument as step 3 of example 1 of the strongly-convex case, we have that for any $t\geq (1+\omega)^{-1}$, it holds with probability at least $1-e^{-1}$ that 
 $\prog(\hat{x}^{(t)})\leq et/(1+\omega)$.

\vspace{2mm}
\noindent (Step 4.)
Let $d = 1+et/(1+\omega)$, by using  the last point in Lemma \ref{lem:basic-fun} and recalling \eqref{eqn:jgowemw}, we have
\begin{equation}\label{eqn:Lvjowenfq22}
    \EE[\|\nabla f(\hat{x})\|^2]\geq (1-e^{-1})\frac{L^2\lambda^2}{L_0^2}=(1-e^{-1})\frac{L\Delta_f}{L_0\Delta_0 d}=\frac{(1-e^{-1})L\Delta_f}{L_0\Delta_0 (1+et/(1+\omega))}=\Omega\left((1+\omega)\frac{L\Delta_f}{t}\right).
\end{equation}
Therefore, to ensure $\EE[\|\nabla f(\hat{x}^{(T)})\|^2]\leq \epsilon$ with $0<\epsilon\leq\frac{(1-e^{-1})}{L_0\Delta_0(1+e)}L\Delta_f\triangleq cL\Delta_f$, \eqref{eqn:Lvjowenfq22} implies that
the lower bound
$T=\Omega((1+\omega)L\Delta_f /\epsilon)$.

\vspace{2mm}
\noindent \textbf{Example 2.} The lower bound $\Omega(\Delta_f L \sigma^2/( n\epsilon^2))$ follows the existing result of \cite[Theorem 1]{Lu2021OptimalCI} and of \cite[Theorem 2]{Yuan2022RevistOC}. We thus omit the proof here.

\subsection{Proof of Theorem \ref{thm:lower-bounds-contractive}}
\label{app-lower-bounds-contractive}
Theorem \ref{thm:lower-bounds-contractive} essentially follows the same analyses as in Theorem \ref{thm:lower-bounds}. The only difference is in the constructions of contractive compressors
where we shall not impose the scaling procedure in compression outputs.
Given this difference, one can easily verify that
\begin{align*}
   \EE[\|C_i(x)-x\|^2]=&\sum_{k\geq 1}\EE\left[\left([x]_k\mathds{1}\{k \text{ is chosen}\}-[x]_k\right)^2\right]=\sum_{k\geq 1}[x]_k^2\PP(k \text{ is not chosen})=\frac{\omega}{1+\omega}\|x\|^2.
\end{align*}
Therefore, we have $\{C_i\}_{i=1}^n\subseteq\cC_{\omega/(1+\omega)}$.
Note that the scaling procedure does not change $\prog$, and thus 
has no effect on the argument based on non-zero coordinates. By setting  ${\omega}/{(1+\omega)} =1-\delta$, \ie, $\omega = \delta^{-1}-1$, we can easily adapt the proof of Theorem \ref{thm:lower-bounds} to prove Theorem \ref{thm:lower-bounds-contractive}.

\section{Upper Bounds}

\subsection{MSC Property}
\subsubsection{Proof of Lemma \ref{lm:MSC}}
\label{app-MSC-lemma}
\begin{itemize}
    \item \textbf{Contractive compressor: }From the definition of MSC$(\cdot,C,R)$ we have intermediate variables $v^{(0)}=0$ and $v^{(r)}=v^{(r-1)}+C(x-v^{(r-1)})$, $r=1,\cdots,R$. Since $C$ is $\delta$-contractive, we have
\begin{align*}
    \EE[\|\mathrm{MSC}(x,C,R)-x\|^2]=&\EE[\|v^{(R-1)}+C(x-v^{(R-1)})-x\|^2]\\
    =&\EE\left[\EE[\|C(x-v^{(R-1)})+v^{(R-1)}-x\|^2\mid v^{(R-1)}]\right]\\
    \le&(1-\delta)\EE[\|v^{(R-1)}-x\|^2].
\end{align*}
Iterating the above inequality with respect to $r=R-1,\cdots,1,0$, we reach
\begin{align*}
    &\EE[\|\mathrm{MSC}(x,C,R)-x\|^2]\le(1-\delta)\EE[\|v^{(R-1)}-x\|^2]\\
    \le&(1-\delta)^2\EE[\|v^{(R-2)}-x\|^2]\le\cdots\le(1-\delta)^R\EE[\|v^{(0)}-x\|^2]=(1-\delta)^R\|x\|^2.
\end{align*}
\item\textbf{Unbiased compressor: }From the definition of MSC$(\cdot,C,R)$ we have $v^{(0)}=0$ and $v^{(r)}=v^{(r-1)}+\frac1{1+\omega}C(x-v^{(r-1)})$, $r=1,\cdots,R$. We first prove by induction that for any $0\le r\le R$ it holds that
\begin{align}
    \mathbb{E}[v^{(r)}]=&\left[1-\left(\frac{\omega}{1+\omega}\right)^r\right]x\quad \text{and}\quad
    \mathbb{E}[\|v^{(r)}-x\|^2]\le\left(\frac{\omega}{1+\omega}\right)^r\|x\|^2\label{induction-variance}
\end{align}
It is obvious that
\eqref{induction-variance} holds for $r=0$. Assume 
and \eqref{induction-variance} holds for $r-1$ $(1\le r\le R)$, we have
\begin{align}
    \mathbb{E}[v^{(r)}]=&\mathbb{E}\left[v^{(r-1)}+\frac1{1+\omega}C(x-v^{(r-1)})\right]\nonumber\\
    =&\mathbb{E}[v^{(r-1)}]+\frac1{1+\omega}\mathbb{E}\left[\mathbb{E}[C(x-v^{(r-1)})\mid v^{(r-1)}]\right]\nonumber\\
    =&\mathbb{E}[v^{(r-1)}]+\frac1{1+\omega}\mathbb{E}[x-v^{(r-1)}],
\end{align}
thus by the induction hypothesis we have
\begin{align}
     \EE[v^{(r)}]=&\frac\omega{1+\omega}\cdot \left[1-\left(\frac\omega{1+\omega}\right)^{r-1}\right]x+\frac1{1+\omega}x=\left[1-\left(\frac\omega{1+\omega}\right)^r\right]x.
\end{align}
On the other hand, it holds that
\begin{align}
    \mathbb{E}[\|v^{(r)}-x\|^2]=&\mathbb{E}\left[\left\|v^{(r-1)}+\frac1{1+\omega}C(x-v^{(r-1)})-x\right\|^2\right]\nonumber\\
    =&\mathbb{E}\left[\left\|\frac1{1+\omega}\left(v^{(r-1)}-x-C(v^{(r-1)}-x)\right)+\frac\omega{1+\omega}(v^{(r-1)}-x)\right\|^2\right]\nonumber\\
    \le&\frac1{(1+\omega)^2}\cdot\omega\mathbb{E}[\|v^{(r-1)}-x\|^2]+\frac{2\omega}{(1+\omega)^2}\mathbb{E}[\langle v^{(r-1)}-x-C(v^{(r-1)}-x),v^{(r-1)}-x\rangle]\\
    &+\frac{\omega^2}{(1+\omega)^2}\mathbb{E}[\|v^{(r-1)}-x\|^2].
\end{align}
Using the unbiasedness of $C$ and the induction hypothesis we have
\begin{align}
    \EE[\|v^{(r)}-x\|^2]\le&\frac{\omega}{1+\omega}\EE[\|v^{(r-1)}-x\|^2]\le\frac{\omega}{1+\omega}\left(\frac{\omega}{1+\omega}\right)^{r-1}\|x\|^2=\left(\frac{\omega}{1+\omega}\right)^r\|x\|^2.
\end{align}
Now we've proved \eqref{induction-variance}. Consequently, by the definition of MSC$(C,R)$, it holds that
\begin{align}
    \mathbb{E}[\mathrm{MSC}(x,C,R)]=&\frac1{1-(\frac\omega{1+\omega})^R}\mathbb{E}[v^{(R)}]=\frac1{1-(\frac\omega{1+\omega})^R}\cdot\left[1-\left(\frac\omega{1+\omega}\right)^R\right]x=x\nonumber
\end{align}
and 
\begin{align}
    &\EE[\|\mathrm{MSC}(x,C,R)-x\|^2]=\EE\left[\left\|\frac{1}{1-\alpha}(v^{(R)}-x)+\left(\frac{1}{1-\alpha}-1\right)x\right\|^2\right]\\
    \le&\left(\frac{1}{1-\alpha}\right)^2\alpha\|x\|^2+\left(\frac{1}{1-\alpha}-1\right)^2\|x\|^2+\frac{2}{1-\alpha}\left(\frac{1}{1-\alpha}-1\right)\EE[\langle v^{(R)}-x,x\rangle]\\
    =&\left(\frac{1}{1-\alpha}\right)^2\alpha\|x\|^2-\left(\frac{1}{1-\alpha}-1\right)^2\|x\|^2=\frac{\alpha}{1-\alpha}\|x\|^2\le(1+\omega)\left(\frac{\omega}{1+\omega}\right)^R\|x\|^2,
\end{align}
where $\alpha:=(\omega/(1+\omega))^R$, which completes the proof.
\end{itemize}
\hfill$\square$

\subsection{Useful Notations and Lemmas}
We define the following notations for  convenience:
\begin{align*}
    \tilde{\omega}:=&\begin{cases}(1+\omega)\left(\frac{\omega}{1+\omega}\right)^R,& \mbox{in the unbiased case},\\
    (1-\delta)^R, & \mbox{in the contractive case}.\end{cases}\\
    \tilde{\sigma}^2:=&\frac{\sigma^2}{R},\\
    T:=&KR \mbox{ be the total number of compressed communication rounds/gradient queries},\\
    g^{(k)}:=&\frac1n\sum_{i=1}^ng_i^{(k)},\\
    w^{(k)}:=&(\lambda+1)z^{(k)}-\lambda x^{(k)}\overset{(a)}{=}\frac{(\lambda+1)p-\gamma_k}{p-\gamma_k}z^{(k)}-\frac{\lambda p}{p-\gamma_k}y^{(k)},\\
    A_k:=&\frac{\lambda\gamma_k}{(\lambda+1)p-\gamma_k},\\
    B_k:=&\frac{[(\lambda+1)-\lambda\gamma_k]\eta}{p\gamma_k},\\
    C_k:=&1-\frac{\gamma_k}{(\lambda+1)p-\lambda\gamma_k}.
\end{align*}
where equality (a) is due to the definition of $y^{(k)}$ in Algorithm \ref{alg:NEOLITHIC}, and we may assume $p>\gamma_k$ so that 
\begin{align}
    0\le A_k<1,\quad B_k>0,\quad 0<C_k<1.\label{eq-ass-abc}
\end{align}

By applying the update rules, the following relations hold:
\begin{align}
    w^{(k+1)}=&(1-A_k)w^{(k)}+A_ky^{(k)}-B_k\hat{g}^{(k)},\label{eq-wabc01}\\
    y^{(k)}-C_kx^{(k)}=&(1-C_k)[(1-A_k)w^{(k)}+A_ky^{(k)}].\label{eq-wabc02}
\end{align}

With these notations at hand, we have the following useful lemmas for the convergence analysis.

\begin{lemma}\label{lm:newasp}
If functions $\{f_i\}_{i=1}^n$ are $L$-smooth, it holds that
\begin{align}
    \frac{1}{n}\sum_{i=1}^n\|\nabla f_i(x)\|^2\leq 2L (f(x)-f^\star)+2LG^\star, \quad\forall x\in\RR^d
\end{align}
where $G^\star\triangleq f^\star - \frac{1}{n}\sum_{i=1}^n f_i^\star$.
\end{lemma}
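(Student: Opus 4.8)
The plan is to bound $\frac{1}{n}\sum_i \|\nabla f_i(x)\|^2$ by exploiting $L$-smoothness of each $f_i$ together with the fact that each $f_i$ is bounded below by $f_i^\star$. The key pointwise inequality is the standard consequence of $L$-smoothness for a function bounded below: if $g$ is $L$-smooth and $g(y)\ge g^\star$ for all $y$, then $\|\nabla g(x)\|^2 \le 2L\,(g(x)-g^\star)$ for all $x$. I would first recall why this holds: by $L$-smoothness, $g(x - \tfrac{1}{L}\nabla g(x)) \le g(x) - \tfrac{1}{2L}\|\nabla g(x)\|^2$, and the left side is at least $g^\star$, which rearranges to the claim.

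Next I would apply this inequality to each $f_i$ with $g^\star = f_i^\star := \inf_x f_i(x)$ (finite under the standing assumptions), giving
\begin{align}
\|\nabla f_i(x)\|^2 \le 2L\,(f_i(x) - f_i^\star), \quad \forall x\in\RR^d,\ \forall i.
\end{align}
Averaging over $i=1,\dots,n$ yields
\begin{align}
\frac{1}{n}\sum_{i=1}^n \|\nabla f_i(x)\|^2 \le \frac{2L}{n}\sum_{i=1}^n \big(f_i(x) - f_i^\star\big) = 2L\Big(f(x) - \frac{1}{n}\sum_{i=1}^n f_i^\star\Big),
\end{align}
where I used $f(x) = \frac1n\sum_i f_i(x)$. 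Finally, write $\frac1n\sum_i f_i^\star = f^\star - \big(f^\star - \frac1n\sum_i f_i^\star\big) = f^\star - G^\star$, so the right-hand side equals $2L(f(x) - f^\star) + 2LG^\star$, which is exactly the asserted bound.

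There is no real obstacle here; the only thing to be slightly careful about is the use of $f_i^\star = \inf_x f_i(x) > -\infty$, which is implicit in the problem setup (the paper notes $f_i^\star := \min_x f_i(x)$ right after Assumption~\ref{asp:convex}), and the elementary descent-lemma step, which requires only $L$-smoothness of each $f_i$ — guaranteed by Assumption~\ref{asp:nonconvex} (resp. Assumption~\ref{asp:convex}). So the proof is a two-line argument: apply the descent lemma per component, then average.
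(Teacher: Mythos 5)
Your proposal is correct and follows exactly the same route as the paper: apply the descent-lemma consequence $\|\nabla f_i(x)\|^2 \le 2L(f_i(x)-f_i^\star)$ to each component, average over $i$, and rewrite $\frac{1}{n}\sum_i f_i^\star = f^\star - G^\star$. The only difference is that you also spell out the one-line justification of the pointwise inequality, which the paper takes for granted.
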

\begin{proof}
    By $L$-smoothness property, we have
    \begin{align}
        \|\nabla f_i(x)\|^2\le 2L(f_i(x)-f_i^*),\label{eq:pflm-newasp}
    \end{align}
    for $i=1,2,\cdots,n$. Averaging \eqref{eq:pflm-newasp} yields
    \begin{align}
        \frac{1}{n}\sum_{i=1}^n\|\nabla f_i(x)\|^2\le\frac{2L}{n}\sum_{i=1}^n(f_i(x)-f_i^\star)=2L(f(x)-f^\star)+2LG^\star.
    \end{align}
\end{proof}

\begin{lemma}\label{lm-Egk}
	Under Assumptions \ref{asp:convex}, \ref{asp:gd-noise} and \ref{ass:unbiased}, it holds that 
	\begin{align}\label{eq-important-inequalities-sto}
		\mathbb{E}[\|\hat{g}^{(k)}\|^2] \le 4\tilde{\omega} L\mathbb{E}[f(x^{(k+1)}) - f^\star]  +  \left(1+\frac{4\tilde{\omega}L\eta}{p}\right)\|\nabla f(y^{(k)})\|^2 + {\color{black}4 \tilde{\omega} LG^\star} + \left(\tilde{\omega} + \frac{1}{n}\right)\tilde{\sigma}^2.
	\end{align}
\end{lemma}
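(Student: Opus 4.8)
The plan is to condition on $y^{(k)}$ together with all history through iteration $k$, and take expectation only over the step-$k$ randomness, which comes from two independent sources: the minibatch samples $\{\xi_i^{(k,r)}\}$ forming $g_i^{(k)}$, and the (possibly mutually dependent) compressors invoked inside $\mathrm{MSC}$. Writing $g^{(k)}:=\frac1n\sum_{i=1}^n g_i^{(k)}$ for the averaged uncompressed estimate, decompose $\hat g^{(k)}=(\hat g^{(k)}-g^{(k)})+g^{(k)}$. Since $\mathrm{MSC}$ with an unbiased base compressor is conditionally unbiased by Lemma~\ref{lm:MSC}, we have $\mathbb{E}[\hat g_i^{(k)}\mid g_i^{(k)}]=g_i^{(k)}$ and hence $\mathbb{E}[\hat g^{(k)}\mid\{g_i^{(k)}\}_i]=g^{(k)}$. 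Conditioning first on $\{g_i^{(k)}\}_i$ then makes the cross term vanish, giving
\[
\mathbb{E}[\|\hat g^{(k)}\|^2]=\mathbb{E}[\|\hat g^{(k)}-g^{(k)}\|^2]+\mathbb{E}[\|g^{(k)}\|^2].
\]

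For the compression-error term I would avoid any independence assumption on the $C_i$ by using convexity of $\|\cdot\|^2$: $\|\hat g^{(k)}-g^{(k)}\|^2\le\frac1n\sum_i\|\hat g_i^{(k)}-g_i^{(k)}\|^2$. The variance bound in Lemma~\ref{lm:MSC} gives $\mathbb{E}[\|\hat g_i^{(k)}-g_i^{(k)}\|^2\mid g_i^{(k)}]\le\tilde\omega\|g_i^{(k)}\|^2$; next, $\mathbb{E}[\|g_i^{(k)}\|^2]=\|\nabla f_i(y^{(k)})\|^2+\mathbb{E}[\|g_i^{(k)}-\nabla f_i(y^{(k)})\|^2]\le\|\nabla f_i(y^{(k)})\|^2+\tilde\sigma^2$ since $g_i^{(k)}$ averages $R$ i.i.d. oracle calls (Assumption~\ref{asp:gd-noise}); and Lemma~\ref{lm:newasp} converts $\frac1n\sum_i\|\nabla f_i(y^{(k)})\|^2\le 2L(f(y^{(k)})-f^\star)+2LG^\star$. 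For the second term, independence across workers yields $\mathbb{E}[\|g^{(k)}\|^2]=\|\nabla f(y^{(k)})\|^2+\mathbb{E}[\|g^{(k)}-\nabla f(y^{(k)})\|^2]\le\|\nabla f(y^{(k)})\|^2+\tilde\sigma^2/n$.

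The last ingredient is to trade $f(y^{(k)})-f^\star$ for $f(x^{(k+1)})-f^\star$. From the server update $x^{(k+1)}=y^{(k)}-\tfrac{\eta}{p}\hat g^{(k)}$ in Algorithm~\ref{alg:NEOLITHIC} and convexity (Assumption~\ref{asp:convex}), $f(y^{(k)})\le f(x^{(k+1)})+\langle\nabla f(y^{(k)}),y^{(k)}-x^{(k+1)}\rangle=f(x^{(k+1)})+\tfrac{\eta}{p}\langle\nabla f(y^{(k)}),\hat g^{(k)}\rangle$; taking the conditional expectation and using $\mathbb{E}[\hat g^{(k)}]=\nabla f(y^{(k)})$ gives $f(y^{(k)})-f^\star\le\mathbb{E}[f(x^{(k+1)})-f^\star]+\tfrac{\eta}{p}\|\nabla f(y^{(k)})\|^2$. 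Substituting this into the compression-error bound and collecting the coefficients of $\mathbb{E}[f(x^{(k+1)})-f^\star]$, $\|\nabla f(y^{(k)})\|^2$, $G^\star$, and $\tilde\sigma^2$ yields exactly the stated inequality (the bound admits a factor-two slack, so one may freely write $4\tilde\omega L$ in place of the sharper $2\tilde\omega L$).

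The main obstacle is the bookkeeping of the two randomness sources: the cross term cancels only if one conditions on the uncompressed $\{g_i^{(k)}\}_i$ before integrating out the compressors, and the final substitution crucially relies on both the exact form of the update rule and the unbiasedness of $\mathrm{MSC}$ to turn $\langle\nabla f(y^{(k)}),\hat g^{(k)}\rangle$ into $\|\nabla f(y^{(k)})\|^2$ in expectation. I would also emphasize that the proof deliberately uses Jensen's inequality rather than a variance-sum over workers for the compression error, so the bound scales with $\tilde\omega$ rather than $\tilde\omega/n$ and remains valid for compressors sharing randomness; everything else is routine algebra.
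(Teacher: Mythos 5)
Your proof is correct and follows essentially the same route as the paper's: the orthogonal decomposition of $\hat g^{(k)}$ via conditional unbiasedness of MSC, Jensen's inequality for the compression error, Lemma~\ref{lm:MSC} and Lemma~\ref{lm:newasp} to reach $f(y^{(k)})-f^\star$, and the update rule plus convexity to convert this into $\mathbb{E}[f(x^{(k+1)})-f^\star]+\tfrac{\eta}{p}\|\nabla f(y^{(k)})\|^2$. The only (harmless) deviation is that you bound $\tfrac1n\sum_i\|\nabla f_i(y^{(k)})\|^2$ directly by Lemma~\ref{lm:newasp}, whereas the paper detours through $\nabla f_i(x^\star)$ and co-coercivity, which is why your constants come out as $2\tilde\omega L$ rather than the stated $4\tilde\omega L$ --- a strictly sharper bound that still implies the lemma.
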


\begin{proof} By Assumptions \ref{asp:gd-noise}, \ref{ass:unbiased} and Lemma \ref{lm:MSC}, we have
\begin{align}
    \mathbb{E}[\|\hat{g}^{(k)}\|^2]=&\mathbb{E}\left[\left\|\frac{1}{n}\sum_{i=1}^n\Big(\mathrm{MSC}(g_i^{(k)},C,R)-g_i^{(k)}+g_i^{(k)}-\nabla f_i(y^{(k)})+\nabla f_i(y^{(k)})\Big)\right\|^2\right]\nonumber\\
    =& \mathbb{E}\left[\left\|\frac{1}{n}\Big(\mathrm{MSC}(g_i^{(k)},C,R)-g_i^{(k)}\Big)\right\|^2\right]+\mathbb{E}\left[\left\|\frac{1}{n}\sum_{i=1}^n\Big(g_i^{(k)}-\nabla f_i(y^{(k)})\Big)\right\|^2\right]+\|\nabla f(y^{(k)})\|^2\nonumber\\
    \le& \frac{1}{n}\sum_{i=1}^n\mathbb{E}[\|\mathrm{MSC}(g_i^{(k)},C,R)-g_i^{(k)}\|^2]+\frac{1}{n^2}\sum_{i=1}^n\mathbb{E}[\|g_i^{(k)}-\nabla f_i(y^{(k)})\|^2]+\|\nabla f(y^{(k)})\|^2,\nonumber
\end{align}
Note that
\begin{align}
    \mathbb{E}[\|g_i^{(k)}-\nabla f_i(y^{(k)})\|^2]\le\tilde{\sigma}^2,\nonumber
\end{align}
and that
\begin{align}
    &\mathbb{E}[\|\mathrm{MSC}(g_i^{(k)},C,R)-g_i^{(k)}\|^2]\le\tilde{\omega}\mathbb{E}[\|g_i^{(k)}\|^2]\nonumber\\
    \le&\tilde{\omega}\mathbb{E}[\|g_i^{(k)}-\nabla f_i(y^{(k)})\|^2]+2\tilde{\omega}\|\nabla f_i(y^{(k)})-\nabla f_i(x^\star)\|^2+2\tilde{\omega}\|\nabla f_i(x^\star)\|^2\nonumber\\
    \le&2\tilde{\omega}\|\nabla f_i(y^{(k)})-\nabla f_i(x^\star)\|^2+2\tilde{\omega}\|\nabla f_i(x^\star)\|^2+\tilde{\omega}\tilde{\sigma}^2,\quad\forall\, 1\leq i\leq n,\nonumber
\end{align}
we obtain
\begin{align}
    \mathbb{E}[\|g^{(k)}\|^2]\le\frac{2\tilde{\omega}}{n}\sum_{i=1}^n\|\nabla f_i(y^{(k)})-\nabla f_i(x^\star)\|^2+\frac{2\tilde{\omega}}{n}\sum_{i=1}^n\|\nabla f_i(x^\star)\|^2+\|\nabla f(y^{(k)})\|^2+\left(\tilde{\omega}+\frac{1}{n}\right)\tilde{\sigma}^2.\label{zn2398nz08-sto}
\end{align}
	Since $f_i(x)$ is convex and $L$-smooth, we have 
	\begin{align}
		f_i(x^\star) + \langle \nabla f_i(x^\star), y^{(k)} - x^\star \rangle + \frac{1}{2L}\|\nabla f_i(x^\star) - \nabla f_i(y^{(k)})\|^2 \le f_i(y^{(k)}) \nonumber
	\end{align}
	which implies that 
	\begin{align}
		\|\nabla f_i(x^\star) - \nabla f_i(y^{(k)})\|^2  \le 2L \Big( f_i(y^{(k)}) - f_i(x^\star) -  \langle \nabla f_i(x^\star), y^{(k)} - x^\star \rangle  \Big). \nonumber
	\end{align}
	This together with $\frac{1}{n}\sum_{i=1}^n\nabla f_i(x^\star) = 0$ leads to
	\begin{align}\label{znzn239887-sto}
		\frac{1}{n}\sum_{i=1}^n \|\nabla f_i(y^{(k)}) - \nabla f_i(x^\star) \|^2 \le 2L (f(y^{(k)}) - f^\star). 
	\end{align}
	Substituting \eqref{znzn239887-sto} to \eqref{zn2398nz08-sto} and applying Lemma \ref{lm:newasp}, we obtain
	\begin{align}
	    \mathbb{E}[\|g^{(k)}\|^2] \le 4\tilde{\omega} L(f(y^{(k)}) - f^\star)  +  \|\nabla f(y^{(k)})\|^2 + {\color{black}4 \tilde{\omega} LG^\star} + \left(\tilde{\omega} + \frac{1}{n}\right)\tilde{\sigma}^2.\label{eq-sdfadfa}
	\end{align}
	By the update rule $x^{(k+1)}=y^{(k)}-\eta g^{(k)}/p$ and Assumption \ref{asp:convex}, we have 
	\begin{align}
	    f(y^{(k)})-f^\star\le&\mathbb{E}[f(x^{(k+1)})-f^\star-\langle\nabla f(y^{(k)}),x^{(k+1)}-y^{(k)}\rangle]\nonumber\\
	    =&\mathbb{E}[f(x^{(k+1)})-f^\star]+\frac{\eta}{p}\mathbb{E}[\langle\nabla f(y^{(k)}),g^{(k)}\rangle]\nonumber\\
	    =&\mathbb{E}[f(x^{(k+1)})-f^\star]+\frac{\eta}{p}\|\nabla f(y^{(k)})\|^2.\label{eq-aswesfdsfd}
	\end{align}
	Combining \eqref{eq-sdfadfa} and \eqref{eq-aswesfdsfd}, we achieve the result in \eqref{eq-important-inequalities-sto}.
\end{proof}

\begin{lemma}\label{lm-strongly-convex-1}
For any $x\in\mathbb{R}^d$, it holds that
\begin{align}
    \langle\mathbb{E}[\hat{g}^{(k)}],(1-A_k)w^{(k)}+A_ky^{(k)}-x\rangle=&-\frac{1}{2B_k}\mathbb{E}[\|w^{(k+1)}-x\|^2]+\frac{1-A_k}{2B_k}\|w^{(k)}-x\|^2+\frac{A_k}{2B_k}\|y^{(k)}-x\|^2\nonumber\\
    &-\frac{A_k(1-A_k)}{2B_k}\|w^{(k)}-y^{(k)}\|^2+\frac{1}{2}B_k\mathbb{E}[\|\hat{g}^{(k)}\|^2].\label{eq-lm-strongly-convex-1}
\end{align}
\end{lemma}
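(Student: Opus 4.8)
The plan is to recognize \eqref{eq-lm-strongly-convex-1} as a standard ``three-point'' or ``prox'' identity applied to the update rule \eqref{eq-wabc01} for $w^{(k+1)}$. First I would start from \eqref{eq-wabc01}, which reads $w^{(k+1)} = (1-A_k)w^{(k)} + A_k y^{(k)} - B_k \hat g^{(k)}$. Denote $\bar w^{(k)} := (1-A_k)w^{(k)} + A_k y^{(k)}$, so that $w^{(k+1)} = \bar w^{(k)} - B_k \hat g^{(k)}$, i.e. $\hat g^{(k)} = (\bar w^{(k)} - w^{(k+1)})/B_k$. The left-hand side of \eqref{eq-lm-strongly-convex-1} is then $\langle \EE[\hat g^{(k)}], \bar w^{(k)} - x\rangle$, and I would substitute $\hat g^{(k)} = (\bar w^{(k)} - w^{(k+1)})/B_k$ inside the expectation and expand using the polarization identity $\langle a-b, a-x\rangle = \tfrac12\|a-b\|^2 + \tfrac12\|a-x\|^2 - \tfrac12\|b-x\|^2$ with $a = \bar w^{(k)}$, $b = w^{(k+1)}$. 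This gives
\begin{align*}
\langle \EE[\hat g^{(k)}], \bar w^{(k)} - x\rangle = \frac{1}{2B_k}\EE\big[\|\bar w^{(k)} - w^{(k+1)}\|^2\big] + \frac{1}{2B_k}\|\bar w^{(k)} - x\|^2 - \frac{1}{2B_k}\EE\big[\|w^{(k+1)} - x\|^2\big].
\end{align*}
Since $\|\bar w^{(k)} - w^{(k+1)}\|^2 = B_k^2 \|\hat g^{(k)}\|^2$, the first term becomes $\tfrac12 B_k \EE[\|\hat g^{(k)}\|^2]$, matching the last term of the target.

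Next I would expand the term $\tfrac{1}{2B_k}\|\bar w^{(k)} - x\|^2$ using the fact that $\bar w^{(k)} = (1-A_k) w^{(k)} + A_k y^{(k)}$ is a convex combination of $w^{(k)}$ and $y^{(k)}$ (using $0 \le A_k < 1$ from \eqref{eq-ass-abc}). The standard identity for convex combinations, $\|(1-A)u + Av - x\|^2 = (1-A)\|u-x\|^2 + A\|v-x\|^2 - A(1-A)\|u-v\|^2$, applied with $u = w^{(k)}$, $v = y^{(k)}$, $A = A_k$, yields exactly
\begin{align*}
\frac{1}{2B_k}\|\bar w^{(k)} - x\|^2 = \frac{1-A_k}{2B_k}\|w^{(k)}-x\|^2 + \frac{A_k}{2B_k}\|y^{(k)}-x\|^2 - \frac{A_k(1-A_k)}{2B_k}\|w^{(k)}-y^{(k)}\|^2.
\end{align*}
Combining the two displays reproduces \eqref{eq-lm-strongly-convex-1} term by term. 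The only subtlety with expectations is that $w^{(k)}$ and $y^{(k)}$ are determined before step $k$ (deterministic given the history), so the expectation only acts on $w^{(k+1)}$ and $\|\hat g^{(k)}\|^2$; I would note this so the placement of $\EE$ in the statement is justified, though strictly the identity holds for any realization of $\hat g^{(k)}$ before taking expectations.

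There is really no hard part here: this is a purely algebraic identity, and no convexity, smoothness, or compression assumptions enter — only the linear update rule \eqref{eq-wabc01} and the two elementary quadratic expansions. The mildest point of care is bookkeeping: making sure the coefficient $B_k$ in the denominators and the $\tfrac12 B_k$ coefficient on $\EE[\|\hat g^{(k)}\|^2]$ come out consistently, and that the convex-combination weights $1-A_k$ and $A_k$ are attached to the right squared norms. I would present it as a two-line computation: substitute $\hat g^{(k)}$ from \eqref{eq-wabc01}, apply polarization, then expand the convex combination.
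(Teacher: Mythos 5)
Your proposal is correct and follows essentially the same route as the paper: both arguments are pure algebra on the update rule \eqref{eq-wabc01}, combining the expansion of the squared distance $\|w^{(k+1)}-x\|^2$ (which you obtain via polarization rather than by expanding the square and rearranging — a trivially equivalent manipulation) with the standard convex-combination identity for $\|(1-A_k)w^{(k)}+A_ky^{(k)}-x\|^2$. Your remark on the placement of the expectation (conditioning on the history so that $w^{(k)}$, $y^{(k)}$ are fixed) matches the paper's implicit convention.
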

\begin{proof}
Recall \eqref{eq-wabc01} we have 
\begin{align}
    w^{(k+1)}=(1-A_k)w^{(k)}+A_ky^{(k)}-B_k\hat{g}^{(k)},
\end{align}
thus
\begin{align}
    \mathbb{E}[\|w^{(k+1)}-x\|^2]=&\mathbb{E}[\|(1-A_k)(w^{(k)}-x)+A_k(y^{(k)}-x)-B_k\hat{g}^{(k)}\|^2]\nonumber\\
    =&(1-A_k)\|w^{(k)}-x\|^2+A_k\|y^{(k)}-x\|^2-A_k(1-A_k)\|w^{(k)}-y^{(k)}\|^2+B_k^2\mathbb{E}[\|\hat{g}^{(k)}\|^2]\nonumber\\
    &-2\mathbb{E}[\langle(1-A_k)w^{(k)}+A_ky^{(k)}-x,B_k\hat{g}^{(k)}\rangle]\nonumber\\
    =&(1-A_k)\|w^{(k)}-x\|^2+A_k\|y^{(k)}-x\|^2-A_k(1-A_k)\|w^{(k)}-y^{(k)}\|^2+B_k^2\mathbb{E}[\|\hat{g}^{(k)}\|^2]\nonumber\\
    &-2B_k\langle\mathbb{E}[\hat{g}^{(k)}],(1-A_k)w^{(k)}+A_ky^{(k)}-x\rangle,\nonumber
\end{align}
which is equivalent to \eqref{eq-lm-strongly-convex-1}.
\end{proof}

\begin{lemma}\label{lm-public-base}
Under Assumptions \ref{asp:convex} and \ref{asp:gd-noise}, if \eqref{eq-ass-abc} holds, we have
\begin{align}
&\mathbb{E}[f(x^{(k+1)})-f(x)]+\frac{1-C_k}{2B_k}\mathbb{E}[\|w^{(k+1)}-x\|^2]\nonumber\\
\le&C_k[f(x^{(k)})-f(x)]+\frac{(1-A_k)(1-C_k)}{2B_k}\|w^{(k)}-x\|^2-\left(\frac{\mu(1-C_k)}{2}-\frac{A_k(1-C_k)}{2B_k}\right)\|y^{(k)}-x\|^2\nonumber\\
&+\left(\frac{L\eta^2}{2p^2}+\frac{B_k(1-C_k)}{2}\right)\mathbb{E}[\|\hat{g}^{(k)}\|^2]-(1-C_k)\mathbb{E}[\langle \hat{g}^{(k)}-g^{(k)},(1-A_k)w^{(k)}+A_ky^{(k)}-x\rangle]\nonumber\\
&-\frac{\eta}{p}\mathbb{E}[\langle\nabla f(y^{(k)}),\hat{g}^{(k)}-g^{(k)}\rangle]-\frac{\eta}{p}\|\nabla f(y^{(k)})\|^2.\label{eq-lm-public-0}
\end{align}

\end{lemma}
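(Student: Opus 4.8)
\textbf{Proof proposal for Lemma \ref{lm-public-base}.}

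The plan is to combine the one-step descent inequality coming from $L$-smoothness of $f$ along the update $x^{(k+1)} = y^{(k)} - (\eta/p)\hat g^{(k)}$ with the ``mirror-descent style'' identity of Lemma \ref{lm-strongly-convex-1}, after inserting strong convexity to lower bound the gap $f(y^{(k)}) - f(x)$. First I would write the smoothness bound
\begin{align}
f(x^{(k+1)}) \le f(y^{(k)}) - \frac{\eta}{p}\langle \nabla f(y^{(k)}), \hat g^{(k)}\rangle + \frac{L\eta^2}{2p^2}\|\hat g^{(k)}\|^2,\nonumber
\end{align}
then split $\langle \nabla f(y^{(k)}), \hat g^{(k)}\rangle = \langle \nabla f(y^{(k)}), g^{(k)}\rangle + \langle \nabla f(y^{(k)}), \hat g^{(k)} - g^{(k)}\rangle$, which produces the $-\frac{\eta}{p}\|\nabla f(y^{(k)})\|^2$ term (since $g^{(k)}$ is the gradient estimate whose expectation is $\nabla f(y^{(k)})$, using Assumption \ref{asp:gd-noise} to replace the inner product in expectation) and the $-\frac{\eta}{p}\langle\nabla f(y^{(k)}),\hat g^{(k)}-g^{(k)}\rangle$ term that appears in the claimed inequality. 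The $\frac{L\eta^2}{2p^2}\|\hat g^{(k)}\|^2$ term is carried along untouched and will merge with the $\|\hat g^{(k)}\|^2$ term coming from Lemma \ref{lm-strongly-convex-1}.

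Next I would bring in convexity and $\mu$-strong convexity of $f$ (which follows by averaging Assumption \ref{asp:convex}): for any $x$,
\begin{align}
f(y^{(k)}) - f(x) \le \langle \nabla f(y^{(k)}), y^{(k)} - x\rangle - \frac{\mu}{2}\|y^{(k)} - x\|^2.\nonumber
\end{align}
I would then use the defining relation \eqref{eq-wabc02}, namely $y^{(k)} - C_k x^{(k)} = (1-C_k)[(1-A_k)w^{(k)} + A_k y^{(k)}]$, to rewrite $y^{(k)} - x$ as $C_k(x^{(k)} - x) + (1-C_k)[(1-A_k)w^{(k)} + A_k y^{(k)} - x]$, so that $\langle \nabla f(y^{(k)}), y^{(k)} - x\rangle$ decomposes into a piece $C_k\langle\nabla f(y^{(k)}), x^{(k)} - x\rangle$ — bounded by $C_k(f(x^{(k)}) - f(x))$ via convexity — and a piece $(1-C_k)\langle\nabla f(y^{(k)}), (1-A_k)w^{(k)} + A_k y^{(k)} - x\rangle$. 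For the latter I replace $\nabla f(y^{(k)})$ by $\mathbb{E}[\hat g^{(k)}] = \mathbb{E}[g^{(k)}]$ plus the error $\hat g^{(k)} - g^{(k)}$ and $g^{(k)} - \nabla f(y^{(k)})$ terms; the $\mathbb{E}[\hat g^{(k)}]$ part is exactly handled by Lemma \ref{lm-strongly-convex-1}, which converts it into the telescoping quadratic terms $-\frac{1}{2B_k}\mathbb{E}\|w^{(k+1)}-x\|^2 + \frac{1-A_k}{2B_k}\|w^{(k)}-x\|^2 + \frac{A_k}{2B_k}\|y^{(k)}-x\|^2 - \frac{A_k(1-A_k)}{2B_k}\|w^{(k)}-y^{(k)}\|^2 + \frac{B_k}{2}\mathbb{E}\|\hat g^{(k)}\|^2$, each multiplied by the factor $(1-C_k)$. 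The stochastic-error inner product $\langle \hat g^{(k)} - g^{(k)}, (1-A_k)w^{(k)} + A_k y^{(k)} - x\rangle$ is left explicit, matching the corresponding term in the statement (and will be controlled later in the global analysis).

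Finally I collect terms: the $\mu$-strong-convexity term contributes $-\frac{\mu(1-C_k)}{2}\|y^{(k)}-x\|^2$ (note the factor $1-C_k$ since strong convexity is applied to the $(1-C_k)$-weighted part — actually I should apply it so that it matches; I will be careful to apply strong convexity at $y^{(k)}$ and keep the $(1-C_k)$ bookkeeping consistent, or alternatively absorb a $C_k$ piece into the $f(x^{(k)})-f(x)$ term), and combining with the $\frac{A_k(1-C_k)}{2B_k}\|y^{(k)}-x\|^2$ term from Lemma \ref{lm-strongly-convex-1} gives the coefficient $-\big(\frac{\mu(1-C_k)}{2} - \frac{A_k(1-C_k)}{2B_k}\big)$ on $\|y^{(k)}-x\|^2$; the two $\|\hat g^{(k)}\|^2$ contributions combine to $\big(\frac{L\eta^2}{2p^2} + \frac{B_k(1-C_k)}{2}\big)\|\hat g^{(k)}\|^2$; and I drop the nonpositive term $-\frac{A_k(1-A_k)(1-C_k)}{2B_k}\|w^{(k)}-y^{(k)}\|^2 \le 0$ using \eqref{eq-ass-abc}. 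The main obstacle I anticipate is the careful bookkeeping of the $C_k$, $1-C_k$, $A_k$, $B_k$ factors so that the decomposition via \eqref{eq-wabc01}--\eqref{eq-wabc02} produces precisely the stated coefficients — in particular getting the strong-convexity term and the $\frac{1-C_k}{2B_k}$ prefactor on $\|w^{(k+1)}-x\|^2$ to line up requires applying convexity at the right scaling and dividing the $f(y^{(k)})-f(x)$ inequality by nothing while multiplying Lemma \ref{lm-strongly-convex-1} by $(1-C_k)$; everything else is a routine rearrangement.
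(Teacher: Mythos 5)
Your proposal is correct and follows essentially the same route as the paper: smoothness along $x^{(k+1)}=y^{(k)}-(\eta/p)\hat g^{(k)}$, ($\mu$-strong) convexity applied as a $C_k/(1-C_k)$ convex combination at $x^{(k)}$ and $x$, the identity \eqref{eq-wabc02} to rewrite the resulting inner product, and Lemma \ref{lm-strongly-convex-1} scaled by $(1-C_k)$, with the nonpositive $\|w^{(k)}-y^{(k)}\|^2$ term dropped. One small caution: $\mathbb{E}[\hat g^{(k)}]=\mathbb{E}[g^{(k)}]$ is false for contractive compressors, but your decomposition $\nabla f(y^{(k)})=\mathbb{E}[\hat g^{(k)}]-\mathbb{E}[\hat g^{(k)}-g^{(k)}]$ (using only $\mathbb{E}[g^{(k)}]=\nabla f(y^{(k)})$) is exactly what the paper uses, so the argument goes through.
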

\begin{proof}
Using Assumption \ref{asp:convex}, we have
\begin{align}
\mathbb{E}[f(x^{(k+1)})]\le&\mathbb{E}\left[f(y^{(k)})+\langle\nabla f(y^{(k)}),x^{(k+1)}-y^{(k)}\rangle+\frac{L}{2}\|x^{(k+1)}-y^{(k)}\|^2\right]\nonumber\\
=&f(y^{(k)})+\mathbb{E}\left[\left\langle\nabla f(y^{(k)}),-\frac{\eta}{p}\hat{g}^{(k)}\right\rangle\right]+\frac{L}{2}\mathbb{E}\left[\left\|-\frac{\eta}{p}\hat{g}^{(k)}\right\|^2\right]\nonumber\\
=&f(y^{(k)})-\frac{\eta}{p}\|\nabla f(y^{(k)})\|^2+\frac{L\eta^2}{2p^2}\mathbb{E}[\|\hat{g}^{(k)}\|^2]-\frac{\eta}{p}\mathbb{E}[\langle\nabla f(y^{(k)}),\hat{g}^{(k)}-g^{(k)}\rangle],\label{eq-lm-strong-convex-123}
\end{align}
and that
\begin{align}
    f(y^{(k)})\le f(u)-\langle\nabla f(y^{(k)}),u-y^{(k)}\rangle-\frac{\mu}{2}\|u-y^{(k)}\|^2,\quad\forall u\in\RR^d.\label{eq-lm-strong-convex-124}
\end{align}
By adding \eqref{eq-lm-strong-convex-123} , $C_k$ times \eqref{eq-lm-strong-convex-124} (where $u=x^{(k)}$) and $(1-C_k)$ times \eqref{eq-lm-strong-convex-124} (where $u=x$), we obtain
\begin{align}
    &\EE[f(x^{(k+1)})-f(x)]\\
    \le&C_k(f(x^{(k)})-f(x))-\frac{\mu C_k}{2}\|x^{(k)}-y^{(k)}\|^2-\frac{\mu(1-C_k)}{2}\|x-y^{(k)}\|^2-\frac{\eta}{p}\|\nabla f(y^{(k)})\|^2\\
    &-\langle\nabla f(y^{(k)}),C_kx^{(k)}+(1-C_k)x-y^{(k)}\rangle-\frac{\eta}{p}\mathbb{E}\left[\left\langle\nabla f(y^{(k)}),\hat{g}^{(k)}-g^{(k)}\right\rangle\right]+\frac{L\eta^2}{2p^2}\mathbb{E}[\|\hat{g}^{(k)}\|^2]\label{eq-lm-strong-convex-125}
\end{align}
Applying \eqref{eq-wabc02} and the unbiasedness of $g^{(k)}$, we have 
\begin{align}
    &-\langle\nabla f(y^{(k)}),C_kx^{(k)}+(1-C_k)x-y^{(k)}\rangle\\
    =&(1-C_k)\langle\nabla f(y^{(k)}),(1-A_k)w^{(k)}+A_ky^{(k)}-x\rangle\\
    =&(1-C_k)\langle\EE[\hat{g}^{(k)}],(1-A_k)w^{(k)}+A_ky^{(k)}-x\rangle-(1-C_k)\EE[\langle\hat{g}^{(k)}-g^{(k)},(1-A_k)w^{(k)}+A_ky^{(k)}-x\rangle].\label{eq-lm-strong-convex-126}
\end{align}
Combining \eqref{eq-lm-strong-convex-125}\eqref{eq-lm-strong-convex-126} and applying Lemma \ref{lm-strongly-convex-1} leads to \eqref{eq-lm-public-0}.
\end{proof}

\begin{lemma}\label{lm-strongly-convex-2}
Under Assumptions \ref{asp:convex}, \ref{asp:gd-noise} and \ref{ass:unbiased}, if \eqref{eq-ass-abc} holds, we have
\begin{align}
&\left(1-2\tilde{\omega}L\Big(\frac{L\eta^2}{p^2}+B_k(1-C_k)\Big)\right)\mathbb{E}[f(x^{(k+1)})-f^\star]+\frac{1-C_k}{2B_k}\mathbb{E}[\|w^{(k+1)}-x^\star\|^2]\nonumber\\
\le&C_k(f(x^{(k)})-f^\star)+\frac{(1-A_k)(1-C_k)}{2B_k}\|w^{(k)}-x^\star\|^2-\left(\frac{\mu}{2}-\frac{A_k}{2B_k}\right)\|y^{(k)}-x^\star\|^2\nonumber\\
&+\left({\color{black}4\tilde{\omega}LG^\star}+\Big(\tilde{\omega}+\frac{1}{n}\Big)\tilde{\sigma}^2\right)\left(\frac{L\eta^2}{2p^2}+\frac{B_k(1-C_k)}{2}\right)-\left(\frac{\eta}{p}-\Big(\frac{L\eta^2}{2p^2}+\frac{B_k(1-C_k)}{2}\Big)\Big(1+\frac{4\tilde{\omega}L\eta}{p}\Big)\right)\|\nabla f(y^{(k)})\|^2.\label{eq-lm-unbiased}
\end{align}
\end{lemma}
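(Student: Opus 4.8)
The plan is to obtain \eqref{eq-lm-unbiased} by specializing the generic descent estimate of Lemma~\ref{lm-public-base} to the comparison point $x=x^\star$ and then eliminating the term $\mathbb{E}[\|\hat{g}^{(k)}\|^2]$ with the second-moment bound of Lemma~\ref{lm-Egk}. Writing $D_k:=\frac{L\eta^2}{2p^2}+\frac{B_k(1-C_k)}{2}$, which is strictly positive under \eqref{eq-ass-abc}, Lemma~\ref{lm-public-base} at $x=x^\star$ reads
\begin{align*}
&\mathbb{E}[f(x^{(k+1)})-f^\star]+\frac{1-C_k}{2B_k}\mathbb{E}[\|w^{(k+1)}-x^\star\|^2]\\
&\le C_k(f(x^{(k)})-f^\star)+\frac{(1-A_k)(1-C_k)}{2B_k}\|w^{(k)}-x^\star\|^2-\Big(\tfrac{\mu(1-C_k)}{2}-\tfrac{A_k(1-C_k)}{2B_k}\Big)\|y^{(k)}-x^\star\|^2\\
&\quad+D_k\,\mathbb{E}[\|\hat{g}^{(k)}\|^2]-\tfrac{\eta}{p}\|\nabla f(y^{(k)})\|^2-(1-C_k)\mathbb{E}[\langle \hat{g}^{(k)}-g^{(k)},(1-A_k)w^{(k)}+A_ky^{(k)}-x^\star\rangle]\\
&\quad-\tfrac{\eta}{p}\mathbb{E}[\langle\nabla f(y^{(k)}),\hat{g}^{(k)}-g^{(k)}\rangle].
\end{align*}

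Next I would kill the last two (inexactness) cross terms using Assumption~\ref{ass:unbiased}. By the unbiasedness part of Lemma~\ref{lm:MSC}, $\mathbb{E}[\mathrm{MSC}(g_i^{(k)},C_i,R)\mid g_i^{(k)}]=g_i^{(k)}$ for every $i$, hence $\mathbb{E}[\hat{g}^{(k)}\mid\{g_j^{(k)}\}_{j=1}^n]=g^{(k)}$. Since $w^{(k)}$, $y^{(k)}$ and $\nabla f(y^{(k)})$ are all determined before the compression step of iteration $k$ (they depend only on the iterates and the stochastic gradients up to that step), taking expectation over the compression randomness first makes both $\mathbb{E}[\langle \hat{g}^{(k)}-g^{(k)},\cdot\rangle]$ terms vanish.

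Then I would multiply the bound of Lemma~\ref{lm-Egk} by $D_k>0$ and substitute it for $D_k\,\mathbb{E}[\|\hat{g}^{(k)}\|^2]$. The piece $4\tilde{\omega}LD_k\,\mathbb{E}[f(x^{(k+1)})-f^\star]=2\tilde{\omega}L\big(\tfrac{L\eta^2}{p^2}+B_k(1-C_k)\big)\mathbb{E}[f(x^{(k+1)})-f^\star]$ is moved to the left-hand side, turning the coefficient of $\mathbb{E}[f(x^{(k+1)})-f^\star]$ into $1-2\tilde{\omega}L(\tfrac{L\eta^2}{p^2}+B_k(1-C_k))$; the piece $D_k(1+\tfrac{4\tilde{\omega}L\eta}{p})\|\nabla f(y^{(k)})\|^2$ merges with $-\tfrac{\eta}{p}\|\nabla f(y^{(k)})\|^2$ to give the stated gradient coefficient $-\big(\tfrac{\eta}{p}-D_k(1+\tfrac{4\tilde{\omega}L\eta}{p})\big)$; and $D_k\big(4\tilde{\omega}LG^\star+(\tilde{\omega}+\tfrac1n)\tilde{\sigma}^2\big)$ gives the constant term. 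Collecting the $\|y^{(k)}-x^\star\|^2$ terms (using $0<C_k<1$) and relabeling yields exactly \eqref{eq-lm-unbiased}.

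There is no conceptual obstacle — the work is careful bookkeeping — but two points need care. First, $\mathbb{E}[f(x^{(k+1)})-f^\star]$ occurs on both sides, because Lemma~\ref{lm-Egk} bounds $\mathbb{E}[\|\hat{g}^{(k)}\|^2]$ in terms of $\mathbb{E}[f(x^{(k+1)})-f^\star]$ (recall $x^{(k+1)}=y^{(k)}-\tfrac{\eta}{p}\hat{g}^{(k)}$), so one must not absorb it and must keep its coefficient explicit; this is the main subtlety. Second, $D_k>0$ is exactly what preserves the inequality direction when multiplying the second-moment bound through, and this is guaranteed by \eqref{eq-ass-abc}.
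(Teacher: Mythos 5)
Your proof is correct and is exactly the paper's argument: specialize Lemma \ref{lm-public-base} to $x=x^\star$, kill the two cross terms by the (conditional) unbiasedness of the MSC output from Lemma \ref{lm:MSC}, and substitute the bound of Lemma \ref{lm-Egk} for $\mathbb{E}[\|\hat{g}^{(k)}\|^2]$, moving the resulting $4\tilde{\omega}LD_k\,\mathbb{E}[f(x^{(k+1)})-f^\star]$ piece to the left exactly as you describe. The only caveat (shared with the paper, which also passes over it) is that replacing the coefficient $-(1-C_k)\bigl(\tfrac{\mu}{2}-\tfrac{A_k}{2B_k}\bigr)$ from Lemma \ref{lm-public-base} by $-\bigl(\tfrac{\mu}{2}-\tfrac{A_k}{2B_k}\bigr)$ is not implied by $0<C_k<1$ alone unless that coefficient is nonpositive; this is immaterial in the lemma's only application, where $\lambda=0$ and $\mu=0$ make the term vanish.
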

\begin{proof}
By Lemma \ref{lm-public-base}, substituting $x$ by $x^\star$ in \eqref{eq-lm-public-0}, noting that $\mathbb{E}[\hat{g}^{(k)}-g^{(k)}]=0$ and further using Lemma \ref{lm-Egk}, we obtain \eqref{eq-lm-unbiased} immediately.
\end{proof}

\begin{lemma}\label{lm-R-omega}
If $R\ge 4(1+\omega)\ln(4(1+\omega))$, it holds that
\begin{align}
    R\left(\frac{\omega}{1+\omega}\right)^{R/2}\le1.\label{eq-R-tilde-omega}
\end{align}
Similarly, by letting $\delta=(1+\omega)^{-1}$, we have $R(1-\delta)^{R/2}\le1$ when $R\ge\max\{4,4/\delta\cdot\ln(2/\delta)\}$.
\end{lemma}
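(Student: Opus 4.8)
The plan is to take logarithms and reduce both claims to a statement of the form $R \ge a\ln R$, then use elementary calculus (monotonicity of $R \mapsto R - a\ln R$) together with the elementary fact $y - \ln y \ge 1$ for all $y > 0$, which I would record once at the outset.

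First, the unbiased case. Writing $c := 1+\omega$ and $\beta := \omega/(1+\omega) = 1 - 1/c$, the claim $R\beta^{R/2} \le 1$ is equivalent (for $\omega > 0$; the case $\omega = 0$ is trivial since $\beta = 0$) to $(R/2)\ln(1/\beta) \ge \ln R$. I would invoke the inequality $-\ln(1-x) \ge x$ on $[0,1)$, which gives $\ln(1/\beta) = -\ln(1 - 1/c) \ge 1/c$; hence it suffices to prove $R \ge 2c\ln R$. Consider $g(R) := R - 2c\ln R$, whose derivative $1 - 2c/R$ is positive for $R > 2c$. Since $4c \ge 4$ implies $\ln(4c) > 1$, the hypothesis point $R_0 := 4c\ln(4c)$ exceeds $4c > 2c$, so $g$ is increasing on $[R_0,\infty)$. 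It then remains to check $g(R_0) \ge 0$: a direct computation gives $g(R_0) = 4c\ln(4c) - 2c\ln\big(4c\ln(4c)\big) = 2c\big(\ln(4c) - \ln\ln(4c)\big)$, and setting $y := \ln(4c) > 1$ this equals $2c(y - \ln y) \ge 2c > 0$. Thus $g(R) \ge g(R_0) \ge 0$ for every $R \ge R_0$, which is the first statement.

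For the contractive statement, the same reduction applies verbatim with $1 - \delta$ in place of $\beta$: since $\ln(1/(1-\delta)) \ge \delta$, it suffices to show $R \ge (2/\delta)\ln R$. Let $h(R) := R - (2/\delta)\ln R$, which is increasing for $R > 2/\delta$, and let $R_0 := \max\{4, (4/\delta)\ln(2/\delta)\}$. I would split into two sub-cases. If $(4/\delta)\ln(2/\delta) \ge 4$, then $R_0 = (4/\delta)\ln(2/\delta) > 2/\delta$ (equivalently $\ln(2/\delta) > 1/2$, which holds since $\delta \le 1$), so $h$ is increasing on $[R_0,\infty)$; substituting and using $\ln(4/\delta) = \ln 2 + \ln(2/\delta)$ gives $h(R_0) = (2/\delta)\big(\ln(2/\delta) - \ln 2 - \ln\ln(2/\delta)\big) \ge (2/\delta)(1 - \ln 2) > 0$, again by $y - \ln y \ge 1$ with $y = \ln(2/\delta)$. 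If instead $(4/\delta)\ln(2/\delta) < 4$, then $\ln(2/\delta) < \delta$; combined with $\ln\delta \le 0$ this forces $\delta \ge \ln(2/\delta) = \ln 2 - \ln\delta \ge \ln 2 > 1/2$, so $R_0 = 4 > 2/\delta$, $h$ is increasing on $[4,\infty)$, and $h(4) = 4 - (4\ln 2)/\delta \ge 4 - (4\ln 2)/\ln 2 = 0$. In both sub-cases $h(R) \ge h(R_0) \ge 0$ for all $R \ge R_0$, completing the proof.

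The main obstacle here is bookkeeping rather than conceptual: one must verify that the threshold stated in the hypothesis exactly dominates $2c$ (resp.\ $2/\delta$), so that the monotonicity argument kicks in, and one must handle the contractive case's ``$\max$ with $4$'', which is genuinely needed only in the regime where $\delta$ is bounded away from $0$. Once the reduction to $R \ge a\ln R$ is in place, everything follows from the monotonicity of $R - a\ln R$ on $R > a$ and the single scalar inequality $y - \ln y \ge 1$.
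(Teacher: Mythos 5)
Your proof is correct and follows essentially the same route as the paper's: both take logarithms, use $\ln(1+1/\omega)\ge 1/(1+\omega)$ (equivalently $-\ln(1-x)\ge x$) to reduce the claim to an inequality of the form $R\ge a\ln R$, and then verify it at the hypothesized threshold via a monotonicity argument (you track $R-a\ln R$ where the paper tracks $\ln R/R$, and you avoid the paper's case split on $\omega\le 1$ versus $\omega>1$ by the single estimate $y-\ln y\ge 1$). A minor plus of your write-up is that you carry out the contractive half in full, including the regime where the $\max$ with $4$ is active, whereas the paper dispatches that half with ``similarly'' even though its threshold $(4/\delta)\ln(2/\delta)$ is not literally the unbiased threshold under the substitution $\delta=(1+\omega)^{-1}$.
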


\begin{proof}
It is easy to  see that $\frac{\ln(x)}{x}$ monotonically increases in $(0,e]$ and monotonically decreases in $[e,+\infty)$. If $\omega\le1$, since $R\ge4$ we have
\begin{align}
\frac{\ln R}{R}\le\frac{\ln4}{4}=\frac{\ln2}{2}\le\frac{\ln(1+1/\omega)}{2}.\nonumber
\end{align}
If $\omega>1$, noting $4(1+\omega)\ln(4(1+\omega))\ge e$, we have
\begin{align}
    \frac{\ln R}{R}\le&\frac{\ln(4(1+\omega)\ln(4(1+\omega))}{4(1+\omega)\ln(4(1+\omega))}=\frac{\ln(4(1+\omega))+\ln(\ln(4(1+\omega)))}{4(1+\omega)\ln(4(1+\omega))}\le\frac{1}{2(1+\omega)}\le\frac{\ln(1+1/\omega)}{2}
\end{align}
where we use $\ln(1+\omega^{-1})\geq\omega^{-1}/(1+\omega^{-1})$ in the last inequality.
Thus under both cases we have
\begin{align}
R\left(\frac{\omega}{1+\omega}\right)^R=&\left(\frac{\omega}{1+\omega}\right)^{R-\ln R/\ln(1+1/\omega)}\le\left(\frac{\omega}{1+\omega}\right)^{R-\ln(1+1/\omega)/2\cdot R/\ln(1+1/\omega)}=\left(\frac{\omega}{1+\omega}\right)^{R/2},
\end{align}
\ie, \eqref{eq-R-tilde-omega} holds.
\end{proof}

\begin{lemma}\label{lm-contractive-important-terms}
Under Assumptions \ref{asp:gd-noise} and \ref{ass:contract},  denote $\tilde{\delta}=1-(1-\delta)^R$.
\begin{itemize}
\item Under Assumption \ref{asp:nonconvex}, we have
    \begin{equation}
        \EE\left[\left\|\frac{1}{n}\sum_{i=1}^n\left(\hat{g}_i^{(k)}-g_i^{(k)}\right)\right\|^2\right]\le(1-\tilde{\delta})\left(2L(f(y^{(k)})-f^\star)+2LG^\star+\tilde{\sigma}^2\right).\label{eq-newasp-1}
    \end{equation}
    \item Under Assumption \ref{asp:convex}, letting $(1-\tilde{\delta})L\eta\le p/4$,
    we have
    \begin{equation}\label{eq-l14c01}
        \EE\left[\left\|\frac{1}{n}\sum_{i=1}^n\left(\hat{g}_i^{(k)}-g_i^{(k)}\right)\right\|^2\right]\le(1-\tilde{\delta})\left(8L\EE[f(x^{(k+1)})-f^\star]+\frac{12L\eta}{p}\|\nabla f(y^{(k)})\|^2+8LG^\star+2\tilde{\sigma}^2\right).
    \end{equation}
    and further 
    \begin{align}
    &\EE[\|\hat{g}^{(k)}-\nabla f(y^{(k)})\|^2]\le(1-\tilde{\delta})\left(16L\EE[f(x^{(k+1)})-f^\star]+\frac{24L\eta}{p}\|\nabla f(y^{(k)})\|^2+16LG^\star+4\tilde{\sigma}^2\right)+\frac2n\tilde{\sigma}^2,\label{eq-l14c02}\\
    &\EE[\|\hat{g}^{(k)}\|^2]\le(1-\tilde{\delta})\left(16L\EE[f(x^{(k+1)})-f^\star]+\frac{24L\eta}{p}\|\nabla f(y^{(k)})\|^2+16LG^\star+4\tilde{\sigma}^2\right)\nonumber\\
    &\quad\quad\quad\quad\quad\quad+2\|\nabla f(y^{(k)})\|^2+\frac2n\tilde{\sigma}^2.\label{eq-l14c03}
    \end{align}
\end{itemize}
\end{lemma}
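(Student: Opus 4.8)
Throughout I would condition on the information available at the start of iteration $k$ (so $y^{(k)}$ is deterministic), and organize everything around a single per-worker estimate that is then averaged and, in the convex case, rearranged. The starting point for all four inequalities is the MSC contraction bound: applying Lemma \ref{lm:MSC} conditionally on $g_i^{(k)}$ and then taking the outer expectation (tower property, since the compression randomness is independent of the gradient sampling) gives $\EE[\|\hat g_i^{(k)}-g_i^{(k)}\|^2]\le(1-\delta)^R\,\EE[\|g_i^{(k)}\|^2]=(1-\tilde\delta)\,\EE[\|g_i^{(k)}\|^2]$. Under Assumption \ref{asp:gd-noise} the $R$-fold minibatch average satisfies $\EE[\|g_i^{(k)}-\nabla f_i(y^{(k)})\|^2]\le\tilde\sigma^2$, hence $\EE[\|g_i^{(k)}\|^2]\le\|\nabla f_i(y^{(k)})\|^2+\tilde\sigma^2$. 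Convexity of $\|\cdot\|^2$ then yields $\EE[\|\frac1n\sum_i(\hat g_i^{(k)}-g_i^{(k)})\|^2]\le\frac1n\sum_i\EE[\|\hat g_i^{(k)}-g_i^{(k)}\|^2]\le(1-\tilde\delta)\big(\frac1n\sum_i\|\nabla f_i(y^{(k)})\|^2+\tilde\sigma^2\big)$, and under Assumption \ref{asp:nonconvex} Lemma \ref{lm:newasp} bounds $\frac1n\sum_i\|\nabla f_i(y^{(k)})\|^2\le 2L(f(y^{(k)})-f^\star)+2LG^\star$; this is exactly \eqref{eq-newasp-1}. Note that, unlike the unbiased case, no $1/n$ factor appears on the compression term because the contractive compressors may share randomness and only convexity of $\|\cdot\|^2$ is available.

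For \eqref{eq-l14c01} under Assumption \ref{asp:convex}, write $D:=\EE[\|\hat g^{(k)}-g^{(k)}\|^2]$, which is the left-hand side, and start from \eqref{eq-newasp-1}. I would replace $f(y^{(k)})$ by $f(x^{(k+1)})$ using convexity of $f$, i.e.\ $f(y^{(k)})\le f(x^{(k+1)})-\langle\nabla f(y^{(k)}),x^{(k+1)}-y^{(k)}\rangle$, together with the update $x^{(k+1)}=y^{(k)}-\frac\eta p\hat g^{(k)}$. Taking expectation and using $\EE[g^{(k)}]=\nabla f(y^{(k)})$ kills the stochastic cross term; the remaining bias cross term $\frac\eta p\EE[\langle\nabla f(y^{(k)}),\hat g^{(k)}-g^{(k)}\rangle]$ is bounded by Young's inequality by $\frac{3\eta}{2p}\|\nabla f(y^{(k)})\|^2+\frac{\eta}{2p}D$. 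Substituting back produces the self-referential inequality
\[
\Big(1-(1-\tilde\delta)\tfrac{L\eta}{p}\Big)D\le(1-\tilde\delta)\Big(2L\,\EE[f(x^{(k+1)})-f^\star]+\tfrac{3L\eta}{p}\|\nabla f(y^{(k)})\|^2+2LG^\star+\tilde\sigma^2\Big),
\]
and the hypothesis $(1-\tilde\delta)L\eta\le p/4$ forces the prefactor on $D$ to be at least $3/4$; multiplying through by $4/3$ and loosening constants gives \eqref{eq-l14c01}.

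The last two bounds follow from \eqref{eq-l14c01} by simple splittings. For \eqref{eq-l14c02}, decompose $\hat g^{(k)}-\nabla f(y^{(k)})=(\hat g^{(k)}-g^{(k)})+(g^{(k)}-\nabla f(y^{(k)}))$ and use $\|a+b\|^2\le 2\|a\|^2+2\|b\|^2$; the second piece contributes $2\,\EE[\|g^{(k)}-\nabla f(y^{(k)})\|^2]\le 2\tilde\sigma^2/n$, where the $1/n$ now does appear because the workers' gradient noises are independent so the cross terms vanish, and the first piece contributes $2D$ estimated via \eqref{eq-l14c01}. For \eqref{eq-l14c03}, decompose $\hat g^{(k)}=(\hat g^{(k)}-g^{(k)})+g^{(k)}$, apply $\|a+b\|^2\le 2\|a\|^2+2\|b\|^2$, bound $\EE[\|g^{(k)}\|^2]\le\|\nabla f(y^{(k)})\|^2+\tilde\sigma^2/n$, and use \eqref{eq-l14c01} for $2D$; the $2\|\nabla f(y^{(k)})\|^2$ term comes out exactly as stated.

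The main obstacle is the convex case: since $\hat g^{(k)}$ is biased (in contrast to the unbiased-compressor Lemma \ref{lm-Egk}), converting ``progress at $y^{(k)}$'' into ``progress at $x^{(k+1)}$'' reintroduces $D$ both through the bias cross term and through the quadratic step term, so one obtains an implicit inequality in $D$; closing it cleanly is precisely what the step-size condition $(1-\tilde\delta)L\eta\le p/4$ buys. The other point requiring care is bookkeeping of which cross terms vanish: independence across workers eliminates the stochastic-noise cross terms and yields the $1/n$ in \eqref{eq-l14c02}--\eqref{eq-l14c03}, whereas the compressors are not assumed independent, so the compression error in \eqref{eq-newasp-1} and \eqref{eq-l14c01} must be handled by convexity of $\|\cdot\|^2$ alone and cannot pick up a $1/n$.
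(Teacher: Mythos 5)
Your proposal is correct and follows essentially the same route as the paper: Jensen plus the MSC contraction bound and the minibatch variance bound for \eqref{eq-newasp-1}, then convexity to pass from $y^{(k)}$ to $x^{(k+1)}$, Young's inequality on the bias cross term, and closing the resulting self-referential inequality in $D$ via the condition $(1-\tilde{\delta})L\eta\le p/4$, followed by the standard splittings for \eqref{eq-l14c02}--\eqref{eq-l14c03}. The only (cosmetic) deviation is that you feed the tighter bound \eqref{eq-newasp-1} (constant $2L$ from Lemma \ref{lm:newasp}) into the convex-case argument, whereas the paper uses the looser co-coercivity bound $\frac{1}{n}\sum_i\|\nabla f_i(y^{(k)})\|^2\le 4L(f(y^{(k)})-f^\star)+4LG^\star$, which is exactly where the stated constants $8L$, $12L\eta/p$, $2\tilde{\sigma}^2$ come from; your constants are smaller and loosening them to match is legitimate since all terms are nonnegative.
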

\begin{proof}
Recall that  $g_i^{(k)}=\frac{1}{R}\sum_{r=1}^R\nabla F(y^{(k)},\xi_i^{(k,r)})$. Using Young's inequality and Lemma \ref{lm:MSC}, we have
\begin{align}
&\mathbb{E}\left[\left\|\frac{1}{n}\sum_{i=1}^n\left(\hat{g}_i^{(k)}-g_i^{(k)}\right)\right\|^2\right]\leq \frac{1}{n}\sum_{i=1}^n\mathbb{E}\left[\left\|\mathrm{MSC}({g}_i^{(k)},C,R)-g_i^{(k)}\right\|^2\right]\le\frac{1-\tilde{\delta}}{n}\sum_{i=1}^n\mathbb{E}[\|{g}_i^{(k)}\|^2].\label{eqn:zxcvbnm01}
\end{align}
By Assumption \ref{asp:gd-noise}, we have 
\begin{align}
    \frac1n\sum_{i=1}^n\EE[\|g_i^{(k)}\|^2]\le\frac{1}{n}\sum_{i=1}^n\EE[\|\nabla f_i(y^{(k)})\|^2]+\tilde{\sigma}^2.\label{eqn:zxcvbnm02}
\end{align}
Furthermore, by applying Lemma \ref{lm:newasp}, 
\begin{itemize}
    \item under Assumption \ref{asp:nonconvex}, we have 
    \begin{align}
   \frac{1}{n}\sum_{i=1}^n\EE[\|\nabla f_i(y^{(k)})\|^2]\leq 2L(f(y^{(k)})-f^\star)+2LG^\star.\label{eqn:zxcvbnm03}
    \end{align}
    Combing \eqref{eqn:zxcvbnm01}\eqref{eqn:zxcvbnm02} and \eqref{eqn:zxcvbnm03} leads to \eqref{eq-newasp-1}. 
\item under Assumption \ref{asp:convex}, we have
\begin{align}
    \frac1n\sum_{i=1}^n\EE[\|\nabla f_i(y^{(k)})\|^2]\le&\frac2n\sum_{i=1}^n\EE[\|\nabla f_i(y^{(k)})-\nabla f_i(x^\star)\|^2]+\frac2n\sum_{i=1}^n\EE[\|\nabla f_i(x^\star)\|^2]\\
    \le&\frac2n\sum_{i=1}^n\EE[\|\nabla f_i(y^{(k)})-\nabla f_i(x^\star)\|^2]+4LG^\star.\label{eqn:zxcvbnm04}
\end{align}
By convexity and $L$-smooth, we have
\begin{align}
    \frac1n\sum_{i=1}^n\|\nabla f_i(y^{(k)})-\nabla f_i(x^\star)\|^2\le 2L(f(y^{(k)})-f^\star)\le2L\EE[f(x^{(k+1)})-f^\star-\langle\nabla f(y^{(k)}),x^{(k+1)}-y^{(k)}\rangle],\label{eqn:zxcvbnm05}
\end{align}
wherein the inner product can be further specialized by the definition of $x^{(k+1)}$ in NEOLITHIC as
\begin{align}
    -\EE[\langle\nabla f(y^{(k)}),x^{(k+1)}-y^{(k)}\rangle]=&\frac{\eta}{p}\EE[\langle\nabla f(y^{(k)}),\hat{g}^{(k)}\rangle]\\
    =&\frac{\eta}{p}\EE[\langle\nabla f(y^{(k)}),\hat{g}^{(k)}-g^{(k)}\rangle]+\frac{\eta}{p}\|\nabla f(y^{(k)})\|^2\\
    \le&\frac{3\eta}{2p}\|\nabla f(y^{(k)})\|^2+\frac{\eta}{2p}\EE[\|\hat{g}^{(k)}-g^{(k)}\|^2].\label{eqn:zxcvbnm06}
\end{align}
Combining \eqref{eqn:zxcvbnm01}\eqref{eqn:zxcvbnm02}\eqref{eqn:zxcvbnm04}\eqref{eqn:zxcvbnm05}\eqref{eqn:zxcvbnm06}, we know if $(1-\tilde{\delta})L\eta\le p/4$, \ie, $2(1-\tilde{\delta})L\eta/p\le1/2$, we obtain \eqref{eq-l14c01}. Further using
\begin{align}
    \EE[\|\hat{g}^{(k)}-\nabla f(y^{(k)})\|^2]\le&2\EE[\|\hat{g}^{(k)}-g^{(k)}\|^2]+2\EE[\|g^{(k)}-\nabla f(y^{(k)})\|^2],\\
    \EE[\|\hat{g}^{(k)}\|^2]\le&2\EE[\hat{g}^{(k)}-g^{(k)}\|^2]+2\EE[\|g^{(k)}-\nabla f(y^{(k)})\|^2]+2\|\nabla f(y^{(k)})\|^2,
\end{align}
we obtain \eqref{eq-l14c02} and \eqref{eq-l14c03}.
\end{itemize}
\end{proof}

\begin{lemma}\label{lm-contractive-strong}
Under Assumptions \ref{asp:convex}, \ref{asp:gd-noise}, and \ref{ass:contract}, it holds that
\begin{align}
    &(1-4\tilde{\omega}LN_k)\mathbb{E}[f(x^{(k+1)})-f^\star]+\frac{1-C_k}{2B_k}\mathbb{E}[\|w^{(k+1)}-x^\star\|^2]\nonumber\\
    \le&C_k(f(x^{(k)})-f^\star)+\frac{(1-A_k)(1-C_k)(1+M_k)}{2B_kM_k}\|w^{(k)}-x^\star\|^2-\frac{(1-C_k)}{2}\cdot\left(\mu-\frac{A_k(1+M_k)}{B_kM_k}\right)\|y^{(k)}-x^\star\|^2\nonumber\\
    &-\left(\frac{p\eta-2L\eta^2}{2p^2}-B_k(1-C_k)-\frac{6\tilde{\omega}L\eta N_k}{p}\right)\|\nabla f(y^{(k)})\|^2+4\tilde{\omega}N_kLG^\star+\left(\frac{L\eta^2}{np^2}+\frac{B_k(1-C_k)}{n}+\tilde{\omega}N_k\right)\tilde{\sigma}^2,\label{eq-lm-contractive-strong}
\end{align}
 if $\tilde{\omega}L\eta\le p/4$ and \eqref{eq-ass-abc} holds, where $\{M_k\}$ is an arbitrary positive sequence, and $N_k:=2L\eta^2/p^2+(2+M_k)B_k(1-C_k)+\eta/p$.
\end{lemma}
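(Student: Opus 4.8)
The plan is to follow essentially the same route used for the unbiased case in Lemma \ref{lm-strongly-convex-2}, but now tracking the compression \emph{bias} $\hat g^{(k)}-g^{(k)}$ instead of exploiting its mean-zero property. I would start from the deterministic inequality in Lemma \ref{lm-public-base} with $x=x^\star$: this already isolates two ``error'' terms involving $\hat g^{(k)}-g^{(k)}$, namely the cross term $-(1-C_k)\EE[\langle \hat g^{(k)}-g^{(k)},\,(1-A_k)w^{(k)}+A_k y^{(k)}-x^\star\rangle]$ and $-\tfrac{\eta}{p}\EE[\langle\nabla f(y^{(k)}),\hat g^{(k)}-g^{(k)}\rangle]$, plus the term $\big(\tfrac{L\eta^2}{2p^2}+\tfrac{B_k(1-C_k)}{2}\big)\EE[\|\hat g^{(k)}\|^2]$. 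The job is to bound each of these using Lemma \ref{lm-contractive-important-terms}, which supplies $\EE[\|\hat g^{(k)}-g^{(k)}\|^2]\le(1-\tilde\delta)(\cdots)$ in \eqref{eq-l14c01}, together with \eqref{eq-l14c02} and \eqref{eq-l14c03} for $\EE[\|\hat g^{(k)}-\nabla f(y^{(k)})\|^2]$ and $\EE[\|\hat g^{(k)}\|^2]$.

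Concretely, for the cross term $-(1-C_k)\langle\hat g^{(k)}-g^{(k)},\,(1-A_k)w^{(k)}+A_k y^{(k)}-x^\star\rangle$, I would recognize that $(1-A_k)w^{(k)}+A_k y^{(k)}-x^\star = \tfrac1{1-C_k}\big(y^{(k)}-C_k x^{(k)}-(1-C_k)x^\star\big)$ from \eqref{eq-wabc02}, and then apply Young's inequality with the free parameter $M_k$ to split it as $\le \tfrac{1}{2M_k}\cdot(\text{something})\cdot\|\hat g^{(k)}-g^{(k)}\|^2 + \tfrac{M_k}{2}\cdot(\text{something})\cdot\|(1-A_k)w^{(k)}+A_k y^{(k)}-x^\star\|^2$. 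The second piece is again re-expanded via \eqref{eq-wabc02} into contributions to $\|w^{(k)}-x^\star\|^2$ and $\|y^{(k)}-x^\star\|^2$, which explains the $(1+M_k)/M_k$ factors appearing in \eqref{eq-lm-contractive-strong}. For $-\tfrac{\eta}{p}\langle\nabla f(y^{(k)}),\hat g^{(k)}-g^{(k)}\rangle$ I would use Young's inequality to produce a $\|\nabla f(y^{(k)})\|^2$ term and a multiple of $\EE[\|\hat g^{(k)}-g^{(k)}\|^2]$. Then substitute \eqref{eq-l14c01} for every occurrence of $\EE[\|\hat g^{(k)}-g^{(k)}\|^2]$ and \eqref{eq-l14c03} for $\EE[\|\hat g^{(k)}\|^2]$; collecting the $(1-\tilde\delta)\,\EE[f(x^{(k+1)})-f^\star]$ pieces gives the $4\tilde\omega L N_k$ coefficient on the left, collecting the $\|\nabla f(y^{(k)})\|^2$ pieces gives the negative coefficient $\tfrac{p\eta-2L\eta^2}{2p^2}-B_k(1-C_k)-\tfrac{6\tilde\omega L\eta N_k}{p}$, and the $G^\star$ and $\tilde\sigma^2$ pieces accumulate into $4\tilde\omega N_k LG^\star$ and $\big(\tfrac{L\eta^2}{np^2}+\tfrac{B_k(1-C_k)}{n}+\tilde\omega N_k\big)\tilde\sigma^2$. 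The definition $N_k:=2L\eta^2/p^2+(2+M_k)B_k(1-C_k)+\eta/p$ is precisely the bookkeeping constant that aggregates the coefficients multiplying $(1-\tilde\delta)(\cdots)$ after all the Young's-inequality splits; I would reverse-engineer the exact weights to land on it, and the condition $\tilde\omega L\eta\le p/4$ is what makes \eqref{eq-l14c01} applicable and keeps the $\EE[f(x^{(k+1)})-f^\star]$ coefficient on the left below $1$.

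The main obstacle I anticipate is purely the algebraic bookkeeping: choosing the Young's-inequality weights consistently so that (i) all the $\|w^{(k)}-x^\star\|^2$ contributions combine into exactly $\tfrac{(1-A_k)(1-C_k)(1+M_k)}{2B_kM_k}$, (ii) all the $\|y^{(k)}-x^\star\|^2$ contributions combine into $\tfrac{(1-C_k)}{2}\big(\mu-\tfrac{A_k(1+M_k)}{B_kM_k}\big)$ with the correct sign, and (iii) every stray multiple of $\EE[\|\hat g^{(k)}-g^{(k)}\|^2]$ or $\EE[\|\hat g^{(k)}\|^2]$ gets absorbed into the single constant $N_k$. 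There is no conceptual difficulty — the notation $\tilde\omega=(1-\delta)^R=1-\tilde\delta$ in the contractive case means $(1-\tilde\delta)=\tilde\omega$, so the right-hand sides of Lemma \ref{lm-contractive-important-terms} read as $\tilde\omega(\cdots)$, matching the $\tilde\omega$'s in the target inequality. I would also double-check that the ``arbitrary positive sequence $\{M_k\}$'' genuinely remains free (it is only used inside Young splits, never constrained), so the lemma holds for every such choice, which is what later allows tuning $M_k$ in the downstream convergence proof.
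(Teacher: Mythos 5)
Your proposal follows essentially the same route as the paper's proof: specialize Lemma \ref{lm-public-base} at $x=x^\star$, apply Young's inequality with the free parameter $M_k$ to the compression-bias cross term and a plain Young split to the gradient cross term, then invoke Lemma \ref{lm-contractive-important-terms} and collect all coefficients into $N_k$. The only small inaccuracy is that the split of $\|(1-A_k)w^{(k)}+A_ky^{(k)}-x^\star\|^2$ into $(1-A_k)\|w^{(k)}-x^\star\|^2+A_k\|y^{(k)}-x^\star\|^2$ comes from convexity of the squared norm, not from \eqref{eq-wabc02}, but this does not affect the argument.
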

\begin{proof}
Substituting $x$ by $x^\star$ in Lemma \ref{lm-public-base}, noting that
\begin{align*}
    &-\mathbb{E}[\langle \hat{g}^{(k)}-g^{(k)},(1-A_k)w^{(k)}+A_ky^{(k)}-x^\star\rangle]\nonumber\\
    \le&\frac{B_kM_k}{2}\mathbb{E}[\|\hat{g}^{(k)}-g^{(k)}\|^2]+\frac{1-A_k}{2B_kM_k}\|w^{(k)}-x^\star\|^2+\frac{A_k}{2B_kM_k}\|y^{(k)}-x^\star\|^2
\end{align*}
and 
\begin{align*}
    -\mathbb{E}[\langle\nabla f(y^{(k)}),\hat{g}^{(k)}-g^{(k)}\rangle]\le\frac{1}{2}\|\nabla f(y^{(k)})\|^2+\frac{1}{2}\mathbb{E}[\|\hat{g}^{(k)}-g^{(k)}\|^2],
\end{align*}
and using Lemma \ref{lm-contractive-important-terms}, we obtain \eqref{eq-lm-contractive-strong}.
\end{proof}

\subsection{Non-Convex Case}
When $\gamma_k\equiv \gamma=p=1$, the algorithm reduces to
\begin{align}
g_i^{(k)}=\frac{1}{R}\sum_{r=0}^{R-1}\nabla F (x^{(k)};\xi_i^{(k,r)}),\quad 
    \hat{g}_i^{(k)}=\mathrm{MSC}\left(g_i^{(k)},C_i,R\right),\quad 
    x^{(k+1)}=&x^{(k)}-\frac{\eta}{n}\sum_{i=1}^n\hat{g}_i^{(k)},
\end{align}
and $y^{(k)}\equiv x^{(k)}$, for any $k\geq 0$.
The convergence analysis follows the proof for distributed SGD with inexact consensus.

\subsubsection{Proof of Theorem \ref{thm:upper-nc-contractive}}\label{thm:upper-nc}
    Using Assumption \ref{ass-eq-L-smooth}, we have
    \begin{align}
        f(x^{(k+1)})\leq &f(x^{(k)})+\langle \nabla f(x^{(k)}),x^{(k+1)}-x^{(k)}\rangle+\frac{L}{2}\|x^{(k+1)}-x^{(k)}\|^2\\
        \leq &f(x^{(k)})-\eta\left\langle \nabla f(x^{(k)}),\frac{1}{n}\sum_{i=1}^{n}\hat{g}_i^{(k)}\right\rangle+\frac{\eta^2 L}{2}\left\|\frac{1}{n}\sum_{i=1}^{n}\hat{g}_i^{(k)}\right\|^2\\
        = &f(x^{(k)})-\eta\left\langle \nabla f(x^{(k)}),\frac{1}{n}\sum_{i=1}^{n}{g}_i^{(k)}\right\rangle-\eta\left\langle \nabla f(x^{(k)}),\frac{1}{n}\sum_{i=1}^{n}\left(\hat{g}_i^{(k)}-g_i^{(k)}\right)\right\rangle +\frac{\eta^2 L}{2}\left\|\frac{1}{n}\sum_{i=1}^{n}\hat{g}_i^{(k)}\right\|^2.\label{eqn:vnigner}
    \end{align}
    Using Young's inequality, we have 
    \begin{align}
        -\eta\left\langle \nabla f(x^{(k)}),\frac{1}{n}\sum_{i=1}^{n}\left(\hat{g}_i^{(k)}-g_i^{(k)}\right)\right\rangle\leq \frac{\eta}{2}\|\nabla f(x^{(k)})\|^2+\frac{\eta}{2}\left\|\frac{1}{n}\sum_{i=1}^{n}\left(\hat{g}_i^{(k)}-g_i^{(k)}\right)\right\|^2.\label{eqn:vcixngvdsgf}
    \end{align}
    Plugging \eqref{eqn:vcixngvdsgf} into \eqref{eqn:vnigner}, taking the expectation, we have 
    \begin{align}
        \EE[f(x^{(k+1)})]\leq &\EE[f(x^{(k)})]-\frac{\eta}{2}\EE[\|\nabla f(x^{(k)})\|^2]+\frac{\eta}{2}\EE\left[\left\|\frac{1}{n}\sum_{i=1}^{n}\left(\hat{g}_i^{(k)}-g_i^{(k)}\right)\right\|^2\right]+\frac{\eta^2 L}{2}\EE\left[\left\|\frac{1}{n}\sum_{i=1}^{n}\hat{g}_i^{(k)}\right\|^2\right].
    \end{align}
    Further applying Young's inequality and supposing $\eta \leq \frac{1}{4L}$, we have
    \begin{align}
        \EE[f(x^{(k+1)})]\leq &\EE[f(x^{(k)})]-\frac{\eta}{2}\EE[\|\nabla f(x^{(k)})\|^2]+\eta\EE\left[\left\|\frac{1}{n}\sum_{i=1}^{n}\left(\hat{g}_i^{(k)}-g_i^{(k)}\right)\right\|^2\right]+{\eta^2 L}\EE\left[\left\|\frac{1}{n}\sum_{i=1}^{n}{g}_i^{(k)}\right\|^2\right].
    \end{align}
    Noting that  $\frac{1}{n}\sum_{i=1}^{n}{g}_i^{(k)}$ is an unbiased estimator of $\nabla f(x^{(k)})$, we reach
    \begin{align}
        \EE[f(x^{(k+1)})]\leq &\EE[f(x^{(k)})]-\frac{\eta(1-2\eta L)}{2}\EE[\|\nabla f(x^{(k)})\|^2]+\eta\EE\left[\left\|\frac{1}{n}\sum_{i=1}^{n}\left(\hat{g}_i^{(k)}-g_i^{(k)}\right)\right\|^2\right]+\frac{\eta^2 L\tilde \sigma^2}{n}\\
        \leq &\EE[f(x^{(k)})]-\frac{\eta}{4}\EE[\|\nabla f(x^{(k)})\|^2]+\eta\EE\left[\left\|\frac{1}{n}\sum_{i=1}^{n}\left(\hat{g}_i^{(k)}-g_i^{(k)}\right)\right\|^2\right]+\frac{\eta^2 L\tilde \sigma^2}{n}.\label{eqn:vnisdngsd}
    \end{align}
    Now, using Lemma \ref{lm-contractive-important-terms} (note that $y^{(k)}$ is equivalent to $x^{(k)}$ in this scenario) in \eqref{eqn:vnisdngsd}, we obtain
    \begin{align}
        \EE[f(x^{(k+1)})]\leq &\EE[f(x^{(k)})]-\frac{\eta}{4}\EE[\|\nabla f(x^{(k)})\|^2]+2\eta L(1-\tilde \delta ) (\EE[f(x^{(k)})]-f^\star) \\
        &\quad +\eta(1-\tilde \delta) (2LG^\star+\tilde \sigma^2)+\frac{\eta^2 L\tilde \sigma^2}{n}.\label{eqn:vnisdnfa}
    \end{align}
    Rearranging \eqref{eqn:vnisdnfa} leads to 
    \begin{align}
        &(1+2\eta L(1-\tilde \delta ) )^{-k-1}\EE[f(x^{(k+1)})]\\
        \leq &(1+2\eta L(1-\tilde \delta ) )^{-k}(\EE[f(x^{(k)})]-f^\star)-(1+2\eta L(1-\tilde \delta ) )^{-k-1}\frac{\eta}{4}\EE[\|\nabla f(x^{(k)})\|^2] \\
        &\quad +\eta(1+2\eta L(1-\tilde \delta ) )^{-k-1}\left((1-\tilde \delta) (2LG^\star+\tilde \sigma^2)+\frac{\eta L\tilde \sigma^2}{n}\right).\label{eqn:vnisdngsd1}
    \end{align}
    Suppose $2\eta L(1-\tilde \delta)\leq 1/(K+1)$ so that $1\leq (1+2\eta L(1-\tilde \delta ) )^{k+1}\leq (1+1/(K+1) )^{K+1}\leq e$.
Then iterating \eqref{eqn:vnisdngsd1} over $k=0,1,\dots, K $, we obtain
    \begin{align}
       \frac{1}{K+1}\sum_{k=0}^{K}\EE[\|\nabla f(x^{(k)})\|^2] \leq &
       \frac{1}{K+1}\sum_{k=0}^{K}e(1+2\eta L(1-\tilde \delta ) )^{-k-1}\EE[\|\nabla f(x^{(k)})\|^2]\\
       \leq &\cO\left(\frac{\EE[f(x^{(0)})-f^\star]}{\eta (K+1)}+(1-\tilde{\delta})\left(LG^\star+\frac{\sigma^2}{R}\right)+\frac{\eta L\sigma^2}{Rn}\right)\\
        =&\cO\left(\frac{\Delta_f}{\eta (K+1)}+(1-\tilde{\delta})\left(LG^\star+\frac{\sigma^2}{R}\right)+\frac{\eta L\sigma^2}{Rn}\right).\label{eqn:gvnisgds}
    \end{align}
    Letting 
    \[
    \eta = \min\left\{\frac{1}{4L}, \left(\frac{Rn\Delta_f}{(K+1)L\sigma^2}\right)^{1/2}, \frac{1}{2(1-\tilde \delta)(K+1)L}\right\},
    \]
   following \eqref{eqn:gvnisgds} and noting $T=KR$, we obtain 
    \[
    \frac{1}{(K+1)}\sum_{k=0}^{K}\EE[\|\nabla f(x^{(k)})\|^2]=\cO\left(\left(\frac{L\Delta_f\sigma^2}{nT}\right)^{1/2}+\frac{RL\Delta_f}{T}+(1-\tilde{\delta})\left(L(G^\star+\Delta_f)+\frac{\sigma^2}{R}\right)\right).
    \]
    Finally, setting 
    \[
    R = \left\lceil\frac{1}{\delta}\max\{\ln((L(G^\star+\Delta_f)+{\sigma^2})\delta T/(L\Delta_f)),1\}\right\rceil,
    \]
    then 
    \begin{equation}
        1-\tilde{\delta}=(1-\delta)^R\leq \exp(-R\delta )\leq \frac{L\Delta_f  }{\delta T (L(G^\star+\Delta_f)+\sigma^2)}\leq \frac{RL\Delta_f }{T(L(G^\star+\Delta_f)+\sigma^2/R)}
    \end{equation} and we thus have 
        \[
    \frac{1}{(K+1)}\sum_{k=0}^{K}\EE[\|\nabla f(x^{(k)})\|^2]=\cO\left(\left(\frac{L\Delta_f\sigma^2}{nT}\right)^{1/2}+\frac{RL\Delta_f}{T}\right)=\tilde{\cO}\left(\left(\frac{L\Delta_f\sigma^2}{nT}\right)^{1/2}+\frac{L\Delta_f\ln(T)}{\delta T}\right),
    \]
    which implies the complexity in \eqref{eq-upper-bound-contractive-nc}.

    For unbiased compressors, we additionally have 
    \[
    \EE\left[\left\langle \nabla f(x^{(k)}),\frac{1}{n}\sum_{i=1}^{n}\left(\hat{g}_i^{(k)}-g_i^{(k)}\right)\right\rangle\right]=0.
    \]
    As a result, we have 
    \begin{align}
        \EE[f(x^{(k+1)})]\leq &\EE[f(x^{(k)})]-\eta\EE[\| \nabla f(x^{(k)})\|^2]+\frac{\eta^2 L}{2}\EE\left[\left\|\frac{1}{n}\sum_{i=1}^{n}\hat{g}_i^{(k)}\right\|^2\right]\\
        =&\EE[f(x^{(k)})]-\eta\EE[\| \nabla f(x^{(k)})\|^2]+\frac{\eta^2 L}{2}\EE\left[\left\|\frac{1}{n}\sum_{i=1}^{n}(\hat{g}_i^{(k)}-g_i^{(k)})\right\|^2\right]+\frac{\eta^2 L}{2}\EE\left[\left\|\frac{1}{n}\sum_{i=1}^{n}{g}_i^{(k)}\right\|^2\right]\\
        \leq &\EE[f(x^{(k)})]-\eta\left(1-\frac{\eta L }{2}\right)\EE[\| \nabla f(x^{(k)})\|^2]+\frac{\eta^2 L}{2}\EE\left[\left\|\frac{1}{n}\sum_{i=1}^{n}(\hat{g}_i^{(k)}-g_i^{(k)})\right\|^2\right]+\frac{\eta^2 L\tilde \sigma^2}{n}.\label{eqn:vnignerdsad}
    \end{align}
    Applying Lemma \ref{lm-R-omega}, following the similar analysis as of the contractive compressors, and setting 
    \[
    \eta=\min\left\{\frac{1}{L},\left(\frac{\Delta_f}{(K+1)L[\tilde{\omega}(LG^\star+\sigma^2/R)+\sigma^2/(Rn)]}\right)^{1/2},\frac{1}{\sqrt{\tilde{\omega}(K+1)}L}\right\}
    \]
    and
    \[
    R=\left\lceil(1+\omega)\max\{\ln(n^2L^2(G^\star+\Delta_f)^2(1+\omega)/\sigma^4),\ln((1+\omega)/n)\}\right\rceil
    \]
    we can reach
        \[
    \frac{1}{K+1}\sum_{k=0}^{K}\EE[\|\nabla f(x^{(k)})\|^2]=\tilde{\cO}\left(\left(\frac{L\Delta_f\sigma^2}{nT}\right)^{1/2}+\frac{(1+\omega)L\Delta_f}{T}\right),
    \]
    which implies the complexity in \eqref{eq-upper-bound-unbiased-nc}.\hfill$\square$

\subsection{Generally Convex Case}\label{app:upper-gconvex}

\subsubsection{Proof of the Unbiased Compressor Case in Theorem \ref{thm:upper-gc-contractive}}\label{app:upper-convex-unbiased}
For convenience, we denote
\begin{align}
\eta_0:=\frac{1}{\frac{2\sqrt{2}(K+2)^{3/2}\sqrt{4\tilde{\omega}LG^\star+(\tilde{\omega}+1/n)\tilde{\sigma}^2}}{27L\sqrt{\Delta_x}}+1},
\end{align}
and set 
\begin{align}
    R=\left\lceil(1+\omega)\max\left\{4\ln(4(1+\omega)),\ln\left((1+\omega)(n+2)+\frac{4(1+\omega)^2(nLG^\star)^2}{\sigma^4}+\frac{729(1+\omega)^2nL^2\Delta_x}{8\sigma^2}\right)\right\}\right\rceil,\label{eq-thm-acc-sto-gen-cov-R}
\end{align}
\begin{align}
    \eta=\frac{\eta_0}{L}=\frac{27\sqrt{\Delta_x}}{2\sqrt{2}(K+2)^{3/2}\sqrt{4\tilde{\omega}LG^\star+(\tilde{\omega}+1/n)\tilde{\sigma}^2}+27L\sqrt{\Delta_x}}.\label{eq-thm-acc-sto-gen-cov-eta}
\end{align}
Note that $R\ge4(1+\omega)\ln(4(1+\omega))$, we have
\begin{align}
\sqrt{R}\tilde{\omega}\le R\tilde{\omega}\overset{(a)}{\le}(1+\omega)\left(\frac{\omega}{1+\omega}\right)^{R/2}=\sqrt{(1+\omega)\tilde{\omega}}\overset{(b)}{\le}\min\left\{\frac{\sigma^2}{2nLG^\star},\frac{2\sqrt{2}\sqrt{\sigma^2/n}}{27L\sqrt{\Delta_x}}\right\},\label{eq-thm-acc-sto-gen-cov-5}
\end{align}
where (a) is due to Lemma \ref{lm-R-omega}, and (b) is due to $\tilde{\omega}\le\min\left\{\frac{\sigma^4}{4(1+\omega)(nLG^\star)^2},\frac{8\sigma^2}{729(1+\omega)nL^2\Delta_x}\right\}$.
Substituting $\eta$ by $\eta_0/L$, $p$ by 2, $\gamma_k$ by $6/(k+3)$ in Lemma \ref{lm-strongly-convex-2}, and letting $\lambda=0$, we obtain
\begin{align}
&\left(\frac{\eta_0(k+3)^2}{18L}-\frac{(\eta_0^3+\eta_0^2)\tilde{\omega}(k+3)^2}{36L}\right)\mathbb{E}[f(x^{(k+1)})-f^\star]+\mathbb{E}[\|z^{(k+1)}-x^\star\|^2]\nonumber\\
\le&\left(\frac{\eta_0(k+2)^2}{18L}-\frac{\eta_0(k+4)}{18L}\right)\mathbb{E}[f(x^{(k)})-f^\star]-\frac{\eta_0^2(k+3)^2}{144L^2}\left(4-(\eta_0+1)(1+2\eta_0\tilde{\omega})\right)\cdot\|\nabla f(y^{(k)})\|^2\nonumber\\
&+\|z^{(k)}-x^\star\|^2+\frac{(\eta_0^3+\eta_0^2)(k+3)^2}{144L^2}\left(4\tilde{\omega}LG^\star+\left(\frac{1}{n}+\tilde{\omega}\right)\tilde{\sigma}^2\right),\quad k=0,1,\cdots,K-1.\label{eq-thm-acc-sto-gen-cov-1}
\end{align}
Note that $\eta_0\le1$, and \eqref{eq-thm-acc-sto-gen-cov-R} implies that $\tilde{\omega}\le\frac{1}{2}$, we know $4-(\eta_0+1)(1+2\eta_0\tilde{\omega})\ge0$, thus \eqref{eq-thm-acc-sto-gen-cov-1} can be further simplified as
\begin{align}
&\left(\frac{\eta_0(k+3)^2}{18L}-\frac{\eta_0^2\tilde{\omega}(k+3)^2}{18L}\right)\mathbb{E}[f(x^{(k+1)})-f^\star]+\mathbb{E}[\|z^{(k+1)}-x^\star\|^2]\nonumber\\
\le&\left(\frac{\eta_0(k+2)^2}{18L}-\frac{\eta_0(k+4)}{18L}\right)\mathbb{E}[f(x^{(k)})-f^\star]+\|z^{(k)}-x^\star\|^2+\frac{\eta_0^2(k+3)^2}{72L^2}\left(4\tilde{\omega}LG^\star+\left(\frac{1}{n}+\tilde{\omega}\right)\tilde{\sigma}^2\right).\label{eq-thm-acc-sto-gen-cov-2}
\end{align}
Summing up \eqref{eq-thm-acc-sto-gen-cov-2} from $k=0$ to $K-1$, we obtain
\begin{align}
&\left(\frac{\eta_0(K+2)^2}{18L}-\frac{\eta_0^2\tilde{\omega}(K+2)^2}{18L}\right)\mathbb{E}[f(x^{(K)})-f^\star]+\sum_{k=1}^{K-1}\left(\frac{\eta_0(k+2)^2}{18L}-\frac{\eta_0^2\tilde{\omega}(k+2)^2}{18L}\right)\mathbb{E}[f(x^{(k)})-f^\star]\nonumber\\
\le&\sum_{k=1}^{K-1}\left(\frac{\eta_0(k+2)^2}{18L}-\frac{\eta_0(k+4)}{18L}\right)\mathbb{E}[f(x^{(k)})-f^\star]+\|z^0-x^\star\|^2-\mathbb{E}[\|z^{K}-x^\star\|^2]\nonumber\\
&+\sum_{k=0}^{K-1}\frac{\eta_0^2(k+3)^2[4\tilde{\omega}LG^\star+(\tilde{\omega}+1/n)\tilde{\sigma}^2]}{72L^2},
\end{align}
thus we further have
\begin{align}
&\left(\frac{\eta_0(K+2)^2}{18L}-\frac{\eta_0^2\tilde{\omega}(K+2)^2}{18L}\right)\mathbb{E}[f(x^{(K)})-f^\star]\nonumber\\
\le&\sum_{k=1}^{K-1}\left(\frac{\eta_0^2\tilde{\omega}(k+2)^2}{18L}-\frac{\eta_0(k+4)}{18L}\right)\mathbb{E}[f(x^{(k)})-f^\star]+\|z^0-x^\star\|^2-\mathbb{E}[\|z^{(K)}-x^\star\|^2]\nonumber\\
&+\sum_{k=0}^{K-1}\frac{\eta_0^2(k+3)^2[4\tilde{\omega}LG^\star+(\tilde{\omega}+1/n)\tilde{\sigma}^2]}{72L^2}\nonumber\\
\overset{(a)}{\le}&\|z^0-x^\star\|^2+\frac{\eta_0^2(K+3)^3[4\tilde{\omega}LG^\star+(\tilde{\omega}+1/n)\tilde{\sigma}^2]}{216L^2}\nonumber\\
\le&\Delta_x+\frac{8\eta_0^2(K+2)^3[4\tilde{\omega}LG^\star+(\tilde{\omega}+1/n)\tilde{\sigma}^2]}{729L^2},\label{eq-thm-acc-sto-gen-cov-3}
\end{align}
where (a) is due to $\eta_0\tilde{\omega}\le\frac{1}{K+2}$ because 
\begin{align}
\eta_0\le\frac{27L\sqrt{\Delta_x}}{2\sqrt{2}(K+2)^{3/2}\sqrt{4\tilde{\omega}LG^\star+(\tilde{\omega}+1/n)\tilde{\sigma}^2}},\quad\mathrm{and}\ \tilde{\omega}\le\frac{2\sqrt{2}\sqrt{\tilde{\sigma}^2/n}}{27L\sqrt{\Delta_x}}\mathrm{(since \eqref{eq-thm-acc-sto-gen-cov-5} holds)}.
\end{align}
Also note that $\eta_0\le1$ and $\tilde{\omega}\le1/2$ implies
\begin{align}
\frac{\eta_0^2\tilde{\omega}(K+2)^2}{18L}\le\frac{\eta_0(K+2)^2}{36L},
\end{align}
we have
\begin{align}
\mathbb{E}[f(x^{(K)})-f^\star]\le&\frac{36L\Delta_x}{\eta_0(K+2)^2}+\frac{32\eta_0(K+2)[4\tilde{\omega}LG^\star+(\tilde{\omega}+1/n)\tilde{\sigma}^2]}{81L}\nonumber\\
\le&\frac{36L\Delta_x}{(K+2)^2}+\frac{16\sqrt{2\Delta_x}\cdot\sqrt{4\tilde{\omega}LG^\star+(\tilde{\omega}+1/n)\tilde{\sigma}^2}}{3\sqrt{K+2}}\nonumber\\
\le&\frac{36L\Delta_xR^2}{T^2}+\frac{16\sqrt{2\Delta_x}\cdot\sqrt{4R\tilde{\omega}LG^\star+(\tilde{\omega}+1/n)\sigma^2}}{3\sqrt{T}},\label{eq-thm-acc-sto-gen-cov-4}
\end{align}
Combining \eqref{eq-thm-acc-sto-gen-cov-4}\eqref{eq-thm-acc-sto-gen-cov-5} and note that $\tilde{\omega}\le\frac{1}{n}$, we obtain
\begin{align}
    \mathbb{E}[f(x^{(k)})-f^\star]\le\frac{16L\Delta_xR^2}{T^2}+\frac{12\sqrt{\Delta_x\sigma^2}}{\sqrt{nT}},
\end{align}
which implies the complexity in \eqref{eq-upper-bound-unbiased-gc}.\hfill$\square$

\subsubsection{Proof of the Contractive Case in Theorem \ref{thm:upper-gc-contractive}}\label{app:upper-convex-contractive}

Let $M_k=(k+1)^{3/2}$, $\lambda=0$ and set 
\begin{equation}
R=\left\lceil\max\left\{\frac{4}{\delta}\ln\left(\frac{4}{\delta}\right),\frac{1}{\delta}\ln\left(24K^3+\frac{100n^2K^3(LG^\star)^2}{\sigma^4}+5nK^{3/2}\right)\right\}\right\rceil,\label{eq-thm-contractive-gen-R}
\end{equation}
and
\begin{equation}
\eta=\frac{25\sqrt{2\Delta_x}}{(K+1)^{3/2}\sqrt{20K^{3/2}\tilde{\omega}LG^\star+(1/n+5K^{3/2}\tilde{\omega})\tilde{\sigma}^2}+25L\sqrt{2\Delta_x}},\label{eq-thm-contractive-gen-eta}
\end{equation}
by Lemma \ref{lm-contractive-strong} we have
\begin{align}
    &\left(1-\frac{4\tilde{\omega}\eta_0}{25}[2\eta_0+(k+1)^{3/2}+7]\right)\mathbb{E}[f(x^{(k+1)})-f^\star]+\frac{50L}{(k+2)^2\eta_0}\mathbb{E}[\|z^{(k+1)}-x^\star\|^2]\nonumber\\
    \le&\frac{k}{k+2}[f(x^{(k)})-f^\star]+\frac{50L[1+(k+1)^{-3/2}]}{(k+2)^2\eta_0}\|z^{(k)}-x^\star\|^2\nonumber+\left(\frac{\eta_0^2+\eta_0}{25nL}+\frac{\tilde{\omega}\eta_0[2\eta_0+(k+1)^{3/2}+7]}{25L}\right)\tilde{\sigma}^2\\
    &-\frac{\eta_0}{L}\left(\frac{3-2\eta_0}{50}-\frac{6\tilde{\omega}\eta_0[2\eta_0+(k+1)^{3/2}+7]}{125}\right)\|\nabla f(y^{(k)})\|^2+\frac{4\tilde{\omega}\eta_0[2\eta_0+(k+1)^{3/2}+7]}{25L}LG\star,\label{eq-thm-0110-1}
\end{align}
where $\eta_0:=L\eta$. Define $Q_0:=1$,  $Q_k:=\prod_{i=1}^{k}(1+i^{-3/2})$, $\forall k\ge1$, and $Q_\infty:=\prod_{i=1}^\infty(1+i^{-3/2})<+\infty$. We have the following inequalities hold for $0\le k\le K-1$:
\begin{equation*}
    \begin{cases}
    \eta_0\le1< Q_{k+1}, &\\
     1-\frac{4\tilde{\omega}\eta_0}{25}[2\eta_0+(k+1)^{3/2}+7]\ge1-\frac{8(k+1)^{3/2}}{5}\tilde{\omega}\ge1-\frac{(k+2)^{-3/2}}{1+(k+2)^{-3/2}}=\frac{Q_{k+1}}{Q_{k+2}},&\mathrm{since}\ \tilde{\omega}\le\frac{1}{8K^3},\\
     \frac{3-2\eta_0}{50}-\frac{6\tilde{\omega}\eta_0[2\eta_0+(k+1)^{3/2}+7]}{125}\ge\frac{1}{50}-\frac{12(k+1)^{3/2}}{25}\tilde{\omega}\ge0,&\mathrm{since}\ \tilde{\omega}\le\frac{1}{24K^{3/2}},\\
     \frac{2\tilde{\omega}\eta_0[2\eta_0+(k+1)^{3/2}+7]}{25L}\le\frac{4K^{3/2}\tilde{\omega}\eta_0}{5L},&\\
     \frac{\eta_0^2+\eta_0}{25nL}+\frac{\tilde{\omega}\eta_0[2\eta_0+(k+1)^{3/2}+7]}{25L}\le\frac{2\eta_0}{25nL}+\frac{2K^{3/2}\tilde{\omega}\eta_0}{5L}.&
     \end{cases}
\end{equation*}
Multiplying $(k+2)^2\eta_0/(50LQ_{k+1})$ to both sides of \eqref{eq-thm-0110-1} and noting the above inequalities, we obtain
\begin{align}
&\frac{(k+2)^2\eta_0}{50LQ_{k+2}}\mathbb{E}[f(x^{(k+1)})-f^\star]+\frac{1}{Q_{k+1}}\mathbb{E}[\|z^{(k+1)}-x^\star\|^2]\nonumber\\
\le&\frac{[(k+1)^2-1]\eta_0}{50LQ_{k+1}}\mathbb{E}[f(x^{(k)})-f^\star]+\frac{1}{Q_k}\mathbb{E}[\|z^{(k)}-x^\star\|^2]\nonumber\\
&+\frac{(k+2)^2\eta_0}{50L}\left[\frac{8K^{3/2}\tilde{\omega}\eta_0}{5L}LG^\star+\left(\frac{2\eta_0}{25nL}+\frac{2K^{3/2}\tilde{\omega}\eta_0}{5L}\right)\tilde{\sigma}^2\right], \quad k=0,1,\cdots,K-1.\label{eq-thm-0110-2}
\end{align}
Summing up \eqref{eq-thm-0110-2} for $k=0,1,\cdots,K-1$, we obtain
\begin{align}
\mathbb{E}[f(x^{(K)})-f^\star]\le&\frac{50LQ_{K+1}}{(K+1)^2\eta_0}\|z^0-x^\star\|^2+(K+1)\eta_0Q_{K+1}\cdot\left[\frac{4K^{3/2}\tilde{\omega}}{5L}LG^\star+\left(\frac{1}{25nL}+\frac{K^{3/2}\tilde{\omega}}{5L}\right)\tilde{\sigma}^2\right]\nonumber\\
\le&\frac{50Q_\infty L\Delta_x}{(K+1)^2}+\frac{2\sqrt{2\Delta_x}Q_\infty \sqrt{20K^{3/2}\tilde{\omega}LG^\star+(1/n+5K^{3/2}\tilde{\omega})\tilde{\sigma}^2}}{\sqrt{K+1}}\nonumber\\
\le&\frac{50Q_\infty L\Delta_xR^2}{T^2}+\frac{2\sqrt{2\Delta_x}Q_\infty\sqrt{20K^{3/2}R\tilde{\omega}LG\star+(1/n+5K^{3/2}\tilde{\omega})\sigma^2}}{\sqrt{T}},
\end{align}
by $R\tilde{\omega}\le\sigma^2/(10nK^{3/2}LG^\star)$ (since $\tilde{\omega}\le\sigma^4/(100n^2K^3(LG^\star)^2)$ and Lemma \ref{lm-R-omega} holds) and $\tilde{\omega}\le1/(5nK^{3/2})$, we obtain 
\begin{align}
    \EE[f(x^{(K)})-f^\star]\le\frac{50Q_{\infty}L\Delta_xR^2}{T^2}+\frac{4Q_\infty\sqrt{2\Delta_x\sigma^2}}{\sqrt{nT}},
\end{align}
which implies the complexity in \eqref{eq-upper-bound-contractive-gc}.\hfill$\square$

\subsection{Strongly Convex Case}

\subsubsection{Convergence of Algorithm \ref{alg:NEOLITHIC} in the Strongly Convex Scenario}\label{app:upper-sconvex-single}

\begin{theorem}\label{thm:app-sconvex}
Given $n\ge1$, precision $\epsilon>0$ and $G^\star=f^\star-\frac{1}{n}\sum_{i=1}^nf_i^\star$, in which $f^\star$ and $f_i^\star$ are minimum of global objective $f$ and local objective $f_i$ of worker $i$,
for any $\{f_i\}_{i=1}^n\subseteq\cF_{L,\mu}^{\Delta_x}$ with $\mu>0$ and $\{O_i\}_{i=1}^n\subseteq\cO_{\sigma^2}$ and $\{C_i\}_{i=1}^n\subseteq\cC_\delta$, if we let $\eta=1/L$, $\gamma_k\equiv\gamma=\sqrt{\mu/L}$,
\begin{equation}
R=\left\lceil\max\left\{\frac{4}{\delta}\ln\left(\frac{4}{\delta}\right),
\frac{1}{\delta}\ln\left(\frac{4n^2K^2(LG^\star)^2}{\sigma^4}+nK+\frac{96}{\gamma^2}\right)
\right\}\right\rceil
\end{equation}
and
\begin{equation}
p=\max\left\{5,\frac{\gamma T}{4R\ln(n\mu  g(x^{(0)})T/\sigma^2)}\right\},
\end{equation}
it holds that
\begin{equation}\label{eq-app-sconvex}
    \EE[f(x^{(K)})-f^\star]\le\exp\left(-\frac{\sqrt{\mu/L}T}{20R}\right)g(x^{(0)})+\frac{77\sigma^2}{\mu nT}+\frac{34\sigma^2\ln(n\mu g(x^{(0)})T/\sigma^2)}{\mu nT},
\end{equation}
and thus the iteration complexity to guarantee precision $\epsilon$ is $\tilde{\cO}\left(\frac{g(x^{(0)})}{\delta\epsilon}\sqrt{\frac{L}{\mu}}\ln\left(\frac{g(x^{(0)})}{\epsilon}\right)+\frac{\sigma^2}{\mu n\epsilon}\ln\left(\frac{g(x^{(0)})}{\epsilon}\right)\right)$, where $g(x):=f(x)-f^\star+\frac{25}{81}\mu\|x-x^\star\|^2$ and  $\tilde{\cO}(\cdot)$ hides logarithm factors of $n, \mu, L, \sigma, G^\star, \delta$.

\end{theorem}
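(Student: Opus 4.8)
# Proof Proposal for Theorem \ref{thm:app-sconvex}

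\textbf{Overall strategy.} The plan is to run a single Lyapunov-type recursion for NEOLITHIC (Algorithm \ref{alg:NEOLITHIC}) with the constant choices $\eta=1/L$, $\gamma_k\equiv\gamma=\sqrt{\mu/L}$, $p=\Theta(\max\{5,\gamma T/(R\ln(\cdot))\})$. The natural potential is $\Phi_k := f(x^{(k)})-f^\star + c\,\mu\|z^{(k)}-x^\star\|^2$ for a suitable absolute constant $c$ (the $25/81$ in the definition of $g$ suggests $c=25/81$, coming from the $p=5$-type algebra already used in the generally-convex contractive proof). First I would invoke Lemma \ref{lm-contractive-strong} with $\mu>0$, the constant stepsize and momentum parameters, and an appropriately chosen auxiliary sequence $M_k\equiv M$ (constant, since $\gamma_k$ is constant here), together with $\lambda$ chosen so that $\gamma=\sqrt{\mu/L}$ matches the Nesterov strongly-convex tuning. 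This should yield, after absorbing the $\|\nabla f(y^{(k)})\|^2$ term (nonnegative coefficient once $\tilde\omega L\eta\le p/4$ and the stepsize conditions hold) and the $f(x^{(k+1)})-f^\star$ term on the left (coefficient $\ge 1/2$ once $\tilde\omega N_k$ is small enough), a clean one-step contraction of the form
\begin{equation}
\EE[\Phi_{k+1}] \le \left(1-\tfrac{1}{c_0}\sqrt{\mu/L}\,/\,p\right)\EE[\Phi_k] + \frac{c_1}{p^2}\cdot\frac{L}{\mu}\left(\tilde\omega L G^\star + (\tilde\omega+\tfrac1n)\tilde\sigma^2\right)
\end{equation}
for absolute constants $c_0,c_1$ (here $c_0\approx 20$ to match the exponent in \eqref{eq-app-sconvex}).

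\textbf{Controlling the compression error term.} The second key step is to show the choice of $R$ in the statement forces $\tilde\omega=(1-\delta)^R$ to be small enough that (i) all the ``$\tilde\omega$ is tiny'' side conditions needed in Lemma \ref{lm-contractive-strong} and in the reduction above hold, and (ii) the $\tilde\omega L G^\star$ and $\tilde\omega\tilde\sigma^2$ contributions are dominated by the $\tilde\sigma^2/n = \sigma^2/(nR)$ term. Concretely, $R\ge \frac1\delta\ln(\cdot)$ gives $\tilde\omega \le (\text{the reciprocal of the log argument})$, and I would check term-by-term that $\tilde\omega L G^\star \lesssim \sigma^2/(n R K^{?})$ etc.; this is the role of the $4n^2K^2(LG^\star)^2/\sigma^4$, $nK$, and $96/\gamma^2$ summands inside the logarithm, mirroring exactly the bookkeeping in Appendix \ref{app:upper-convex-contractive}. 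Lemma \ref{lm-R-omega} (with $\delta=(1+\omega)^{-1}$ reinterpreted, or its contractive analogue $R(1-\delta)^{R/2}\le1$) will be used to trade an extra factor of $R$ against $\tilde\omega$. After this, the residual noise term in the recursion is effectively $\frac{c_1}{p^2}\cdot\frac{L}{\mu}\cdot\frac{2\sigma^2}{nR}$.

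\textbf{Unrolling and optimizing $p$.} Third, I unroll the contraction over $k=0,\dots,K-1$:
\begin{equation}
\EE[\Phi_K] \le \left(1-\tfrac{1}{c_0}\tfrac{\sqrt{\mu/L}}{p}\right)^K \Phi_0 + \frac{c_1 L\sigma^2}{\mu n R p^2}\cdot\frac{c_0 p}{\sqrt{\mu/L}} = \left(1-\tfrac{1}{c_0}\tfrac{\sqrt{\mu/L}}{p}\right)^K \Phi_0 + \frac{c_2\sqrt{L\mu}\,\sigma^2}{\mu n R p}\cdot\sqrt{L/\mu}.
\end{equation}
Wait — more carefully, the geometric sum of the noise gives $\frac{c_1 L\sigma^2}{\mu n R p^2}\big/\big(\tfrac1{c_0}\tfrac{\sqrt{\mu/L}}{p}\big) = \frac{c_0 c_1 L\sigma^2}{\mu n R p}\sqrt{L/\mu}$; substituting $K=T/R$ and $p = \max\{5, \gamma T/(4R\ln(n\mu g(x^{(0)})T/\sigma^2))\}$ with $\gamma=\sqrt{\mu/L}$, the exponent becomes $\approx \frac{\sqrt{\mu/L}\,K}{c_0 p} = \frac{\sqrt{\mu/L}\,T}{c_0 R p}$; when the max is attained by the second term this is $\asymp \ln(n\mu g(x^{(0)})T/\sigma^2)$, killing the transient down to $\sigma^2/(n\mu T)$ scale, and the noise term becomes $\asymp \frac{\sqrt{L/\mu}\,L\sigma^2}{\mu n R}\cdot\frac{R\ln(\cdot)}{\sqrt{\mu/L}\,T} = \frac{L^2\sigma^2\ln(\cdot)}{\mu^2 n T}$ — I will need to recheck the exact powers of $L/\mu$ against the claimed bound \eqref{eq-app-sconvex}, which has a clean $\sigma^2/(\mu n T)$ with no $L/\mu$ factor, so the $p$-definition and the $c$ in $\Phi$ must be calibrated precisely (this is where $\lambda$, $M$, and the constant $25/81$ get pinned down). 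When instead $p=5$, the exponent is $\asymp\sqrt{\mu/L}\,T/R$ matching the first term of \eqref{eq-app-sconvex} with constant $1/20$, and the noise term is $O(\sigma^2/(\mu n T))$ directly. Finally, converting $\EE[f(x^{(K)})-f^\star]\le\EE[\Phi_K]$ to an iteration-complexity statement is routine: set each of the three terms in \eqref{eq-app-sconvex} to $O(\epsilon)$, solve for $T$, and add.

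\textbf{Main obstacle.} The hard part will be the third step's constant-chasing: extracting from Lemma \ref{lm-contractive-strong} a recursion whose contraction factor is \emph{exactly} of the form $1-\Theta(1)\sqrt{\mu/L}/p$ with a clean potential, while simultaneously ensuring the noise coefficient scales as $\sigma^2/(\mu n T)$ \emph{without} a spurious $\kappa=L/\mu$ factor. This requires the Nesterov-type cancellations (choosing $\lambda$, $M_k$, verifying $A_k(1+M_k)/(B_kM_k)\le\mu$, etc.) to go through with the specific constant stepsize $\eta=1/L$ and $\gamma=\sqrt{\mu/L}$; the interplay between the ``$p=5$'' regime (used to get $g(x^{(0)})$-decay) and the ``large $p$'' regime (used to get the $\sigma^2$-rate and the $\ln$ factor) via the $\max$ is the delicate bookkeeping that the whole theorem hinges on. Everything else — Lemma \ref{lm-contractive-important-terms} for the compression/noise bounds, Lemma \ref{lm-R-omega} for the $R$ vs. $\tilde\omega$ trade, and the final geometric-sum unrolling — is mechanical given the earlier lemmas.
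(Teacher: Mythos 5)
Your plan follows essentially the same route as the paper's proof: apply Lemma \ref{lm-contractive-strong} with $\lambda=1$ and a constant auxiliary sequence $M_k=\frac{8p(p-\gamma)}{\gamma(\gamma+2p)}$, use the choice of $R$ together with Lemma \ref{lm-R-omega} to verify all the smallness conditions on $\tilde\omega$, unroll the resulting $\bigl(1-\gamma/(4p)\bigr)$ contraction of the potential $f(x^{(k)})-f^\star+\Theta(\mu)\|w^{(k)}-x^\star\|^2$, and let the two branches of the $\max$ defining $p$ produce the exponential term and the $\sigma^2\ln(\cdot)/(\mu nT)$ term respectively. The spurious $\kappa=L/\mu$ factor you flag as the main obstacle does not in fact arise: in the recursion delivered by Lemma \ref{lm-contractive-strong} the per-step variance enters with coefficient $O\bigl(\tilde\sigma^2/(np^2L)\bigr)$ rather than $O\bigl(L\tilde\sigma^2/(\mu np^2)\bigr)$, so the geometric sum contributes $O\bigl(\tilde\sigma^2/(np\gamma L)\bigr)$, and since $\gamma^2L=\mu$ and $p\gamma\gtrsim \mu T/(LR\ln(\cdot))$ this is exactly $O\bigl(\sigma^2\ln(\cdot)/(\mu nT)\bigr)$ with no condition-number inflation.
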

\begin{proof}
Let $\lambda=1$, by Lemma \ref{lm-contractive-strong} we have
\begin{align}
    &(1-4\tilde{\omega}LN_k)\mathbb{E}[f(x^{(k+1)})-f^\star]+\frac{\gamma}{2p-\gamma}\cdot\frac{p\gamma L}{2(2-\gamma)}\mathbb{E}[\|w^{(k+1)}-x^\star\|^2]\nonumber\\
    \le&\left(1-\frac{\gamma}{2p-\gamma}\right)[f(x^{(k)})-f^\star]+\left(1-\frac{\gamma}{2p-\gamma}\right)\left(1+\frac{1}{M_k}\right)\cdot\frac{\gamma}{2p-\gamma}\cdot\frac{p\gamma L}{2(2-\gamma)}\|w^{(k)}-x^\star\|^2\nonumber\\
    &-\frac{\gamma}{2(2p-\gamma)}\cdot\left(\mu-\frac{\gamma}{2p-\gamma}\cdot\frac{p\gamma L}{2-\gamma}\cdot\left(1+\frac{1}{M_k}\right)\right)\|y^{(k)}-x^\star\|^2+4\tilde{\omega}N_kLG^\star\nonumber\\
    &-\left(\frac{1}{2pL}-\frac{1}{p^2L}-\frac{2-\gamma}{p\gamma L}\cdot\frac{\gamma}{2p-\gamma}-\frac{6\tilde{\omega}N_k}{p}\right)\|\nabla f(y^{(k)})\|^2+\left(\frac{1}{np^2L}+\frac{2-\gamma}{p\gamma Ln}\cdot\frac{\gamma}{2p-\gamma}+\tilde{\omega}N_k\right)\tilde{\sigma}^2.\label{eq-thm-0111-1}
\end{align}
If we further let
\begin{align*}
    M_k=\frac{8p(p-\gamma)}{\gamma(\gamma+2p)},
\end{align*}
we obtain
\begin{equation*}
    \begin{cases}
    N_k=\frac{M_k+2}{L}\cdot\frac{2-\gamma}{p(2p-\gamma)}+\frac{2}{p^2L}+\frac{1}{pL}\le\frac{6}{p\gamma L},&\mathrm{since}\ \gamma\le1<5\le p,\\
    1-4\tilde{\omega}N_k\ge1-\frac{24\tilde{\omega}}{p\gamma}\ge1-\frac{\gamma}{4p-\gamma},&\mathrm{since}\ \tilde{\omega}\le\frac{\gamma^2}{96},\\
    1-\frac{\gamma}{2p-\gamma}\le\left(1-\frac{\gamma}{4p}\right)\left(1-\frac{\gamma}{4p-\gamma}\right),&\\
    \left(1-\frac{\gamma}{2p-\gamma}\right)\left(1+\frac{1}{M_k}\right)=1-\frac{\gamma}{4p},&\\
    \mu-\frac{\gamma}{2p-\gamma}\cdot\frac{p\gamma L}{2-\gamma}\cdot\left(1+\frac{1}{M_k}\right)=\mu\left(1-\frac{4p-\gamma}{8(2-\gamma)(p-\gamma)}\right)\ge0,&\mathrm{since}\ \gamma\le1<5\le p,\\
    \frac{1}{2pL}-\frac{1}{p^2L}-\frac{2-\gamma}{p\gamma L}\cdot\frac{\gamma}{2p-\gamma}-\frac{6\tilde{\omega}N_k}{p}\ge\frac{1}{L}\left(\frac{1}{2p}-\frac{1}{p^2}-\frac{1}{p^2}-\frac{3}{8p^2}\right)\ge0,&\mathrm{since}\ \tilde{\omega}\le\frac{\gamma^2}{96},\\
    \frac{1}{p^2L}+\frac{2-\gamma}{p\gamma L}\cdot\frac{\gamma}{2p-\gamma}\le\frac{2}{p^2L},&\mathrm{since}\ \gamma\le1<5\le p.
    \end{cases}
\end{equation*}
Combining the above inequalities with \eqref{eq-thm-0111-1}, we obtain
\begin{align*}
    &\left(1-\frac{\gamma}{4p-\gamma}\right)\mathbb{E}[f(x^{(k+1)})-f^\star]+\frac{p\gamma^2L}{2(2-\gamma)(2p-\gamma)}\mathbb{E}[\|w^{(k+1)}-x^\star\|^2]\\
    \le&\left(1-\frac{\gamma}{4p}\right)\left(1-\frac{\gamma}{4p-\gamma}\right)[f(x^{(k)})-f^\star]+\left(1-\frac{\gamma}{4p}\right)\cdot\frac{p\gamma^2L}{2(2-\gamma)(2p-\gamma)}\|w^{(k)}-x^\star\|^2+\frac{24\tilde{\omega}}{p\gamma }G^\star\\
    &+\left(\frac{2}{np^2L}+\frac{6\tilde{\omega}}{p\gamma L}\right)\tilde{\sigma}^2,
\end{align*}
which further implies that
\begin{align}
    \left(1-\frac{\gamma}{4p-\gamma}\right)\mathbb{E}[f(x^{(k)})-f^\star]\le&\left(1-\frac{\sqrt{\mu/L}}{4p}\right)^k\left(1-\frac{\gamma}{4p-\gamma}\right)g(x^{(0)})+\frac{96\tilde{\omega}G^\star}{\gamma^2}+\left(\frac{8}{np\gamma L}+\frac{24\tilde{\omega}}{\gamma^2L}\right)\tilde{\sigma}^2.\label{eq:plmokn01}
\end{align}
By the selection of $p$, we have
\begin{align}
    \left(1-\frac{\sqrt{\mu/L}}{4p}\right)^kg(x^{(0)})\le&\left(1-\frac{\sqrt{\mu/L}}{20}\right)^{k}g(x^{(0)})+\left(1-\frac{\gamma}{4}\cdot\frac{4\ln(n\mu g(x^{(0)}) T/\sigma^2)}{\gamma K}\right)^{k}g(x^{(0)})\\
    \le&\left(1-\frac{\sqrt{\mu/L}}{20}\right)^{k}g(x^{(0)})+\frac{\sigma^2}{n\mu T}\label{eq:plmokn02}
\end{align}
and
\begin{align}
    \frac{8\tilde{\sigma}^2}{np\gamma L}\le&\frac{32\sigma^2\ln(n\mu g(x^{(0)})T/\sigma^2)}{n\mu T}.\label{eq:plmokn03}
\end{align}
Note that $R\tilde{\omega}\le\sigma^2/(2nKLG^\star)$ (since $\tilde{\omega}\le\sigma^4/(4n^2K^2(LG^\star)^2)$ and Lemma \ref{lm-R-omega} holds) and $\tilde{\omega}\le1/nK$, we have
\begin{align}
    \frac{96\tilde{\omega}LG^\star}{\gamma^2L}+\frac{24\tilde{\omega}}{\gamma^2L}\tilde{\sigma}^2\le76\left(1-\frac{\gamma}{4p-\gamma}\right)\frac{\sigma^2}{n\mu T}.\label{eq:plmokn04}
\end{align}
Combining \eqref{eq:plmokn01}\eqref{eq:plmokn02}\eqref{eq:plmokn03}\eqref{eq:plmokn04}, and let $k=K$ we obtain
\begin{align*}
    \mathbb{E}[f(x^{(K)})-f^\star]\le&\left(1-\frac{\sqrt{\mu/L}}{20}\right)^{k}g(x^{(0)})+\frac{77\sigma^2}{n\mu T}+\frac{34\sigma^2\ln(n\mu  g(x^{(0)})T/\sigma^2)}{n\mu T}\\
    \le&\exp\left(-\frac{\sqrt{\mu/L}T}{20R}\right)g(x^{(0)})+\frac{77\sigma^2}{n\mu T}+\frac{34\sigma^2\ln(n\mu g(x^{(0)})T/\sigma^2)}{n\mu T},
\end{align*}
which is exactly \eqref{eq-app-sconvex}.
\end{proof}

\begin{remark} Although the complexity stated in Theorem \ref{thm:app-sconvex} is sufficient for the analysis in the multi-stage scenario, it is rather coarse and has a huge gap over the lower bound. In fact, we can apply $g(x^{(0)})\le L\Delta_x$ to the convergence result in \eqref{eq-app-sconvex} to achieve a complexity as $\tilde{\cO}\left(\frac{1}{\delta}\sqrt{\frac{L}{\mu}}\ln\left(\frac{1}{\epsilon}\right)\ln\left(\ln\left(\frac{1}{\epsilon}\right)\right)+\frac{\sigma^2}{\mu n\epsilon}\ln\left(\frac{1}{\epsilon}\right)\right)$, which has a gap of $\ln(1/\epsilon)$ over the lower bound.
\end{remark}

\subsubsection{Proof of Theorem \ref{thm:upper-sc-contractive}}\label{app:upper-sconvex}

Set the parameters $K^{[s]}$, $R^{[s]}$, $\eta^{[s]}$, $\{\gamma_k\}_{k=0}^{K^{[s]}-1}$ according to Theorem \ref{thm:app-sconvex} in order to guarantee precision $\EE[f(x^{[s+1]})-f^\star]\le2^{-(s+1)}g(x^{[0]})$. By strongly convexity, this stop criterion implies $g(x^{[s+1]})\le\frac{131}{81}\cdot2^{-(s+1)}g(x^{[0]})$. Note that a total of $T^{(s)}=\tilde{\cO}\left(\frac{1}{\delta}\sqrt{\frac{L}{\mu}}+\frac{2^s\sigma^2}{\mu ng(x^{[0]})}\right)$ steps is sufficient to stop stage $s$, the stop criterion of the last stage guarantees $\EE[f(x^{[S]})-f^\star]\le\epsilon$, and $g(x^{[0]})\le L\Delta_x$, the total complexity is thus as described in \eqref{eq-upper-bound-contractive-sc}.\\
For unbiased compressors, we can use the following strategy to regard them as contractive ones. For any input vector $v$ to compress, we communicate through the same encoding results as the original compressors, while decoding these messages as $(1+\omega)^{-1}C(v)$ (instead of $C(v)$). By Lemma \ref{lem:un-con}, the effect of compressor $C$ under this scenario is equivalent to a $(1+\omega)^{-1}$-contractive compressor. Consequently, by directly replacing $\delta$ by $(1+\omega)^{-1}$ in \eqref{eq-upper-bound-contractive-sc}, we immediately obtain the complexity result as \eqref{eq-upper-bound-unbiased-sc}.  
\hfill$\square$

\end{document}